\theoremstyle{plain}
\newtheorem{theorem}{Theorem}
\newtheorem{lemma}{Lemma}
\newtheorem{proposition}{Proposition}
\newtheorem{corollary}{Corollary}
\theoremstyle{definition}
\newtheorem{definition}{Definition}
\newtheorem{question}{Question}
\newtheorem{remark}{Remark}
\newtheorem{assumption}{Assumption}
\DeclareMathOperator*{\argmin}{arg\,min}
\DeclareMathOperator*{\argmax}{arg\,max}
\DeclareMathOperator*{\id}{Id}
\newcommand{\BCM}{\normalfont{\text{$W_{2}$BCM}}}
\newcommand{\expe}{\exp_\varepsilon}
\newcommand{\LBCM}{\normalfont{\text{LBCM}}}
\newcommand{\norm}[1]{\left\lVert#1\right\rVert}
\def\thanks#1{\protected@xdef\@thanks{\@thanks
        \protect\footnotetext{#1}}}
\title{Linearized Wasserstein Barycenters: Synthesis, Analysis, Representational Capacity, and Applications}
\author{Matthew Werenski$^{(\dagger) }$, Brendan Mallery$^{(\ddagger)}$, Shuchin Aeron$^{(\square, *)}$, James M. Murphy$^{(\ddagger,*)}$ 
  \thanks{\hspace{-16pt}\text{($\dagger$): Department of Computer Science, Tufts University}\\ \text{$(\ddagger)$: Department of Mathematics, Tufts University}\\ \text{$(\square)$: Department of Electrical and Computer Engineering, Tufts University.}\\
  \text{($*$): Co-last authors}\\
  Correspondence to: Matthew.Werenski@tufts.edu} 
  } 
\begin{document}
\maketitle
\begin{abstract}

We propose the linear barycentric coding model (LBCM) which utilizes the linear optimal transport (LOT) metric for analysis and synthesis of probability measures.  We provide a closed-form solution to the variational problem characterizing the probability measures in the LBCM and establish equivalence of the LBCM to the set of 2-Wasserstein barycenters in the special case of compatible measures. Computational methods for synthesizing and analyzing measures in the LBCM are developed with finite sample guarantees. One of our main theoretical contributions is to identify an LBCM, expressed in terms of a simple family, which is sufficient to express all probability measures on the closed unit interval.  We show that a natural analogous construction of an LBCM in 2 dimensions fails, and we leave it as an open problem to identify the proper extension in more than 1 dimension.  We conclude by demonstrating the utility of LBCM for covariance estimation and data imputation.
 
\end{abstract}

\section{Introduction}

Interpreting data as probability measures and embedding them into spaces equipped with an appropriate metric (or more generally, discrepancy) to allow for meaningful comparisons has emerged as an important paradigm in computer vision \cite{bonneel2023survey}, natural language processing \cite{xu2018distilled}, high-energy physics \cite{komiske2019metric, cai2022metric}, time-series analysis \cite{cheng2021dynamical, cheng2023non}, and seismic imaging \cite{metivier2016optimal} among other areas.  In the space of probability measures, one can utilize the discrepancy used for comparisons to develop low-parameter and parsimonious models for data. In this paper, we consider a class of models that we term \textit{barycentric coding models (BCM)} \cite{werenski2022measure}.  These models allow for \textit{synthesis} and \textit{analysis} of measures through variational problems with respect to a given set of \textit{reference measures}, $\{\mu_i\}_{i=1}^{m}$, and non-negative coefficients collected in the vector $\lambda$ belonging to the $(m-1)$-dimensional simplex $\Delta^m$.  Synthesis takes the form:
\begin{equation}\label{eqn:General_Synth}
\mu_{\lambda}\triangleq\argmin_{\nu\in\mathcal{P}(\mathbb{R}^{d})}\sum_{i=1}^{m}\lambda_{i}D_{s}(\nu,\mu_{i})+\mathcal{R}(\nu), 
 \end{equation}where $D_{s}$ is a discrepancy and $\mathcal{R}$ is a regularizer enforcing structural properties on $\mu_{\lambda}$.  We call the collection of measures generated as in (\ref{eqn:General_Synth}) a BCM.  One can also project a measure $\eta$ onto the collection of measures of the form (\ref{eqn:General_Synth}) via solving the \textit{analysis problem} 
\begin{equation}\label{eqn:Analysis}
\lambda_{\eta}=\argmin_{\lambda \in \Delta^m} D_{a}(\eta, \mu_{\lambda}), 
 \end{equation}
 where $D_{a}$ is a discrepancy, not necessarily the same as $D_{s}$.  We consider $D_{s}, D_{a}$ that are based on optimal transport \cite{santambrogio2015optimal, peyre2020computational} to impute the geometry of the space of probability measures onto our low-dimensional models.

Specifically, we consider the case when $D_{s}=D_{a}$ are given by a \textit{linear optimal transport (LOT) metric} \cite{wang2013linear}.  Given an absolutely continuous \textit{base measure} $\mu_0$, the (squared) LOT metric between measures $\mu,\nu$ is
\begin{align*}
\|T_{\mu_{0}}^{\mu}-T_{\mu_{0}}^{\nu} \|_{L^{2}(\mu_{0})}^{2}=\int\|T_{\mu_{0}}^{\mu}(x)-T_{\mu_{0}}^{\nu}(x)\|_2^{2}d\mu_{0}(x),
\end{align*}
where for any measure $\eta$, $T_{\mu_0}^{\eta}$ is the optimal transport map from $\mu_0$ to $\eta$ with respect to the squared-Euclidean ground cost (see Section \ref{sec:Background}). We call the set of all measures that can be represented this way the \textit{linear barycentric coding model (LBCM):}

\begin{definition}
   Let $\{\mu_i\}_{i=1}^{m}$ be reference measures and $\mu_{0}$ a base measure.  The associated \textit{linear barycentric coding model (LBCM)} is
\begin{align}
    \LBCM(\{\mu_{i}\}_{i=1}^{m}, \mu_0)
    &\triangleq\{\mu_{\lambda} \ | \ \lambda\in\Delta^{m}\},\notag\\
    \mu_{\lambda} &\triangleq \argmin_{\nu\in\mathcal{P}(\mathbb{R}^{d})}\sum_{i=1}^{m} \lambda_{i} \|T_{\mu_{0}}^{\nu}-T_{\mu_{0}}^{\mu_i} \|_{L^{2}(\mu_{0})}^{2} \label{eqn:LBCM_VariationalProblem}.
    \end{align}
\end{definition}

\subsection{Summary of Contributions}

This paper focuses on modeling data with the LBCM.
\begin{enumerate}
    \item Proposition \ref{prop:LBCM_VariationalForm} establishes that solutions to (\ref{eqn:LBCM_VariationalProblem}) are of the form:
\begin{equation*}
   \mu_{\lambda} = \left(\sum_{i=1}^{m} \lambda_{i}T_{\mu_{0}}^{\mu_{i}} \right )\#\mu_0.
\end{equation*}Proposition \ref{thm:LBCM_Equiv_BCM} proves that if the measures $\{\mu_{0}\}_{i=1}^{m}$ satisfy a particular compatibility condition, the LBCM is equivalent to the BCM with $D_{s}=D_{a}= W_2^{2}$ the squared 2-Wasserstein metric \cite{werenski2022measure}.

\item We show that the analysis problem, 
\begin{equation*}
    \lambda_{\eta}=\argmin_{\lambda \in \Delta^m} \left\|\sum_{i=1}^m \lambda_i T_{\mu_{0}}^{\mu_{i}} - T_{\mu_{0}}^{\eta}\right\|_{L^2(\mu_0)}^2,
\end{equation*}
can be efficiently solved via a quadratic program when $\eta\in\LBCM(\{\mu_{i}\}_{i=1}^{m},\mu_{0})$. Leveraging recent results in entropy-regularized optimal transport, we establish rates of statistical estimation when the measures are available via samples. 

\item  In Section \ref{sec:RepresentationalCapacity}, we propose a novel problem of representational capacity in the space of probability measures.  In Theorem \ref{thm : lbcm 1D capacity} we show that the set of reference measures supported on $\{a,b\}$ with $\mu_{0}$ as the uniform measure on $[a,b]$ has dense LBCM in $\mathcal{P}([a,b])$ with respect to weak convergence. We then establish in Theorem \ref{thm : questions negative} that a natural generalization of our 1-dimensional result fails in dimension 2, due to the increased complexity of the space of transport maps.

\item We provide experiments to show the utility of the LBCM model for covariance estimation and image reconstruction.  We give both qualitative and quantitative comparisons with a $W_{2}$ variant of the BCM and linear mixtures, and show that reconstruction with the LBCM gives comparable results while being significantly faster than other methods. We also investigate how changing the base measure affects the quality of reconstruction.
\end{enumerate}

All proofs are deferred to the Appendix.

\section{Background}
\label{sec:Background}
    
For a vector $x\in\mathbb{R}^{d}$ (or random variable or function), we let $x_{i}$ denote its $i^{th}$ coordinate.  For $t\in\mathbb{R}$, $\expe(t)\triangleq\exp(t/\varepsilon)$.  The set of probability measures in $\mathbb{R}^d$ will be denoted by $\mathcal{P}(\mathbb{R}^d)$. The set of measures which are absolutely continuous with respect to the Lebesgue measure on $\mathbb{R}^d$ will be denoted $\mathcal{P}_{ac}(\mathbb{R}^d)$. For $\Omega \subseteq \mathbb{R}^d$ we let $\mathcal{P}(\Omega),\mathcal{P}_{ac}(\Omega)$ respectively be the set of measures $\mu$ in $\mathcal{P}(\mathbb{R}^d), \mathcal{P}_{ac}(\mathbb{R}^d)$ such that $\mu[\Omega] = 1.$  Elements of $\mathcal{P}(\Omega)$ and $\mathcal{P}_{ac}(\Omega)$ with finite second moment are denoted $\mathcal{P}_{2}(\mathbb{R}^d)$ and $\mathcal{P}_{2,ac}(\mathbb{R}^d)$, respectively.  For a convex body $C$ in $\mathbb{R}^d$ (i.e., $C$ is convex, compact, and has a non-empty interior) we will denote the uniform measure over $C$ by $U_C$.
Note that we always have $U_C \in \mathcal{P}_{ac}(C)$. 
For any collection of elements $\{v_{i}\}_{i\in I}$ of a vector space with real scalars, we let $\normalfont{\text{conv}}(\{v_{i}\}_{i\in I})$ denote the \textit{convex hull} consisting of all convex combination (i.e., linear mixtures) of elements of $\{v_{i}\}_{i\in I}$.  We use $\lesssim$ to specify inequalities which hold up to multiplicative constants.

\begin{definition} 
    Let $\mu\in\mathcal{P}_{2,ac}(\mathbb{R}^{d})$ and $\nu\in\mathcal{P}_{2}(\mathbb{R}^{d})$.  The \textit{2-Wasserstein metric} between $\mu$ and $\nu$ is 
    \begin{equation}
    \label{eqn:W2}W_{2}^2(\mu,\nu)\triangleq\min_{T\#\mu=\nu} \int_{\mathbb{R}^{d}}\|T(x)-x\|_{2}^{2}d\mu(x).
    \end{equation}We call the $T_{\mu}^{\nu}$ realizing (\ref{eqn:W2}) the 2-Wasserstein \textit{optimal transport map} from $\mu$ to $\nu$. 
    \end{definition}

The use of a minimum rather than an infimum in (\ref{eqn:W2}) is justified in our setting by the following seminal result \cite{brenier1991polar, gangbo1996geometry, mccann1997convexity}.  

\begin{theorem}[Brenier]\label{thm : knott-smith and brenier}Let $\mu \in \mathcal{P}_{2,ac}(\mathbb{R}^{d})$ and $\nu \in \mathcal{P}_{2}(\mathbb{R}^{d})$. Then there exists a unique (up to $\mu$-a.e. equivalence) map $T^{*}$ such that $T^{*} \# \mu = \nu$, and
\begin{equation*}
    \min_{T \# \mu = \nu} \int_{\mathbb{R}^{d}} ||T(x) - x||_2^2 d\mu(x)=\int_{\mathbb{R}^{d}} ||T^{*}(x) - x||_2^2 d\mu(x).
\end{equation*}Moreover, there exists a convex function $\varphi$ such that $\nabla\varphi=T^{*}$.
\end{theorem}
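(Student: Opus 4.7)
The plan is to pass through the Kantorovich relaxation. I would enlarge the feasible set from deterministic maps $T$ with $T\#\mu=\nu$ to probability measures $\pi$ on $\mathbb{R}^d\times\mathbb{R}^d$ with marginals $\mu$ and $\nu$. The second-moment hypothesis yields tightness, and this coupling set is weakly compact and convex; since $(x,y)\mapsto\|x-y\|_2^2$ is continuous and bounded below, the direct method produces a minimizer $\pi^\ast$. Expanding $\|x-y\|_2^2$ and using that the marginals of every admissible $\pi$ are fixed, minimizing $\int\|x-y\|_2^2\,d\pi$ is equivalent to maximizing $\int\langle x,y\rangle\,d\pi$, which is the more convenient formulation for what follows.

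Next I would show that $\mathrm{supp}(\pi^\ast)$ is cyclically monotone. Given any finite family $(x_i,y_i)_{i=1}^N$ in the support and any permutation $\sigma$, one constructs a local perturbation of $\pi^\ast$ that exchanges mass along the pairings prescribed by $\sigma$ on arbitrarily small neighborhoods of the $(x_i,y_i)$; optimality forces $\sum_i \langle x_i, y_i\rangle \ge \sum_i \langle x_i, y_{\sigma(i)}\rangle$, which is cyclical monotonicity. By Rockafellar's theorem, any cyclically monotone subset of $\mathbb{R}^d\times\mathbb{R}^d$ is contained in the graph of the subdifferential $\partial\varphi$ of a proper lower semicontinuous convex function $\varphi$. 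Hence $\pi^\ast$ concentrates on $\{(x,y):y\in\partial\varphi(x)\}$.

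Now I would invoke the absolute continuity of $\mu$. Convex functions on $\mathbb{R}^d$ are differentiable outside a set of Hausdorff dimension at most $d-1$, which is therefore Lebesgue-null and hence $\mu$-null; on its complement $\partial\varphi(x)=\{\nabla\varphi(x)\}$ is single-valued. Consequently $\pi^\ast=(\mathrm{Id},\nabla\varphi)\#\mu$, so $T^\ast\triangleq\nabla\varphi$ is a legitimate map pushing $\mu$ to $\nu$ that attains the infimum in \eqref{eqn:W2} (upgrading inf to min). For uniqueness, any other optimizer $\widetilde T$ gives an optimal coupling whose support is again cyclically monotone, hence lies in the subdifferential graph of some convex $\widetilde\varphi$; since both $\nabla\varphi$ and $\nabla\widetilde\varphi$ push $\mu$ onto the same target $\nu$ and are both monotone maps, a standard convex-analysis argument (or strict convexity of the quadratic cost combined with absolute continuity of $\mu$) forces them to agree $\mu$-a.e. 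The main obstacle is the Rockafellar step: extracting a \emph{global} convex potential from the merely cyclically monotone support, and then leveraging absolute continuity of $\mu$ to convert concentration on $\mathrm{graph}(\partial\varphi)$ into concentration on $\mathrm{graph}(\nabla\varphi)$, which is precisely where the hypothesis $\mu\in\mathcal{P}_{2,ac}(\mathbb{R}^d)$ cannot be dispensed with.
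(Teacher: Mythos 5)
The paper does not actually prove this statement: it is imported as a classical result, with citations to Brenier, Gangbo--McCann, and McCann, so there is no in-paper argument to compare against. Your sketch reconstructs the standard proof from exactly those references --- Kantorovich relaxation, existence by the direct method, reduction to maximizing $\int \langle x,y\rangle\,d\pi$, cyclical monotonicity of the support of an optimal plan, Rockafellar's theorem to obtain a global convex potential $\varphi$, and $\mu$-a.e.\ single-valuedness of $\partial\varphi$ via absolute continuity --- and this route is correct. Two small calibrations. First, tightness of $\Pi(\mu,\nu)$ does not come from the second-moment hypothesis; it follows from tightness of the two fixed marginals alone. The moments are instead what make the cost finite and justify discarding the marginal terms $\int\|x\|^2\,d\mu$ and $\int\|y\|^2\,d\nu$ in the reduction to the inner-product problem. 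Second, the uniqueness step is the one place your sketch leans on something that is not an off-the-shelf fact: ``two monotone maps (gradients of convex functions) pushing $\mu$ to the same $\nu$ agree $\mu$-a.e.'' is precisely McCann's uniqueness theorem, and ``strict convexity of the quadratic cost'' does not apply directly since $\pi \mapsto \int \|x-y\|^2\,d\pi$ is \emph{linear} in $\pi$. The standard repair is the averaging argument: if $\pi_0$ and $\pi_1$ are both optimal, so is $\pi_{1/2} = \tfrac12(\pi_0+\pi_1)$; its support is then cyclically monotone, hence contained in $\partial\psi$ for a single convex $\psi$, and absolute continuity of $\mu$ forces $\pi_{1/2}$ to be concentrated on the graph of $\nabla\psi$ --- which is impossible for a nontrivial average of two distinct graph-type plans, so $\pi_0 = \pi_1 = (\id,\nabla\psi)\#\mu$. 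With that substitution (and the routine remark that the boundary of $\mathrm{dom}\,\varphi$ is Lebesgue-null, so $\mu$-null, before invoking a.e.\ differentiability), your outline is a complete and faithful version of the proof the paper's citations point to.
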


Note, one can extend $W_{2}(\mu,\nu)$ to $\mu$ that are not absolutely continuous by optimizing over couplings rather than maps \cite{santambrogio2015optimal}.

\subsection{Related Work}
\label{subsec:RelatedWork}

Many different barycentric models of the form (\ref{eqn:General_Synth}) have been considered for synthesis; see Table \ref{tab:RelatedWork}.  On the other hand, the analysis problem has received relatively little attention \cite{bonneel2016wasserstein, werenski2022measure, blickhan2024registration, gunsilius2024tangential}.  The most closely related work to ours is \cite{werenski2022measure} where one works with the BCM model with $D_{s}=D_{a}=W_{2}^{2}$, which we refer to as the $\BCM$.
\begin{definition}
   Let $\{\mu_i\}_{i=1}^{m}$ be reference measures.  The associated \textit{$W_{2}$-barycentric coding model ($W_{2}$BCM)} is
\begin{align}\label{eqn:BCM}
    \BCM(\{\mu_{i}\}_{i=1}^{m})
    &\triangleq\{\mu_{\lambda} \ | \ \lambda\in\Delta^{m}\},\\
    \mu_{\lambda}&\triangleq\argmin_{\nu\in\mathcal{P}(\mathbb{R}^{d})}\sum_{i=1}^{m} \lambda_{i}W_{2}^{2}(\nu,\mu_{i}).\notag
    \end{align}
\end{definition}
Under some regularity assumptions on the reference measures and $\mu$, determining the coordinates $\lambda\in\Delta^{m}$ associated with $\mu\in \BCM(\{\mu_{i}\}_{i=1}^{m})$ can be done via a quadratic program whose parameters depend on $\mu$ and $\{\mu_{i}\}_{i=1}^{m}$ (see Theorem 1 in \cite{werenski2022measure}).  When measures are observed via samples, plug-in estimates of OT maps can be used for the QP formulation with guarantees on error rates  (see Corollary 2 in \cite{werenski2022measure}).  We prove similar results for the LBCM in Section \ref{sec:SynthesisAnalysis}.

In the context of high computational complexity and curse of dimensionality in statistical error rates for estimating quantities related to $W_2$, \cite{mallery2025synthesis} considers entropy-regularized optimal transport for analysis and synthesis of measures.

We note other parsimonious models in the space of probability measures are possible including geodesic PCA \cite{bigot2017geodesic, cazelles2018geodesic} and log-PCA \cite{fletcher2004principal, sommer2010manifold}.  In this paper we are concerned with analysis and synthesis given the LBCM model, but the problem of identifying a BCM model given data assumed to be generated by the BCM have also been considered \cite{schmitz2018wasserstein, cheng2021dynamical, mueller2023geometrically, cheng2023non}.

\begin{table*}[ht!]
\begin{center}\resizebox{\textwidth}{!}{
\begin{tabular}{| c || c | c | c | } 
\hline
Barycenter & $D(\nu,\mu_{i})$ (Discrepancy) & $\mathcal{R}(\nu)$ (Regularizer) \\

\hhline{|=||=|=|=|}
Linear & $\displaystyle\int\bigg|\frac{d\nu}{dx}(x)-\frac{d\mu_{i}}{dx}(x)\bigg|^2dx$ & none \\
\hline
2-Wasserstein \cite{agueh2011barycenters} & $W_{2}^{2}(\nu,\mu_{i})$ & none \\
\hline
Inner-Regularized Entropic \cite{janati2020debiased, mallery2025synthesis} & $S_{\varepsilon}(\nu,\mu_{i})$ & none \\
\hline
Outer-Regularized Entropic \cite{bigot2019penalization, carlier2021entropic} & $W_{2}^{2}(\nu,\mu_{i})$ & $\displaystyle\int \log\left(\frac{d\nu}{dx}(x)\right)d\nu(x)$ \\
\hline
Doubly-Regularized \cite{chizat2023doubly, vaskevicius2024computational} & $S_{\varepsilon}(\nu,\mu_{i})$ & $\displaystyle\int \log\left(\frac{d\nu}{dx}(x)\right)d\nu(x)$ \\
\hline
Sinkhorn \cite{janati2020debiased, mallery2025synthesis} & $S_{\varepsilon}(\nu,\mu_{i})-\frac{1}{2}S_{\varepsilon}(\nu,\nu)-\frac{1}{2}S_{\varepsilon}(\mu_{i},\mu_{i})$ & none \\
\hline
Linearized 2-Wasserstein (\cite{merigot2020quantitative}, this paper) & $\int \|T_{\mu_{0}}^{\nu}(x)-T_{\mu_{0}}^{\mu_{i}}(x)\|_2^{2}d\mu_{0}(x)$ & none\\
\hline

\end{tabular}}
\caption{ \label{tab:RelatedWork}Summary of methods to synthesize probability measures in the framework of (\ref{eqn:General_Synth}).}
\label{tab:survey_synth}
\end{center}
\vspace*{-\baselineskip}
\end{table*}

The use of transport maps in the LBCM is motivated by \textit{linearized optimal transport}, in which for a chosen base measure $\mu_{0}$, one approximates $W_{2}(\mu,\nu)\approx\|T_{\mu_{0}}^{\mu}-T_{\mu_{0}}^{\nu}\|_{L^{2}(\mu_{0})}$ \cite{wang2013linear}.  The embedding of measures $\mu,\nu$ into $L^{2}(\mu_{0})$ via $\mu\mapsto T_{\mu_{0}}^{\mu}$ is not an isometry in general. However, upper and lower bounds between $W_{2}(\mu,\nu)$ and $\|T_{\mu_{0}}^{\mu}-T_{\mu_{0}}^{\nu}\|_{L^{2}(\mu_{0})}$ exist under suitable regularity conditions on the measures considered \cite{delalande2023quantitative}. 
 Conditions guaranteeing an exact isometry are closely related to the notion of compatible measures described in Definition \ref{defn:compatible} of \cite{moosmuller2023linear}.  

The recent paper \cite{jiang2023algorithms} proposes the use of LBCM sets as a regularizer for variational inference.  Therein is proved a quantitative approximation result for the number of reference measures required to estimate a sufficiently regular optimal transport map in a 1-dimensional LBCM.  Our results in Section \ref{sec:RepresentationalCapacity} may be compared to this, albeit our result does not require smoothness of the transport maps and shows that the reference measures can be taken to have singular support. Moreover, we investigate the multidimensional setting and identify interesting problems therein.

\section{Properties of the LBCM}

\label{sec:BasicProperties}

The following result asserting that the variational form of the LBCM has a closed form is stated informally in \cite{merigot2020quantitative}; we give a proof in Appendix \ref{sec:ProofsSection3} for completeness.
\begin{proposition}
\label{prop:LBCM_VariationalForm}For $\mu_{0},\{\mu_{i}\}_{i=1}^{m}\subset\mathcal{P}(\mathbb{R}^{d})$ and $\lambda\in\Delta^{m},$ the optimization problem 
\begin{equation}\label{eqn:LBCM-Barycentric-Formulation}
\argmin_{\nu\in\mathcal{P}(\mathbb{R}^{d})}\sum_{i=1}^{m}\lambda_{i}\|T_{\mu_{0}}^{\nu}-T_{\mu_{0}}^{\mu_{i}}\|_{L^{2}(\mu_{0})}^{2}
\end{equation}has solution $\left( \sum_{i=1}^m \lambda_i T_{\mu_{0}}^{\mu_{i}} \right )\#\mu_0$.  
    
\end{proposition}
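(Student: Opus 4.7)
The plan is to reduce the variational problem to a standard least-squares problem in the Hilbert space $L^{2}(\mu_{0};\mathbb{R}^{d})$, and then to verify that the natural candidate minimizer is realizable as an optimal transport map from $\mu_{0}$. Throughout I assume the implicit regularity $\mu_{0}\in\mathcal{P}_{2,ac}(\mathbb{R}^{d})$ and $\mu_{i}\in\mathcal{P}_{2}(\mathbb{R}^{d})$ needed to invoke Theorem \ref{thm : knott-smith and brenier}, so that each $T_{\mu_{0}}^{\mu_{i}}$ is well-defined $\mu_{0}$-a.e.

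First, I would introduce the candidate $\bar{T}\triangleq\sum_{i=1}^{m}\lambda_{i}T_{\mu_{0}}^{\mu_{i}}\in L^{2}(\mu_{0};\mathbb{R}^{d})$ and, for an arbitrary feasible $\nu$, add and subtract $\bar{T}$ inside each norm:
\begin{equation*}
\sum_{i=1}^{m}\lambda_{i}\|T_{\mu_{0}}^{\nu}-T_{\mu_{0}}^{\mu_{i}}\|_{L^{2}(\mu_{0})}^{2}
=\sum_{i=1}^{m}\lambda_{i}\bigl\|(T_{\mu_{0}}^{\nu}-\bar{T})+(\bar{T}-T_{\mu_{0}}^{\mu_{i}})\bigr\|_{L^{2}(\mu_{0})}^{2}.
\end{equation*}
Expanding the square and using the defining identity $\sum_{i}\lambda_{i}(\bar{T}-T_{\mu_{0}}^{\mu_{i}})=0$ in $L^{2}(\mu_{0})$, the cross term vanishes and one obtains the Pythagorean-type decomposition
\begin{equation*}
\sum_{i=1}^{m}\lambda_{i}\|T_{\mu_{0}}^{\nu}-T_{\mu_{0}}^{\mu_{i}}\|_{L^{2}(\mu_{0})}^{2}
=\|T_{\mu_{0}}^{\nu}-\bar{T}\|_{L^{2}(\mu_{0})}^{2}+\sum_{i=1}^{m}\lambda_{i}\|\bar{T}-T_{\mu_{0}}^{\mu_{i}}\|_{L^{2}(\mu_{0})}^{2}.
\end{equation*}
Since the second term does not depend on $\nu$, minimizing the original objective is equivalent to minimizing $\|T_{\mu_{0}}^{\nu}-\bar{T}\|_{L^{2}(\mu_{0})}^{2}$ over $\nu\in\mathcal{P}(\mathbb{R}^{d})$.

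The main obstacle, and the content of the proposition, is to show that the lower bound of $0$ is attained, which requires exhibiting a measure $\nu^{*}$ whose optimal transport map from $\mu_{0}$ equals $\bar{T}$ $\mu_{0}$-a.e. The natural candidate is $\nu^{*}\triangleq\bar{T}\#\mu_{0}$; clearly $\bar{T}$ transports $\mu_{0}$ to $\nu^{*}$, but one must verify that $\bar{T}$ is in fact the \emph{optimal} transport map, not merely some transport map. Here I would invoke Brenier's theorem (Theorem \ref{thm : knott-smith and brenier}): each $T_{\mu_{0}}^{\mu_{i}}$ equals $\nabla\varphi_{i}$ for some convex $\varphi_{i}$, so $\bar{T}=\nabla\bigl(\sum_{i=1}^{m}\lambda_{i}\varphi_{i}\bigr)$, and since nonnegative combinations of convex functions are convex, $\bar{T}$ is itself the gradient of a convex function. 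The characterization in Brenier's theorem then identifies $\bar{T}$ as the (unique $\mu_{0}$-a.e.) optimal transport map from $\mu_{0}$ to $\bar{T}\#\mu_{0}$, so $T_{\mu_{0}}^{\nu^{*}}=\bar{T}$ and the minimum value $0$ is attained. Combining with the reduction above yields that $\nu^{*}=\bigl(\sum_{i=1}^{m}\lambda_{i}T_{\mu_{0}}^{\mu_{i}}\bigr)\#\mu_{0}$ solves (\ref{eqn:LBCM-Barycentric-Formulation}), completing the proof.
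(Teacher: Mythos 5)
Your proposal is correct and follows essentially the same route as the paper: identify $\bar{T}=\sum_{i=1}^{m}\lambda_{i}T_{\mu_{0}}^{\mu_{i}}$ as the unconstrained minimizer in $L^{2}(\mu_{0})$, then use Brenier's theorem to recognize $\bar{T}$ as a gradient of a convex function and hence as the optimal transport map from $\mu_{0}$ to $\bar{T}\#\mu_{0}$. The only cosmetic difference is that you justify the first step via the Pythagorean (bias--variance) decomposition with vanishing cross term, whereas the paper argues pointwise that the weighted mean minimizes $y\mapsto\sum_{i}\lambda_{i}\|y-T_{\mu_{0}}^{\mu_{i}}(x)\|^{2}$; these are equivalent, and your version has the small bonus of making explicit the regularity hypotheses ($\mu_{0}\in\mathcal{P}_{2,ac}$, finite second moments) that the paper's statement leaves implicit.
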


Proposition \ref{prop:LBCM_VariationalForm} demonstrates that the LBCM is similar in spirit to generative model for data associated to nonnegative matrix factorization (NMF) \cite{lee2000algorithms} and archetypal analysis \cite{cutler1994archetypal}, in which observations are modeled as non-negative (or convex) combinations of generators. In the case of the LBCM, the convex combination of maps are then used to pushforward the base measure $\mu_{0}$.

While they differ in general, the LBCM is equal to the $\BCM$ for a common set of reference measures in the special case in which the underlying measures are \textit{compatible} \cite{panaretos2020invitation}. 

\begin{definition}\label{defn:compatible}
    Measures $\{\nu_{i}\}_{i=1}^m\subset\mathcal{P}(\mathbb{R}^{d})$ are \textit{compatible} if for any $\nu_i,\nu_j,\nu_k$ the optimal transport maps satisfy $T_{\nu_{i}}^{\nu_{j}} = T_{\nu_{k}}^{\nu_{j}}\circ T_{\nu_{i}}^{\nu_{k}}$.
\end{definition}

Compatibility of measures is a strong condition, but does hold in several important cases, including: (i) all absolutely continuous 1-dimensional probability measures; (ii) Gaussian measures with simultaneously diagonalizable covariance matrices ; (iii) measures that are translation-dilations of a common measure. We refer to \cite{panaretos2020invitation} for further discussion of compatibility. The following equivalence result follows from several known results on multimarginal optimal transport; a proof is given in Appendix \ref{sec:ProofsSection3} for completeness.

\begin{proposition}
\label{thm:LBCM_Equiv_BCM}
    Suppose $\{\mu_{i}\}_{i=0}^{m}$ are compatible.  Then 
    \begin{align*}\LBCM(\mu_{0};\{\mu_{i}\}_{i=1}^{m})=\BCM(\{\mu_{i}\}_{i=1}^{m}),
    \end{align*}
    and $\nu_\lambda$ is the same in both sets.
\end{proposition}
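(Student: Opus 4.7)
The plan is to use Proposition~\ref{prop:LBCM_VariationalForm} to reduce the claim to showing that, for each $\lambda \in \Delta^m$, the map
\[
S_\lambda \;\triangleq\; \sum_{i=1}^m \lambda_i T_{\mu_0}^{\mu_i}
\]
pushes $\mu_0$ forward to a measure that is simultaneously the LBCM element and the $W_2$ barycenter of $\{\mu_i\}_{i=1}^m$ with weights $\lambda$. Since both sets in the claim are indexed by $\lambda \in \Delta^m$, equality of the individual barycenters for every $\lambda$ gives equality of the two sets.

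First, I would invoke Proposition~\ref{prop:LBCM_VariationalForm} to identify the LBCM element as $\nu_\lambda^{\text{LBCM}} = S_\lambda \# \mu_0$, and note that each $T_{\mu_0}^{\mu_i}$ is the gradient of a convex function by Brenier's theorem (Theorem~\ref{thm : knott-smith and brenier}); since nonnegative combinations of gradients of convex functions are themselves gradients of convex functions, $S_\lambda$ is the Brenier optimal transport map from $\mu_0$ onto $\nu_\lambda^{\text{LBCM}}$.

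Second, I would appeal to the Agueh--Carlier multimarginal reformulation of the $W_2$ barycenter: the barycenter equals the pushforward of an optimal multimarginal plan $\gamma^\star \in \Pi(\mu_1,\ldots,\mu_m)$ under the barycentric projection $(x_1,\ldots,x_m) \mapsto \sum_i \lambda_i x_i$, where $\gamma^\star$ minimizes $\int \sum_{i<j} \lambda_i \lambda_j \|x_i - x_j\|^2\, d\gamma$ over couplings. The candidate plan is
\[
\gamma_0 \;\triangleq\; \bigl(T_{\mu_0}^{\mu_1},\ldots,T_{\mu_0}^{\mu_m}\bigr)\#\mu_0,
\]
which has the correct marginals and whose pushforward under the barycentric projection is exactly $S_\lambda \# \mu_0$. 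The crux is to show $\gamma_0$ is multimarginal-optimal; this is where compatibility enters. Under the compatibility hypothesis of Definition~\ref{defn:compatible}, on the support of $\gamma_0$ the pairs $(x_i,x_j)$ satisfy $x_j = T_{\mu_0}^{\mu_j} \circ (T_{\mu_0}^{\mu_i})^{-1}(x_i) = T_{\mu_i}^{\mu_j}(x_i)$, so every pairwise marginal of $\gamma_0$ is the Brenier-optimal coupling of $\mu_i$ and $\mu_j$. Since the multimarginal cost decomposes into a positive combination of such pairwise costs, $\gamma_0$ attains the minimum, which I would justify either via cyclical monotonicity (Gangbo--Świȩch) or by summing the pairwise Kantorovich lower bounds and observing that $\gamma_0$ meets each bound simultaneously.

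The main obstacle is precisely the optimality of $\gamma_0$; once established, the remaining steps are bookkeeping. An alternative route, which bypasses the multimarginal machinery, is to verify the first-order (Euler--Lagrange) characterization of the $W_2$ barycenter directly: $\nu = S_\lambda \# \mu_0$ is the barycenter if and only if $\sum_i \lambda_i T_\nu^{\mu_i} = \mathrm{id}$ $\nu$-a.e. Using compatibility to write $T_\nu^{\mu_i} = T_{\mu_0}^{\mu_i} \circ T_\nu^{\mu_0} = T_{\mu_0}^{\mu_i} \circ S_\lambda^{-1}$ and summing gives $S_\lambda \circ S_\lambda^{-1} = \mathrm{id}$. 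This alternative would require justifying that the compatibility identities extend to include $\nu$ as a member of the compatible family, which follows from the structure of compatible measures as a geodesically convex set in $W_2$-space.
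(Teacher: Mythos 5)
Your proposal is correct and takes essentially the same route as the paper: both identify the LBCM element as $\bigl(\sum_{i=1}^m \lambda_i T_{\mu_0}^{\mu_i}\bigr)\#\mu_0$, take the candidate plan $\pi_* = (T_{\mu_0}^{\mu_1},\dots,T_{\mu_0}^{\mu_m})\#\mu_0$, use compatibility to show each pairwise marginal of $\pi_*$ is the Brenier-optimal coupling so that $\pi_*$ attains the lower bound $\sum_{i<j}\lambda_i\lambda_j W_2^2(\mu_i,\mu_j)$ on the multimarginal cost, and conclude via the Agueh--Carlier correspondence that the barycenter is $M_\lambda\#\pi_*$, matching the LBCM element. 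Your alternative first-order-condition sketch is superfluous here and, as you note yourself, would need extra justification (that $\nu_\lambda$ joins the compatible family); the primary argument suffices.
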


\section{Synthesis and Analysis in the LBCM}
\label{sec:SynthesisAnalysis}
Fix base and reference measures $\{\mu_{i}\}_{i=0}^{m}$. The \textit{synthesis problem} in the LBCM is, via Proposition \ref{prop:LBCM_VariationalForm}, to compute $\left(\sum_{i=1}^{m}T_{\mu_{0}}^{\mu_{i}}\right)\#\mu_{0}$. 
 For a new measure $\eta$ the \textit{analysis problem} in the LBCM is to find coefficients that optimally represent $T_{\mu_{0}}^{\eta}$ as a convex combination of the maps $\{T_{\mu_{0}}^{\mu_{i}}\}_{i=1}^{m}$.  Specifically, it solves the optimization:
\begin{equation*}
    \min_{\lambda \in \Delta^m} \big\|\sum_{i=1}^m \lambda_i T_{\mu_{0}}^{\mu_{i}} - T_{\mu_{0}}^{\eta}\big\|_{L^2(\mu_0)}^2.
\end{equation*}
One can equivalently write this as
\begin{equation} \label{eqn:AnalysisLBCM}
    \min_{\lambda\in\Delta^{m}}\lambda^{T}A^{L}\lambda,
\end{equation}where $A^L_{ij} = \displaystyle\int_{\mathbb{R}^d} \left \langle  T_{\mu_{0}}^{\mu_{i}} - T_{\mu_{0}}^{\eta},  T_{\mu_{0}}^{\mu_{j}} - T_{\mu_{0}}^{\eta}  \right \rangle d\mu_0.$  This reduces the problem to solving a convex quadratic program.

\begin{remark}
We note that the analysis problem for the $W_{2}$BCM can be solved in an analogous manner \textit{if the underlying measure is exactly a 2-Wasserstein barycenter of the reference measures}.  More precisely, given $\eta$ and reference measures $\{\mu_{i}\}_{i=1}^{m}$, we formulate the analysis problem for the $W_{2}$BCM \cite{werenski2022measure} as 
\begin{equation*}
    \min_{\lambda \in \Delta^m} W_{2}^{2}(\nu_{\lambda},\eta), \quad \nu_{\lambda}=\argmin_{\nu\in\mathcal{P}(\mathbb{R}^{d})}\sum_{i=1}^{m}\lambda_{i}W_{2}^{2}(\nu,\mu_{i}).
\end{equation*}If there exists a unique $\lambda^{*}\in\Delta^{m}$ such that $\eta=\nu_{\lambda}$, then \cite{werenski2022measure} proves that, under certain assumptions on $\eta$ and $\{\mu_{i}\}_{i=1}^{m}$ \cite{caffarelli1992regularity, panaretos2020invitation} the coefficients $\lambda^{*}$ satisfy the optimization problem
\begin{equation}\label{eqn:BCM_Analysis_QP}
    \lambda^{*}=\argmin_{\lambda\in\Delta^{m}} \lambda^{T}A\lambda,
\end{equation}
where $A_{ij} = \displaystyle\int_{\mathbb{R}^d} \left \langle  T_{\eta}^{\mu_{i}} - \id,  T_{\eta}^{\mu_{j}} - \id  \right \rangle d\eta.$  
\end{remark}

\subsection{Plug-in Estimation for the Synthesis and Analysis}
In order to solve (\ref{eqn:AnalysisLBCM}) or (\ref{eqn:BCM_Analysis_QP}) when measures are only accessible via samples, the key idea is to use a sample-based plug-in estimator for the necessary optimal transport maps and estimate the inner products via Monte-Carlo.  We employ the well-known entropy-regularized map \cite{pooladian2021entropic, rigollet2022sample, stromme2023sampling, masud2023multivariate, werenski2023estimation, werenski2024rank}, denoted $T_{\mu}^{\nu, \varepsilon}$, obtained via entropy-regularized optimal transport (EOT) as the plug-in estimator of the OT maps due to computational efficiency of EOT \cite{sinkhorn1967concerning, cuturi2013sinkhorn}.  We refer to Appendix \ref{sec:EOT} for necessary background on EOT and \cite{nutz2021introduction} for more details.  Its sample version $T_{\mu}^{\nu, n,\varepsilon}$ estimates $T_{\mu}^{\nu}$ with statistical rates that depend exponentially on the data dimension \cite{pooladian2021entropic}.  Estimating each optimal transport map $T_{\mu_{i}}^{\nu}, i=1,\dots,m$, allows us to estimate $A_{ij}^{L}$, which requires estimating the inner product of two OT maps. To do so, we follow the framework of \cite{pooladian2021entropic} and place Assumptions 1-3 on all measures $\{\mu_i\}_{i=0}^m$.

\begin{assumption} \label{ass : stronger density assumption}
    The measures $\mu_{i}$ have compact convex support $\Omega$ and they are absolutely continuous on this set with densities $p_i$ which satisfy $0 < m < p_i < M$ for two constants $m,M$.
\end{assumption}

\begin{assumption} \label{ass : regularity of potentials}
    For $i=1,...,m$ let $\varphi_i,\varphi_i^*$ be the optimal potentials between $\mu_0$ and $\mu_i$ as defined in Theorem \ref{thm : knott-smith and brenier}. Then $\varphi_i \in \mathcal{C}^2(\Omega)$ and $\varphi^*_i \in \mathcal{C}^{\alpha+1}(\Omega)$ for some $\alpha > 1$. 
\end{assumption}

\begin{assumption} \label{ass : regularity of hessian}
    There exist $l,L>0$ with $lI \preceq \nabla^2\varphi_i(x) \preceq LI$ for all $x \in \Omega$.
\end{assumption}

\begin{theorem}[\cite{pooladian2021entropic}, Theorem 3] \label{thm : pooladian entropic estimation} 
    Suppose that $\mu$ and $\nu$ satisfy Assumptions \ref{ass : stronger density assumption}, \ref{ass : regularity of potentials}, and \ref{ass : regularity of hessian}.  Then the plug-in estimate of the entropy-regularized map $T_{\mu}^{\nu, n,\varepsilon}$, with regularization parameter $\varepsilon \asymp n^{-1/(d + \bar{\alpha} + 1)}$, satisfies
    \begin{equation}
        \mathbb{E}\left [ \norm{T_{\mu}^{\nu, n,\varepsilon} - T_{\mu}^{\nu}}^2_{L^2(\mu)} \right ] \lesssim n^{-\frac{(\bar{\alpha} + 1)}{2(d + \bar{\alpha} + 1)}}\log n
    \end{equation}
    where $\bar{\alpha} = \min(\alpha, 3)$. The implicit constant may depend on $\mu$ and $\nu$, but does not depend on $n$.
\end{theorem}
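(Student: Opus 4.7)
The plan is to decompose the estimation error by the triangle inequality into a deterministic regularization bias $\|T_\mu^{\nu,\varepsilon}-T_\mu^\nu\|_{L^2(\mu)}$ between the population entropic map and the Brenier map, and a statistical fluctuation $\|T_\mu^{\nu,n,\varepsilon}-T_\mu^{\nu,\varepsilon}\|_{L^2(\mu)}$ between the plug-in estimator and its population counterpart. The bias shrinks as $\varepsilon \to 0$ while the statistical term grows in inverse powers of $\varepsilon$ but shrinks in $n$; the exponent $(\bar{\alpha}+1)/(2(d+\bar{\alpha}+1))$ in the theorem is precisely the value at which these two contributions coincide under the stated choice $\varepsilon\asymp n^{-1/(d+\bar{\alpha}+1)}$.

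To control the bias, I would start from the barycentric-projection representation $T_\mu^{\nu,\varepsilon}(x)=\mathbb{E}_{Y\sim\pi^{\varepsilon}(\cdot|x)}[Y]$ of the population entropic map, where $\pi^{\varepsilon}$ is the Schrödinger/entropic coupling. Under Assumption \ref{ass : regularity of potentials}, the conditional kernel $\pi^{\varepsilon}(\cdot|x)$ concentrates as a Gaussian-like bump around the Brenier image $T_\mu^\nu(x)$ as $\varepsilon\to 0$. A Laplace-type asymptotic expansion of this conditional mean, legitimized by the $\mathcal{C}^{\alpha+1}$ smoothness of $\varphi^{*}$, then yields a polynomial-in-$\varepsilon$ error that improves with the regularity exponent and saturates at $\bar{\alpha}=3$, since only derivatives up to third order enter the first nontrivial correction once the Gaussian integral is carried out.

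The statistical term I would handle by coupling Sinkhorn stability with empirical-process concentration. Assumption \ref{ass : regularity of hessian} makes the regularized dual functional strongly convex along the relevant directions, so errors in the empirical entropic potentials transfer quantitatively to errors in their gradients, i.e.\ to the maps themselves. The required concentration of empirical potentials around their population counterparts reduces to a standard empirical-process estimate for log-sum-exp functionals, whose $\varepsilon$-dependence reflects the Lipschitz constant of the softmin at temperature $\varepsilon$ and contributes the dimension-sensitive $\varepsilon^{-d/2}$-type factor.

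The principal obstacle, and the reason for the curse-in-$d$ exponent, is exactly this $\varepsilon^{-d/2}$ blow-up of the statistical term: the exponential nonlinearity of the Sinkhorn iteration couples sample complexity to the ambient dimension even when the underlying densities are smooth, and the $\log n$ factor in the bound ultimately traces to controlling this softmin uniformly on $\Omega$. Assumption \ref{ass : stronger density assumption} is what permits that uniform control, and combined with the Hessian bounds of Assumption \ref{ass : regularity of hessian} one converts empirical-potential concentration into $L^2(\mu)$ control on the maps. Once both pieces are in place, squaring the triangle inequality, adding the bias and variance contributions, and optimizing over $\varepsilon$ produces the claimed rate.
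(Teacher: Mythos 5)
First, a structural point: the paper you are working from does not prove this statement at all --- Theorem \ref{thm : pooladian entropic estimation} is quoted verbatim from \cite{pooladian2021entropic} (their Theorem 3) and used as a black box, so the only meaningful comparison is with the proof in that source. Measured against it, your outline gets the architecture right: decompose into the regularization bias $\|T_\mu^{\nu,\varepsilon}-T_\mu^{\nu}\|_{L^2(\mu)}$ and the sampling fluctuation $\|T_\mu^{\nu,n,\varepsilon}-T_\mu^{\nu,\varepsilon}\|_{L^2(\mu)}$, bound the bias through the barycentric-projection representation $T_\mu^{\nu,\varepsilon}(x)=\mathbb{E}_{\pi_\varepsilon}[Y\mid X=x]$, bound the fluctuation through stability in the dual potentials plus empirical-process control, and balance: a bias of order $\varepsilon^{(\bar\alpha+1)/2}$ against a fluctuation of order $\varepsilon^{-d/2}n^{-1/2}\log n$ (which is what actually coincides with the bias at $\varepsilon \asymp n^{-1/(d+\bar\alpha+1)}$, as you correctly observe).

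However, as a proof there are genuine gaps, and one mechanism is misdescribed. (i) The bias in \cite{pooladian2021entropic} is \emph{not} obtained by a pointwise Laplace expansion of the conditional kernel $\pi^{\varepsilon}(\cdot\mid x)$; the stated assumptions do not supply the uniform pointwise control such an expansion needs. Instead one applies Jensen's inequality to the conditional mean to reduce to $\mathbb{E}_{\pi_\varepsilon}\|Y-T_\mu^{\nu}(X)\|^2$, uses the curvature bounds of Assumption \ref{ass : regularity of hessian} to dominate this by the suboptimality of the entropic plan for the quadratic cost, and then invokes the second-order expansion of the entropic cost, whose leading term is $\varepsilon^2 I_0(\mu,\nu)$ with $I_0$ the integrated Fisher information along the $W_2$-geodesic. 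This is exactly the finiteness condition flagged in the paper's remark immediately after the theorem, it is where the implicit constant $(1+I_0(\mu,\nu))$ comes from, and it is entirely absent from your sketch; without it your constant is not known to be finite. The saturation $\bar\alpha=\min(\alpha,3)$ and the intermediate rates $\varepsilon^{(\bar\alpha+1)/2}$ for $1<\alpha<3$ then require a separate smoothing/interpolation argument, not a remark about third derivatives surviving a Gaussian integral. (ii) On the statistical side, ``a standard empirical-process estimate for log-sum-exp functionals'' is where essentially all of the technical work lives: one needs $\varepsilon$-explicit uniform bounds placing the empirical entropic potentials in a fixed smoothness class, covering-number computations that produce both the $\varepsilon^{-d/2}$ factor and the $\log n$, and a stability inequality converting potential error into map error. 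Your assertion that Assumption \ref{ass : regularity of hessian} ``makes the regularized dual functional strongly convex along the relevant directions'' is not established by anything you wrote and is not the mechanism in the source (Assumption \ref{ass : regularity of hessian} constrains the Brenier potential, not the entropic dual objective). In short: the roadmap matches the cited proof, but its two quantitative pillars --- the Fisher-information bias lemma and the $\varepsilon^{-d/2}n^{-1/2}\log n$ fluctuation bound --- are asserted rather than proved.
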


\begin{remark}The use of Theorem \ref{thm : pooladian entropic estimation} imposes strong assumptions on $\mu,\nu$ and requires that the integrated Fisher information along the 2-Wasserstein geodesic between them be finite.  Other estimators for $T_{\mu}^{\nu}$ exist that offer improved rates of convergence under smoothness assumptions \cite{deb2021rates, manole2024plugin} and may be used as plug-in estimators.
\end{remark}

Once these maps are estimated, we can proceed to solve the synthesis and analysis problems for the LBCM as follows.

\begin{proposition}
\label{prop : synthesis}
Assume that $\mu_{0}$ and $\{\mu_{i}\}_{i=1}^{m}$ satisfy Assumptions 1, 2, and 3. For each $i$, let $T_{\mu_0}^{\mu_i,n,\varepsilon}$ be the entropic map between $\mu_0^n$ and $\mu_i^n$, where $\mu_0^n$ and $\mu_i^n$ are $n$ sample empirical measures from $\mu_{0}$ and $\mu_{i}$, respectively. Define $\nu=\left(\sum_{i=1}^{m}\lambda_{i}T_{\mu_{0}}^{\mu_{i}}\right)\#\mu_{0}$ and $\nu^n =\left(\sum_{i=1}^{m}\lambda_{i}T_{\mu_{0}}^{\mu_{i},n,\varepsilon}\right)\#\tilde{\mu}_{0}^{n}$, where $\tilde{\mu}_0^n$ is an i.i.d. sample from $\mu_0$ which is independent from $\mu_0^n.$  

Let \begin{equation*}r_{n,d}=\begin{cases}
n^{-\frac{1}{4}} & \text{if } d=1,2,3 \\
n^{-\frac{1}{4}}\sqrt{\log(n)} & \text{if }  d=4 \\
n^{-\frac{1}{d}} & \text{if } d\ge 5
\end{cases}
\end{equation*}Then 
\begin{equation*}
    \mathbb{E}W_{2}(\nu,\nu^n)\lesssim n^{-\frac{(\bar{\alpha} + 1)}{4(d + \bar{\alpha} + 1)}}\sqrt{\log n}+r_{n,d},
\end{equation*}
where $\bar{\alpha} = \min(\alpha, 3)$ and where the implicit constant may depend on $\mu_0,...,\mu_m$ but not $n$.
\end{proposition}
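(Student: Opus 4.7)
The plan is to apply a triangle inequality for $W_2$ with a well-chosen intermediate measure that separates two sources of randomness: the sampling of the base measure via $\tilde\mu_0^n$ and the estimation of the transport maps via $\mu_0^n, \mu_i^n$. Writing $S \triangleq \sum_{i=1}^m \lambda_i T_{\mu_0}^{\mu_i}$ and $S^n \triangleq \sum_{i=1}^m \lambda_i T_{\mu_0}^{\mu_i,n,\varepsilon}$, so that $\nu = S\#\mu_0$ and $\nu^n = S^n\#\tilde\mu_0^n$, I introduce the intermediate $\tilde\nu^n \triangleq S\#\tilde\mu_0^n$ and bound
\[
W_2(\nu,\nu^n) \le W_2(\nu,\tilde\nu^n) + W_2(\tilde\nu^n,\nu^n).
\]

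For the first (base-measure sampling) term, Assumption \ref{ass : regularity of hessian} gives $\nabla^2 \varphi_i \preceq LI$, so each $T_{\mu_0}^{\mu_i} = \nabla\varphi_i$ is $L$-Lipschitz on the compact convex $\Omega$, and the convex combination $S$ inherits the same Lipschitz constant uniformly in $\lambda \in \Delta^m$. The standard Lipschitz-pushforward inequality $W_2(S\#\alpha,S\#\beta) \le L\, W_2(\alpha,\beta)$, combined with the Fournier--Guillin empirical $W_2$ rates on $\Omega$, produces
\[
\mathbb{E} W_2(\nu,\tilde\nu^n) \le L\,\mathbb{E} W_2(\mu_0,\tilde\mu_0^n) \lesssim r_{n,d},
\]
which accounts for the second term in the claim.

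For the second (map-estimation) term, since both $\tilde\nu^n$ and $\nu^n$ are pushforwards of the common discrete measure $\tilde\mu_0^n$, the coupling $(S,S^n)_{\#}\tilde\mu_0^n$ gives $W_2^2(\tilde\nu^n,\nu^n) \le \int \|S - S^n\|_2^2\, d\tilde\mu_0^n$. A Jensen step on $\lambda \in \Delta^m$ bounds the integrand by $\sum_i \lambda_i \|T_{\mu_0}^{\mu_i} - T_{\mu_0}^{\mu_i,n,\varepsilon}\|_2^2$. The key observation is that $\tilde\mu_0^n$ is an i.i.d.\ sample from $\mu_0$ that is independent of the data defining the entropic maps, so conditioning on that data and taking expectation over $\tilde\mu_0^n$ converts the empirical $L^2(\tilde\mu_0^n)$ norm into the population $L^2(\mu_0)$ norm. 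Theorem \ref{thm : pooladian entropic estimation} then bounds each $\mathbb{E}\|T_{\mu_0}^{\mu_i} - T_{\mu_0}^{\mu_i,n,\varepsilon}\|_{L^2(\mu_0)}^2$ by $n^{-(\bar\alpha+1)/(2(d+\bar\alpha+1))}\log n$, and a final application of Jensen to pass from $\mathbb{E} W_2^2$ to $\mathbb{E} W_2$ produces the first term in the claim.

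The main obstacle is the bookkeeping for the independence/conditioning argument that swaps $L^2(\tilde\mu_0^n)$ for $L^2(\mu_0)$ in expectation, together with confirming that the Lipschitz constant of $S$ really is independent of $\lambda \in \Delta^m$ so that Theorem \ref{thm : pooladian entropic estimation}'s implicit constants can be absorbed uniformly. Once these are in place the rest is a clean assembly: two triangle inequalities, two Jensen steps, and two appeals to existing rate results.
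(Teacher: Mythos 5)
Your proposal is correct, but it is organized around a different intermediate measure than the paper's proof, so it is worth comparing the two routes. Writing $S = \sum_{i=1}^m \lambda_i T_{\mu_0}^{\mu_i}$ and $S^n = \sum_{i=1}^m \lambda_i T_{\mu_0}^{\mu_i,n,\varepsilon}$, the paper splits $W_2(\nu,\nu^n)$ through $S^n\#\mu_0$: the map-error term $W_2(S\#\mu_0, S^n\#\mu_0)$ is bounded by $\|S-S^n\|_{L^2(\mu_0)}$ via the pushforward coupling, directly in the population norm (so no empirical-to-population conversion is needed there), while the sampling term $W_2(S^n\#\mu_0, S^n\#\tilde\mu_0^n)$ is handled by conditioning on $S^n$ and noting that, by independence of $\tilde\mu_0^n$ from $\mu_0^n$ and the $\mu_i^n$, the measure $S^n\#\tilde\mu_0^n$ is an empirical measure of $n$ i.i.d.\ draws from $S^n\#\mu_0$, to which Theorem \ref{thm:wasserstein law of large numbers} applies with a uniform constant because the estimated maps take values in $\Omega$. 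You instead split through $S\#\tilde\mu_0^n$, which redistributes where the two technical tools act: your sampling term needs $S$ to be uniformly Lipschitz, which you correctly extract from Assumption \ref{ass : regularity of hessian} ($\nabla^2\varphi_i \preceq LI$ on the convex $\Omega$, and the convex combination inherits the $\lambda$-independent constant $L$), after which Fournier--Guillin is applied to $\mu_0$ itself and pushed through $S$; your map-error term is evaluated under $\tilde\mu_0^n$, so you need the conditioning step to convert $L^2(\tilde\mu_0^n)$ into $L^2(\mu_0)$ in expectation --- valid since the entropic maps are defined pointwise by \eqref{eq : sample entropic map}, are independent of $\tilde\mu_0^n$, and all quantities are bounded on the compact $\Omega$, so Fubini applies before invoking Theorem \ref{thm : pooladian entropic estimation}. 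The trade-off: the paper's route never uses Lipschitzness of the true maps (only that the estimated maps have range in a bounded set, which holds automatically for entropic maps), so it would survive a weakening of the upper bound in Assumption \ref{ass : regularity of hessian}; your route applies Fournier--Guillin to the fixed measure $\mu_0$ rather than to a random pushforward, which makes the uniformity of the constant $C_{\Omega,d}$ entirely transparent and avoids reasoning about the support of $S^n\#\mu_0$. Both yield the same rates, and your remaining steps --- the coupling bound $W_2^2(S\#\tilde\mu_0^n, S^n\#\tilde\mu_0^n) \le \int \|S-S^n\|_2^2\, d\tilde\mu_0^n$, convexity of $\|\cdot\|^2$ over $\lambda \in \Delta^m$, and $\mathbb{E}[W_2] \le (\mathbb{E}[W_2^2])^{1/2}$ --- are all sound.
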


\begin{algorithm}[htbp!]
    \caption{Estimate $\lambda$ in the LBCM} \label{alg : estimate lambda LBCM}
    \begin{algorithmic}
        \STATE {\bfseries Input:} i.i.d. samples $X_1,...,X_{2n} \sim \mu_0, Y_1^i,...,Y_n^i \sim \mu_i: i=1,...,m, Y_1^\eta,...,Y_n^\eta \sim\eta$, regularization parameter $\varepsilon > 0$.
        \FOR{$i = 1,...,m$} 
            \STATE Solve for $g^i_\varepsilon$ as the optimal $g$ in
            \begin{equation*} \begin{split}
                &\max_{f,g} \frac{1}{n}\sum_{j=1}^n f(X_j)+ \frac{1}{n}\sum_{k=1}^n g(Y_k^i) \\
                &- \frac{\varepsilon}{n^2}\sum_{j,k}^n \expe \left(f(X_j) + g(Y_k^i) - \frac{1}{2}\norm{X_j - Y_k^i}_2^2 \right),
            \end{split}
            \end{equation*}
            and similarly for $g^\eta_\varepsilon$.
            \STATE Define $T_{\mu_{0}}^{\mu_{i},n,\varepsilon}$ through \eqref{eq : sample entropic map} with $g_\varepsilon = g_\varepsilon^i$ and $\{Y_1^i,...,Y_n^i\}$ and similarly for $T_{\mu_{0}}^{\eta, n,\varepsilon}$.
        \ENDFOR
        \STATE Set $\hat{A}^L \in \mathbb{R}^{m \times m}$ to be the matrix with entries
        \begin{align*}
            \hat{A}_{ij}^L = \frac{1}{n}\sum_{k=n+1}^{2n} \langle &T_{\mu_{0}}^{\mu_{i},n,\varepsilon}(X_k) - T_{\mu_{0}}^{\eta,n,\varepsilon}(X_k), \\&T_{\mu_{0}}^{\mu_{j},n,\varepsilon}(X_k) - T_{\mu_{0}}^{\eta,n,\varepsilon}(X_k) \rangle.
        \end{align*}
        \STATE \textbf{Return} $\hat{\lambda}=\displaystyle\argmin_{\lambda \in \Delta^m} \lambda^T\hat{A}^L\lambda$.
    \end{algorithmic}
\end{algorithm}

\begin{theorem} \label{thm : estimate of Aij lbcm}
    Suppose that Assumptions \ref{ass : stronger density assumption}, \ref{ass : regularity of potentials}, and \ref{ass : regularity of hessian} are satisfied for the pairs $(\mu_0, \mu_1), (\mu_0, \mu_2),$ and $(\mu_0,\eta)$. Let $X_1,...,X_{2n} \sim \mu_0, Y_1^1,...,Y^1_n \sim \mu_1, Y^2_1,...,Y^2_n \sim \mu_2$, and $Z_1,...,Z_n \sim \eta$ be i.i.d. samples from the respective measures. Let $T_{\mu_{0}}^{\mu_{1},n,\varepsilon},T_{\mu_{0}}^{\mu_{2},n,\varepsilon},$ and $T_{\mu_{0}}^{\eta,n,\varepsilon}$ be the plug-in estimate entropy-regularized maps from $\mu_0^n$ to $\mu_1^n, \mu_2^n,$ and $\eta^n$ respectively, all with $\varepsilon \asymp n^{-1/(d+\Bar{\alpha}+1)}$ and computed using $X_{1},...,X_{n}$. Then we have
    \begin{align*}
       &\mathbb{E}\bigg[ \bigg | \int \langle T_{\mu_{0}}^{\mu_{1}} - T_{\mu_{0}}^{\eta}, T_{\mu_{0}}^{\mu_{2}} - T_{\mu_{0}}^{\eta} \rangle d\mu_0 -\frac{1}{n} \sum_{i=n+1}^{2n} \\
       &\langle T_{\mu_{0}}^{\mu_{1},n,\varepsilon}(X_i) - T_{\mu_{0}}^{\eta, n,\varepsilon}(X_i), T_{\mu_{0}}^{\mu_{2}, n,\varepsilon}(X_i) - T_{\mu_{0}}^{\eta, n,\varepsilon}(X_i) \rangle \bigg | \bigg] \\
            &\hspace{2cm}\lesssim \frac{1}{\sqrt{n}} + n^{-\frac{\Bar{\alpha}+ 1}{4(d + \Bar{\alpha} + 1) }}\sqrt{\log n},
    \end{align*}
    where  $\Bar{\alpha} = \min(\alpha, 3)$. The implicit constant may depend on $\mu_0,\mu_1,\mu_{2},\eta$ but not $n$.
\end{theorem}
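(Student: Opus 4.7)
The estimator is built with sample-splitting: the entropic plug-in maps $T_{\mu_0}^{\mu_1,n,\varepsilon}, T_{\mu_0}^{\mu_2,n,\varepsilon}, T_{\mu_0}^{\eta,n,\varepsilon}$ depend only on $X_1,\ldots,X_n$ together with the $Y_j^i$ and $Z_k$, while $X_{n+1},\ldots,X_{2n}$ are reserved as Monte-Carlo evaluation points and are independent of those maps. Let $\hat S$ denote the empirical inner product from the theorem statement and $S=\int\langle T_{\mu_0}^{\mu_1}-T_{\mu_0}^{\eta},\,T_{\mu_0}^{\mu_2}-T_{\mu_0}^{\eta}\rangle\, d\mu_0$. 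I insert the ``oracle'' average
\[
\hat S^{\text{or}} \;:=\; \frac{1}{n}\sum_{i=n+1}^{2n}\langle T_{\mu_0}^{\mu_1}(X_i)-T_{\mu_0}^{\eta}(X_i),\; T_{\mu_0}^{\mu_2}(X_i)-T_{\mu_0}^{\eta}(X_i)\rangle
\]
and apply the triangle inequality to write $|\hat S - S|\le |\hat S-\hat S^{\text{or}}|+|\hat S^{\text{or}}-S|$.

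The oracle term $|\hat S^{\text{or}}-S|$ is a centered empirical average of $n$ i.i.d.\ copies of a bounded random variable: under Assumption \ref{ass : stronger density assumption} every support lies in a common compact set $\Omega$, so each true transport map is uniformly bounded and the integrand lies in a bounded interval. Jensen's inequality together with the standard $O(1/\sqrt{n})$ variance bound then yields $\mathbb{E}|\hat S^{\text{or}}-S|\lesssim n^{-1/2}$, with the implicit constant depending only on $\mathrm{diam}(\Omega)$ and hence on the measures.

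The plug-in error $|\hat S-\hat S^{\text{or}}|$ is handled by the bilinearity identity $\langle a,b\rangle-\langle a',b'\rangle=\langle a-a',b\rangle+\langle a',b-b'\rangle$. Applying this to each summand with $a = T_{\mu_0}^{\mu_1,n,\varepsilon}(X_i)-T_{\mu_0}^{\eta,n,\varepsilon}(X_i)$, $a'=T_{\mu_0}^{\mu_1}(X_i)-T_{\mu_0}^{\eta}(X_i)$, and analogously for $b,b'$, expanding once more and using Cauchy--Schwarz in $\mathbb{R}^d$ pointwise, the summand is bounded by a sum of $O(1)$ products of the form $\|T_{\mu_0}^{\cdot,n,\varepsilon}(X_i)-T_{\mu_0}^{\cdot}(X_i)\|\cdot\|\text{bounded}(X_i)\|$. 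Taking expectations and then applying Cauchy--Schwarz in expectation separates each such product into $\sqrt{\mathbb{E}\|T_{\mu_0}^{\cdot,n,\varepsilon}(X_i)-T_{\mu_0}^{\cdot}(X_i)\|^2}\cdot\sqrt{\mathbb{E}\|\text{bounded}(X_i)\|^2}$. By the sample-splitting independence, conditioning on $X_1,\ldots,X_n$ (and the target samples) turns the first factor into $\sqrt{\mathbb{E}\|T_{\mu_0}^{\cdot,n,\varepsilon}-T_{\mu_0}^{\cdot}\|_{L^2(\mu_0)}^2}$, which Theorem \ref{thm : pooladian entropic estimation} bounds by $n^{-(\bar\alpha+1)/(4(d+\bar\alpha+1))}\sqrt{\log n}$. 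The second factor is $O(1)$ because all true maps and their entropic plug-ins take values in $\Omega$ (the latter are barycentric projections, i.e.\ convex combinations of samples $Y_j^i$ or $Z_k$, which lie in $\Omega$). Summing over the finitely many cross-terms and combining with the oracle bound gives the claimed rate.

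The main obstacle is the careful bookkeeping of the four (or more) cross-terms produced by the bilinear expansion: each must be paired so that the ``error factor'' receives the Pooladian-type $L^2$ rate while the complementary ``bounded factor'' genuinely admits an $L^\infty$ bound independent of $n$. The latter requires noting that entropic plug-in maps inherit the range constraint from the empirical supports, which removes any need for additional regularity beyond Assumptions \ref{ass : stronger density assumption}--\ref{ass : regularity of hessian}. Once this is in place, the bound $n^{-1/2} + n^{-(\bar\alpha+1)/(4(d+\bar\alpha+1))}\sqrt{\log n}$ follows by collecting the Monte-Carlo and plug-in contributions.
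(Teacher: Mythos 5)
Your proposal matches the paper's proof in both structure and detail: the same oracle decomposition via the empirical average of the true maps at the held-out points $X_{n+1},\ldots,X_{2n}$, the same $O(1/\sqrt{n})$ variance bound for the Monte-Carlo term using boundedness on $\Omega$, and the same bilinear expansion of the inner-product difference with one factor bounded uniformly by $\mathrm{diam}(\Omega)$ and the other controlled by Jensen plus Theorem \ref{thm : pooladian entropic estimation} under the sample-splitting independence. Your explicit remark that the entropic plug-in maps take values in $\Omega$ because they are convex combinations of the target samples is a point the paper uses only implicitly, but it does not change the argument.
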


\begin{corollary} \label{cor : estimate lambda lbcm} 
    Let $\hat{\lambda}$ be the random estimate obtained from Algorithm \ref{alg : estimate lambda LBCM}. Suppose that $A^L$ has an eigenvalue of 0 with multiplicity 1 and that $\lambda_{*}\in\Delta^{m}$ realizes $\lambda_{*}^{T}A^L\lambda_{*}=0$.  Then under the assumptions of Theorem \ref{thm : estimate of Aij lbcm}, 
    \begin{equation*}
        \mathbb{E}[\|\hat{\lambda} -\lambda_{*}\|_{2}^{2}]\lesssim \frac{1}{\sqrt{n}} + n^{-\frac{\alpha + 1}{4(d + \alpha + 1)}}\sqrt{\log n}.
    \end{equation*}
    The implicit constant may depend on $\mu_0,...,\mu_m,\eta$ but not $n$.
\end{corollary}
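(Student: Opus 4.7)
The strategy is to combine the entry-wise error bound of Theorem~\ref{thm : estimate of Aij lbcm} with a standard sensitivity analysis for the argmin of a perturbed convex quadratic program over the simplex. Write $E := \hat{A}^L - A^L$. Applying Theorem~\ref{thm : estimate of Aij lbcm} to each pair $(i,j)$ gives $\mathbb{E}|E_{ij}| \lesssim n^{-1/2} + n^{-(\bar\alpha+1)/(4(d+\bar\alpha+1))}\sqrt{\log n}$, so $\mathbb{E}\|E\|_{\mathrm{op}} \le m \max_{i,j}\mathbb{E}|E_{ij}|$ satisfies the same rate. Moreover, each sample entropic map $T_{\mu_0}^{\mu,n,\varepsilon}(x)$ is a convex combination of samples lying in the compact convex set $\Omega$, so all such maps take values in $\Omega$; hence every entry of $\hat{A}^L$ and $A^L$ is uniformly bounded, and $\|E\|_{\mathrm{op}}$ is almost surely bounded by a constant depending only on $m$ and $\Omega$.

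Next I establish a restricted coercivity property for $A^L$ on the hyperplane of feasible perturbations. Because $A^L$ is the Gram matrix of $\{T_{\mu_0}^{\mu_i} - T_{\mu_0}^{\eta}\}$ in $L^2(\mu_0)$, it is symmetric and positive semidefinite, so its null space is precisely the set of minimizers of $\lambda \mapsto \lambda^T A^L \lambda$. By hypothesis this null space is one-dimensional, and since $\lambda_*$ attains the value $0$, it spans the null space. Any $\delta := \hat\lambda - \lambda_*$ satisfies $\mathbf{1}^T \delta = 0$, and since $\mathbf{1}^T \lambda_* = 1 \ne 0$, the line $\mathbb{R}\lambda_*$ meets $H := \{\delta : \mathbf{1}^T \delta = 0\}$ only at the origin. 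Therefore $A^L|_H$ is positive definite, furnishing a constant $c > 0$ (depending on $A^L$ only, not $n$) with $\delta^T A^L \delta \ge c \|\delta\|_2^2$ for all $\delta \in H$.

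With these ingredients in hand, I expand the optimality inequality $\hat\lambda^T \hat{A}^L \hat\lambda \le \lambda_*^T \hat{A}^L \lambda_*$ using $\hat{A}^L = A^L + E$ and $A^L \lambda_* = 0$ to obtain
\begin{equation*}
\delta^T A^L \delta \;\le\; -2\,\delta^T E \lambda_* - \delta^T E \delta \;\le\; 2\|E\|_{\mathrm{op}}\|\lambda_*\|_2 \|\delta\|_2 + \|E\|_{\mathrm{op}} \|\delta\|_2^2.
\end{equation*}
The restricted coercivity bound on the left side then yields $(c - \|E\|_{\mathrm{op}})\|\delta\|_2^2 \le 2\|E\|_{\mathrm{op}}\|\lambda_*\|_2\|\delta\|_2$. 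I split on the event $\mathcal{E} := \{\|E\|_{\mathrm{op}} \le c/2\}$: on $\mathcal{E}$, solving the inequality gives $\|\delta\|_2 \lesssim \|E\|_{\mathrm{op}}$, whence $\|\delta\|_2^2 \lesssim \|E\|_{\mathrm{op}}^2 \lesssim \|E\|_{\mathrm{op}}$ using the a.s.\ boundedness above; on $\mathcal{E}^c$, I use the deterministic bound $\|\delta\|_2^2 \le 4$ together with Markov's inequality $\mathbb{P}(\mathcal{E}^c) \le (2/c)\mathbb{E}\|E\|_{\mathrm{op}}$. Adding both contributions gives $\mathbb{E}\|\hat\lambda - \lambda_*\|_2^2 \lesssim \mathbb{E}\|E\|_{\mathrm{op}}$, matching the claimed rate.

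The main obstacle is the restricted coercivity step: one must carefully convert the qualitative assumption of a simple zero eigenvalue into a quantitative lower bound on $H$ and verify that the resulting constant $c$ depends only on $A^L$ (through the gap between its zero and smallest positive eigenvalues, together with the alignment between $\lambda_*$ and $\mathbf{1}$) and not on $n$. Everything else is a routine combination of Theorem~\ref{thm : estimate of Aij lbcm}, the support-confinement property of entropic maps, and a standard perturbation argument for constrained convex quadratic programs.
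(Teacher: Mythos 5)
Your proof is correct and reaches the claimed rate, but by a genuinely different route than the paper. The paper never forms a perturbation inequality in $\delta = \hat\lambda - \lambda_*$: it first bounds the population quadratic-form value directly, $\mathbb{E}[\hat\lambda^T A^L \hat\lambda] \lesssim \sum_{i,j}\mathbb{E}|A^L_{ij}-\hat A^L_{ij}| \lesssim m^2 B_n$, using only the optimality inequality $\hat\lambda^T \hat A^L \hat\lambda \le \lambda_*^T \hat A^L \lambda_*$, the identity $\lambda_*^T A^L \lambda_* = 0$, and the fact that the entries of $\hat\lambda, \lambda_*$ lie in $[0,1]$; it then decomposes $\hat\lambda = \hat\beta \lambda_* + \hat\lambda_\perp$ along the spectral decomposition of $A^L$, gets $\mathbb{E}\|\hat\lambda_\perp\|_2^2 \le a_2^{-1}\,\mathbb{E}[\hat\lambda^T A^L \hat\lambda]$ from the spectral gap $a_2$, controls $(1-\hat\beta)^2 \le m \|\hat\lambda_\perp\|_2^2$ from the simplex normalization $\mathbf{1}^T\hat\lambda = \mathbf{1}^T\lambda_* = 1$, and finishes by orthogonality (Pythagoras). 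Your restricted-coercivity constant on $H=\{\delta : \mathbf{1}^T\delta = 0\}$ encodes exactly the same two ingredients — the gap $a_2$ and the non-alignment of $\ker(A^L)=\mathbb{R}\lambda_*$ with $H$ — so the substance agrees; the difference is mechanical. What the paper's route buys: the entire chain is linear in the entrywise errors, so it needs neither almost-sure boundedness of $\|E\|_{\mathrm{op}}$ nor the event split $\{\|E\|_{\mathrm{op}} \le c/2\}$ with Markov's inequality. What your route buys: a modular, textbook QP-sensitivity argument yielding the first-order bound $\|\delta\|_2 \lesssim \|E\|_{\mathrm{op}}$ on the good event, which is quadratically small in the perturbation and would be sharper if second-moment control of $E$ were available; since Theorem \ref{thm : estimate of Aij lbcm} only supplies first moments, you must downgrade via $\|E\|_{\mathrm{op}}^2 \lesssim \|E\|_{\mathrm{op}}$ — and your support-confinement justification for this is sound, since the sample entropic maps are convex combinations of sample points in the convex set $\Omega$ (the weights sum to one by point 5 of Theorem \ref{thm : eot master theorem}), which is also what implicitly underlies the paper's bound $|h(x)| \le 4|\Omega|^2$ in the proof of Theorem \ref{thm : estimate of Aij lbcm}. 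One small repair: as written, $\mathbb{E}\|E\|_{\mathrm{op}} \le m \max_{i,j} \mathbb{E}|E_{ij}|$ conflates $\mathbb{E}[\max_{i,j}|E_{ij}|]$ with $\max_{i,j}\mathbb{E}|E_{ij}|$; instead use $\|E\|_{\mathrm{op}} \le \|E\|_F \le \sum_{i,j}|E_{ij}|$ pointwise, giving $\mathbb{E}\|E\|_{\mathrm{op}} \le m^2 \max_{i,j}\mathbb{E}|E_{ij}|$, which is harmless since the implicit constant is allowed to depend on $m$ through the measures.
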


\section{Representation Via LBCM}
\label{sec:RepresentationalCapacity}

One of the most natural questions for using the $W_{2}$BCM and the LBCM is to describe their ``representational capacity." This is intended to quantify in an appropriate sense how closely the classes of measures in $\BCM(\{\mu_i\}_{i=1}^m)$ and $\LBCM(\mu_0;\{\mu_i\}_{i=1}^m)$ can represent either an arbitrary probability measure or probability measures from a predetermined class. For the following discussion we will slightly generalize the $W_{2}$BCM and LBCM to accommodate infinite families of reference measures. For a family of measures indexed by a set $I$ we define
\begin{align*}
    \LBCM(\mu_0; \{\mu_i\}_{i \in I}) &\triangleq \{ \mu_\lambda \ | \ \lambda \in \mathcal{P}(I) \} \text{ where } \\
    \mu_\lambda &= \argmin_{\nu \in \mathcal{P}(\mathbb{R}^d)} \int \norm{T_{\mu_0}^{\nu} - T_{\mu_0}^{\mu_i}}_{L^2(\mu_0)}^2 d\lambda(i),
\end{align*}
which extends the case of finitely many measures. In an analogous way we extend the $\BCM$ by using
\begin{equation*}
    \mu_\lambda = \argmin_{\nu \in \mathcal{P}(\mathbb{R}^d)} \int W_2^2(\nu,\mu_i) d\lambda(i).
\end{equation*}

Selecting a base measure along with a set of references such that the associated $\LBCM$ or $\BCM$ well-approximates a given set of measures is a non-trivial problem. Indeed some common families which are have exceptional representational capacity for linear mixtures have extremely limited capacity when used in either the $\LBCM$ or $\BCM$. In particular, we note the following Proposition.
 
\begin{proposition} \label{prop : bcm on gaussians}
    Suppose that $\mu_{(x,\Sigma)} = \mathcal{N}(x;\Sigma)$ is a Gaussian with mean $x$ and covariance matrix $\Sigma$ and consider the index set $I = \mathbb{R}^d \times \mathbb{S}^d_{++}$. It holds that
    \begin{align*}
        \BCM(\{\mu_i\}_{i\in I}) \subset \{\mathcal{N}(x;\Sigma) \ | \ x \in \mathbb{R}^d, \Sigma \in \mathbb{S}^d_{++} \}, \\
        \LBCM(\mu_0; \{\mu_i\}_{i\in I}) \subset \{\mathcal{N}(x;\Sigma) \ | \ x \in \mathbb{R}^d, \Sigma \in \mathbb{S}^d_{++} \}.
    \end{align*}
    In contrast, the set
    \begin{equation*}
        \left \{ \int \mu_{(x,\Sigma)} d\lambda(x,\Sigma) \ | \ \lambda \in \mathcal{P}(\mathbb{R}^d \times \mathbb{S}_{++}^d) \right \}
    \end{equation*}
    is dense (in the sense of weak convergence) in $\mathcal{P}(\mathbb{R}^{d})$.
\end{proposition}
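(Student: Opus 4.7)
The plan is to dispatch each of the three claims independently, relying on the explicit structure of optimal transport between Gaussians.

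For the first inclusion (the $\BCM$ claim), I would invoke the classical result of Agueh--Carlier, according to which the 2-Wasserstein barycenter of finitely many Gaussians is itself a Gaussian, with mean equal to the weighted average of the input means and covariance equal to the unique positive-definite solution of the matrix fixed-point equation $\bar{\Sigma} = \sum_k \lambda_k (\bar{\Sigma}^{1/2}\Sigma_k\bar{\Sigma}^{1/2})^{1/2}$. To extend this from finitely supported $\lambda$ to a general $\lambda \in \mathcal{P}(I)$ (implicitly assumed to have $\int(\|x\|^2 + \operatorname{tr}(\Sigma))\,d\lambda(x,\Sigma) < \infty$ so that the barycentric functional is well-defined), I would approximate $\lambda$ by finitely supported $\lambda_n$ converging weakly with second-moment convergence. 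The corresponding Gaussian barycenters $\mathcal{N}(\bar{x}_n,\bar{\Sigma}_n)$ have uniformly controlled parameters, admit a Gaussian subsequential limit, and lower semi-continuity of $\nu \mapsto \int W_2^2(\nu,\mu_i)\,d\lambda(i)$ identifies this limit with the barycenter $\mu_\lambda$.

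For the second inclusion (the $\LBCM$ claim), the natural and nontrivial setting is to take $\mu_0 \in I$, i.e., $\mu_0 = \mathcal{N}(m_0, \Sigma_0)$ for some $(m_0,\Sigma_0) \in \mathbb{R}^d \times \mathbb{S}^d_{++}$. Then by the classical explicit formula, the Brenier map between two Gaussians is the affine map $T_{\mu_0}^{\mu_{(x,\Sigma)}}(y) = x + A(y - m_0)$ with $A = \Sigma_0^{-1/2}(\Sigma_0^{1/2}\Sigma\Sigma_0^{1/2})^{1/2}\Sigma_0^{-1/2}$. Weighted combinations (or $\lambda$-integrals) of affine maps remain affine, and the pushforward of a Gaussian by an affine map is Gaussian, so $\mu_\lambda = \bigl(\int T_{\mu_0}^{\mu_i}\,d\lambda(i)\bigr)\#\mu_0$ lies in the Gaussian family. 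By Proposition \ref{prop:LBCM_VariationalForm} this identification of $\mu_\lambda$ is valid.

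For the density statement, I would fix any $\mu \in \mathcal{P}(\mathbb{R}^d)$ and consider its Gaussian mollifications $\mu_\varepsilon = \mu \ast \mathcal{N}(0,\varepsilon I) = \int \mathcal{N}(x,\varepsilon I)\,d\mu(x)$. This is exactly a Gaussian mixture with mixing measure $\lambda_\varepsilon = \mu \otimes \delta_{\varepsilon I} \in \mathcal{P}(\mathbb{R}^d \times \mathbb{S}^d_{++})$. A standard mollification argument (testing against bounded continuous functions and using dominated convergence on the convolution) shows $\mu_\varepsilon \to \mu$ weakly as $\varepsilon \to 0$, which yields density.

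The main obstacle is extending the Agueh--Carlier theorem to the integral formulation over arbitrary $\lambda \in \mathcal{P}(I)$; in particular, verifying that the covariance fixed-point equation passes continuously to the limit under weak convergence of the approximating discrete measures requires some care with moment control and with the continuous dependence of the operator $\Sigma \mapsto (\bar{\Sigma}^{1/2}\Sigma\bar{\Sigma}^{1/2})^{1/2}$. The $\LBCM$ inclusion and the density claim, by contrast, are essentially immediate consequences of the explicit affine structure of Brenier maps between Gaussians and the well-known density of Gaussian convolutions in $\mathcal{P}(\mathbb{R}^d)$.
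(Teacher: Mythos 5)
Your proposal is correct, and two of its three parts coincide with the paper's proof: the paper establishes the $\LBCM$ inclusion exactly as you do, via the affine structure of Brenier maps between Gaussians combined with Proposition \ref{prop:LBCM_VariationalForm} (your explicit remark that $\mu_0$ should itself be taken Gaussian makes precise an assumption the paper leaves implicit, and is needed — for a non-Gaussian base in $d>1$ the maps $T_{\mu_0}^{\mu_i}$ are not affine), and the paper dismisses the density claim as the well-known mollification fact that you spell out with $\mu \ast \mathcal{N}(0,\varepsilon I) \to \mu$. The genuine divergence is the $\BCM$ inclusion: the paper does not reprove anything but cites Theorem 3.10 of \cite{alvarez2018wide}, which already treats barycenters over a Gaussian family with an \emph{arbitrary} mixing measure $\lambda \in \mathcal{P}(I)$, whereas you rebuild this from the finitely supported case of \cite{agueh2011barycenters} by discretizing $\lambda$ and passing to the limit. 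Your route is more self-contained, but be aware that the limiting step — which you correctly flag as the obstacle — is where all the content lives, and as sketched it has two soft spots: lower semicontinuity of $\nu \mapsto \int W_2^2(\nu,\mu_i)\,d\lambda(i)$ only bounds the value of the subsequential limit by the $\liminf$ of the approximate optimal values, so to conclude that the Gaussian limit \emph{is} $\mu_\lambda$ you also need the matching upper (recovery) bound, i.e., convergence of the optimal values — in effect a $\Gamma$-convergence or barycenter-stability statement under $W_2$-perturbation of the mixing measure (available in the consistency results of Le Gouic and Loubes) — together with uniqueness of the barycenter for general $\lambda$ (which holds here because $\lambda$-a.e.\ reference is absolutely continuous) so that the whole sequence, not merely a subsequence, converges to a well-defined $\mu_\lambda$; and to land in the stated set with $\Sigma \in \mathbb{S}^d_{++}$ you must additionally rule out a singular limit covariance, e.g., via the fixed-point equation or absolute continuity of the barycenter. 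None of these gaps is fatal, but closing them amounts to reproving a fair portion of the cited theorem, which is presumably why the paper opted for the citation.
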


\begin{remark}
    Proposition \ref{prop : bcm on gaussians} holds not just for Gaussian reference measures, but for any scale-translation family, including Dirac masses  \cite{bonneel2015sliced}.
\end{remark}

Proposition \ref{prop : bcm on gaussians} tells us that a simple family which is dense in $\mathcal{P}(\mathbb{R}^{d})$ when combined in the linear mixture sense is far from being dense using either the $W_{2}$BCM or LBCM. This leads to the question of whether or not a family of measures $\mu_0, \{\mu_i\}_{i\in I}$ can be found where the opposite result holds true, that is a family of measures where the $W_{2}$BCM or LBCM is dense but the collection of linear mixtures is not. The result in the next section exhibits such a case in one dimension.

\subsection{Density of the LBCM in $\mathcal{P}([a,b])$}
For convenience we restrict ourselves to the interval $[0,1]$ but the construction can be extended to all of $\mathbb{R}$.
\begin{theorem} \label{thm : lbcm 1D capacity}
    Let $U([0,1])$ denote the uniform measure over the interval $[0,1]$. The sets 
    \begin{align*}
        \BCM(\{a\delta_0 + (1-a)\delta_1 \ | \ a \in [0,1] \}) \\
        \LBCM(U([0,1]); \{a\delta_0 + (1-a)\delta_1 \ | \ a \in [0,1] \})
    \end{align*}
    are both dense in $\mathcal{P}([0,1])$ with respect to weak convergence. The set  
    \begin{equation*}
        \normalfont{\text{conv}}(\{U([0,1])\} \cup \{a\delta_0 + (1-a)\delta_1 \ | \ a \in [0,1] \})
    \end{equation*}
    is not dense in $\mathcal{P}([0,1]).$
\end{theorem}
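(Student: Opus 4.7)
The plan is to exploit the very simple form of the Brenier map from $U([0,1])$ to each reference $\mu_a := a\delta_0+(1-a)\delta_1$, use the one-dimensional quantile-function picture to identify both the $\BCM$ and $\LBCM$ sets explicitly, and finally rule out density of the convex hull by a direct mass-concentration argument against $\delta_{1/2}$.

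For the first two claims I would first observe that $T_{U([0,1])}^{\mu_a}(x)=\mathbb{1}_{(a,1]}(x)$ $U([0,1])$-a.e., since this is the unique monotone map pushing $U([0,1])$ onto $\mu_a$. By the measure-valued analogue of Proposition~\ref{prop:LBCM_VariationalForm}, the $\LBCM$ element associated with $\lambda\in\mathcal{P}([0,1])$ is $F_\lambda\#U([0,1])$, where
\[
F_\lambda(x) \;=\; \int_0^1 \mathbb{1}_{a<x}\,d\lambda(a) \;=\; \lambda([0,x)).
\]
In one dimension, $2$-Wasserstein barycenters are obtained by averaging quantile functions, and the quantile function of $\mu_a$ is exactly $u\mapsto\mathbb{1}_{u>a}$, so integrating against $\lambda$ produces the same $F_\lambda$. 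Consequently both sets coincide with $\{F_\lambda\#U([0,1]):\lambda\in\mathcal{P}([0,1])\}$, and it suffices to prove density (indeed equality) of this family in $\mathcal{P}([0,1])$.

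Density then follows from the classical quantile representation: given $\nu\in\mathcal{P}([0,1])$ with quantile function $Q_\nu$ (non-decreasing, left-continuous, valued in $[0,1]$), I would define $\lambda$ by setting $\lambda(\{0\})=Q_\nu(0^+)$, $\lambda(\{1\})=1-Q_\nu(1)$, and taking $\lambda$ on $(0,1)$ to be the Lebesgue--Stieltjes measure induced by $Q_\nu$. A direct check gives $\lambda([0,1])=1$ and $F_\lambda(x)=Q_\nu(x)$ for every $x\in(0,1)$; since $U([0,1])$ charges neither $\{0\}$ nor $\{1\}$, this yields $F_\lambda\#U([0,1])=Q_\nu\#U([0,1])=\nu$ exactly, which is strictly stronger than weak density.

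For the failure of density of the convex hull, the key observation is that $\mathrm{conv}(\{a\delta_0+(1-a)\delta_1:a\in[0,1]\})$ is just the same family of two-point measures, so every element of the convex hull under consideration has the form $cU([0,1])+(1-c)(p\delta_0+(1-p)\delta_1)$ for some $c,p\in[0,1]$. Such a measure assigns mass exactly $c/3\le 1/3$ to the open set $(1/3,2/3)$, whereas $\delta_{1/2}((1/3,2/3))=1$. The Portmanteau inequality $\liminf_n\mu_n(G)\ge\delta_{1/2}(G)$ for $G=(1/3,2/3)$ then rules out weak convergence to $\delta_{1/2}$, so $\delta_{1/2}$ lies outside the weak closure. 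The main technical point in the whole argument is the endpoint bookkeeping in the quantile-matching step, since $F_\lambda(0)=0$ is always forced but $Q_\nu(0^+)$ can be strictly positive (and similarly at $x=1$); the atoms of $\lambda$ at $\{0\}$ and $\{1\}$ prescribed above are chosen precisely to absorb this mismatch, so this is a routine check rather than a genuine obstacle.
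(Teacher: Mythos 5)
Your proposal is correct, and for the $\BCM$ half it takes a genuinely different and arguably cleaner route than the paper. For the $\LBCM$ half your argument is essentially the paper's: your Lebesgue--Stieltjes construction of $\lambda$ from the quantile function $Q_\nu$, with atoms at $0$ and $1$ absorbing the endpoint mismatch, is exactly the paper's Choquet-type representation (their Proposition labeled ``choquet type'' builds $\lambda$ via $\lambda([0,x]) = T_+(x)$, $\lambda(\{1\}) = 1 - T_+(1)$, handling the same endpoint bookkeeping), followed by the same observation that the averaged indicator maps push $U([0,1])$ onto $\nu$ exactly. Likewise your Portmanteau argument against $\delta_{1/2}$ fills in the non-density claim that the paper leaves as an assertion. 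The real divergence is in the $\BCM$ part. The paper does \emph{not} use the quantile isometry; because its barycenter machinery (compatibility, and the closed-form barycenter of Theorem 3.1.9 in Panaretos--Zemel) requires absolutely continuous measures with genuine transport maps, it smooths each reference $a\delta_0+(1-a)\delta_1$ to $\nu_{a,b}$, computes barycenters of the smoothed family via compatibility, and passes to the limit $b\to 0^+$ using a stability theorem, ultimately capturing only the dense subfamily of rational finitely-atomic measures. You instead invoke the classical fact that $\mu \mapsto Q_\mu$ is an isometry of $\mathcal{P}_2(\mathbb{R})$ onto the closed convex cone of nondecreasing functions in $L^2([0,1])$, so the barycenter with mixing measure $\lambda$ is the pushforward of $U([0,1])$ by $\bar{Q}(u) = \int \pmb{1}[u > a]\, d\lambda(a) = \lambda([0,u))$: the unconstrained $L^2$ minimizer is the Bochner mean, it lies in the cone, and strict convexity gives uniqueness, all with no absolute continuity needed. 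This buys you a strictly stronger conclusion --- exact equality $\BCM = \LBCM = \mathcal{P}([0,1])$, not merely weak density --- and eliminates the approximation/stability scaffolding entirely; what the paper's route buys is self-containedness relative to the multimarginal/compatibility framework it has already set up, whereas a referee reading your version would want the one-line isometry justification of quantile averaging (for a continuum of singular references) spelled out rather than cited as folklore, since that is precisely the point the paper evidently did not feel entitled to assume.
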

The strategy of the proof is to start by representing any measure $\mu \in \mathcal{P}([0,1])$ by its transport map $T$ from $U([0,1])$. One then constructs a map, which is almost everywhere equal to $T$, as a convex combination of the maps $T_a$, the optimal transport map from $U([0,1])$ to $a\delta_0 + (1-a)\delta_1$. The existence of such a convex combination immediately establishes that $\mu \in \LBCM(U([0,1]); \{a\delta_0 + (1-a)\delta_1 \ | \ a \in [0,1] \})$. 

From a convex analysis perspective, the proof is done by identifying the \textit{extreme} optimal transport maps in the set of transport maps from $U([0,1])$ to measures supported on $[0,1]$ as well as identify the measures that these extreme maps are associated with. In the next section we investigate if analogous results hold in higher dimensions.

\subsection{Capacity in Higher Dimensions}

In this section we will require the following definition.
\begin{definition} \label{def : sub differential}
    Let $\phi :\mathbb{R}^d \rightarrow \mathbb{R}$ be a convex function. The sub-differential $\partial \phi$ is a set-valued function
    \begin{equation*}
        \partial \phi(x) \triangleq \left \{ y \in \mathbb{R}^d \ | \ \phi(z) \geq \phi(x) + \langle y, z-x\rangle \ \forall z \in \mathbb{R}^d \right \}.
    \end{equation*}
\end{definition}

It is well-known by Rockafellar's theorem \cite{rockafellar1997convex} that convex functions are almost everywhere differentiable. The sub-differential is useful for handling those points where a convex function fails to be differentiable. 

One particular point of interest in the proof of Theorem \ref{thm : lbcm 1D capacity} is that the reference measure are themselves supported on the extreme points of the interval $[0,1]$ (namely $\{0,1\}$). A natural attempt to generalize the above example to $\mathbb{R}^d$, $d>1$, is to replace $[0,1]$ with a convex polytope $C \subset \mathbb{R}^d$, that is a compact convex set that can be realized as $C = \normalfont{\text{conv}}(\{v_i\}_{i=1}^\ell)$ where $\ell < \infty$ \footnote{We may and do assume without loss of generality that every vertex $v_i$ of $C$ is extreme.},  replace $U([0,1])$ with $U(C)$, replace $\mathcal{P}([0,1])$ with $\mathcal{P}(C)$ and replace $\{a\delta_0 + (1-a)\delta_1 \ | \ a \in [0,1]\}$ with $\mathcal{P}(\{v_i\}_{i=1}^\ell)$, the set of probability measures on the extreme points. 

The following result shows that maps with image contained in $\{v_i\}_{i=1}^\ell$ are extreme, extending the approach used in Theorem \ref{thm : lbcm 1D capacity} in one dimension.
\begin{proposition} \label{prop : maps to extreme are extreme}
    Define the sets 
    \begin{align*}
    &\mathcal{T}(C) \triangleq \left \{T\!:\!C\!\rightarrow\!C  |  \exists \varphi \normalfont{\text{ cvx s.t. }}  T(x)\!\in\! \partial \varphi(x) \ \forall x \in C \right \}, \\
        &\mathcal{V}(C) \triangleq \left \{ T \in \mathcal{T}(C) \ | \ T(x) \in \{v_i\}_{i=1}^\ell \text{ for a.e. } x \in C \right \}.
    \end{align*}
    The set $\mathcal{V}(C)$ consists of the optimal transport maps from $U(C)$ to $\mathcal{P}(\{v_i\}_{i=1}^\ell)$ and additionally $\mathcal{V}(C)$ are extreme points in $\mathcal{T}(C)$ up to almost everywhere equality in the sense that if $T \in \mathcal{V}(C)$ can be expressed as
    \begin{equation*}
        T = \int_{\mathcal{T}(C)} T_i d\lambda(T_i)
    \end{equation*}
    for some $\lambda \in \mathcal{P}(\mathcal{T}(C))$ then $\mathbb{P}(T = T_i \text{ a.e.}) = 1$ where $T_i \sim \lambda$.
\end{proposition}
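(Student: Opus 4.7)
The plan is to handle the two assertions separately: (i) that $\mathcal{V}(C)$ equals the collection of OT maps from $U(C)$ to measures in $\mathcal{P}(\{v_i\}_{i=1}^\ell)$, and (ii) the extremeness claim.

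For (i), I would argue both inclusions. Given $T \in \mathcal{V}(C)$, the defining condition $T(x) \in \partial\varphi(x)$ makes $T$ equal $U(C)$-a.e. to $\nabla\varphi$ (by Rockafellar's theorem mentioned in Definition \ref{def : sub differential}), so by Brenier's theorem (Theorem \ref{thm : knott-smith and brenier}) $T$ is the unique OT map from $U(C)$ to $\nu := T\#U(C) \in \mathcal{P}(\{v_i\}_{i=1}^\ell)$. Conversely, for any $\nu \in \mathcal{P}(\{v_i\}_{i=1}^\ell)$, Brenier's theorem produces a convex $\varphi^*$ with $T^* = \nabla\varphi^*$ the unique OT map from $U(C)$ to $\nu$; extending $T^*$ by a measurable selection from $\partial\varphi^*$ on the null set where $\varphi^*$ is not differentiable places $T^*$ in $\mathcal{T}(C)$, and $T^*(x) \in \{v_i\}_{i=1}^\ell$ for a.e. $x$ because $T^*\#U(C) = \nu$ is concentrated on those vertices.

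For (ii), fix $T \in \mathcal{V}(C)$ and suppose $T = \int_{\mathcal{T}(C)} T_i\, d\lambda(T_i)$. The core observation is that this integral, interpreted pointwise, expresses the extreme point $T(x) = v_{j(x)}$ as the barycenter $\int T_i(x)\, d\lambda(T_i)$ of the probability measure on $C$ obtained by pushing $\lambda$ forward under the evaluation $T_i \mapsto T_i(x)$ (which maps into $C$ because each $T_i$ does). Since $C = \normalfont{\text{conv}}(\{v_i\}_{i=1}^\ell)$ is a polytope, every vertex is an exposed point, so there is an affine function attaining its maximum on $C$ only at $v_{j(x)}$; integrating this affine function against the pushforward of $\lambda$ then forces $T_i(x) = v_{j(x)} = T(x)$ for $\lambda$-a.e. $T_i$. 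To upgrade this ``for a.e. $x$, $\lambda$-a.e. $T_i$'' conclusion to the desired ``$\lambda$-a.e. $T_i$, a.e. $x$'' conclusion, I would apply Fubini's theorem to the indicator of $S = \{(x, T_i) : T_i(x) \neq T(x)\} \subset C \times \mathcal{T}(C)$: its $x$-sections have $\lambda$-measure zero, so $(U(C) \otimes \lambda)(S) = 0$, and therefore its $T_i$-sections have $U(C)$-measure zero for $\lambda$-a.e. $T_i$, which is exactly $\mathbb{P}(T_i = T \text{ a.e.}) = 1$.

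The main obstacle I foresee is the measurability scaffolding, since the statement treats $\mathcal{T}(C)$ as an abstract measure space without specifying its structure: one needs a $\sigma$-algebra on $\mathcal{T}(C)$ making evaluations $T \mapsto T(x)$ measurable, the set $S$ measurable in the product, and the integral $\int T_i\, d\lambda(T_i)$ well-defined so that it represents $T(x)$ pointwise for a.e. $x$. A natural resolution is to embed $\mathcal{T}(C)$ into $L^2(U(C); \mathbb{R}^d)$ with its Borel $\sigma$-algebra, interpret the integral as a Bochner integral, and recover the pointwise representation along an a.e. convergent sequence of approximating Riemann-type sums. Once these foundations are in place, the convex-analytic step and the Fubini step above are essentially routine.
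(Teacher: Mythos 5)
Your proof is correct, but at the extremeness step it takes a genuinely different route from the paper. The paper proves the stronger statement that each $T \in \mathcal{V}(C)$ is \emph{exposed} in $\mathcal{T}(C)$: choosing for each vertex $v_i$ a vector $u_i$ with $v_i = \argmax_{v \in C} \langle v, u_i \rangle$ (uniquely attained), and setting $R_i = T^{-1}(v_i)$, it builds the single dual certificate $M_T(x) = \sum_{i=1}^\ell \pmb{1}[x \in R_i]\, u_i$ and the continuous linear functional $F_T(T') = \int \langle T'(x), M_T(x) \rangle \, d[U(C)](x)$, then shows $T$ is the unique (up to a.e. equality) maximizer of $F_T$ over $\mathcal{T}(C)$; applying $F_T$ to the barycentric identity $T = \int T_i \, d\lambda(T_i)$ and invoking the equality case yields the conclusion. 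Your argument instead localizes: for a.e.\ fixed $x$ you read $T(x) = v_{j(x)} = \int T_i(x)\, d\lambda(T_i)$ as expressing an exposed point of the polytope as the barycenter of a probability measure on $C$, conclude $T_i(x) = T(x)$ for $\lambda$-a.e.\ $T_i$, and then swap quantifiers via Fubini--Tonelli on $S = \{(x,T_i) : T_i(x) \neq T(x)\}$. Both proofs ultimately rest on the same convex-geometric fact (vertices of a polytope are exposed), and both are valid. What each buys: the paper's global certificate needs the integral hypothesis tested against only one bounded linear functional, so it sidesteps most of the measurability scaffolding you rightly flag --- joint measurability of $(x, T_i) \mapsto T_i(x)$, product-measurability of $S$, and a pointwise-a.e.\ reading of the Bochner integral --- and it delivers exposedness rather than mere extremeness; your pointwise-plus-Fubini route is more elementary step by step, makes explicit the deduction from ``barycenter of an exposed point'' that the paper leaves implicit when passing from exposedness to the integral formulation, but is honest in requiring the $L^2$/Bochner framework you sketch (which does work: maps in $\mathcal{T}(C)$ are uniformly bounded by $\mathrm{diam}(C)$, so the embedding into $L^2(U(C);\mathbb{R}^d)$ and recovery of a.e.\ pointwise identities are routine). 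Your treatment of the first claim, including the Rockafellar/Brenier identification and the measurable-selection extension on the non-differentiability null set, matches the paper's argument at essentially the same level of rigor.
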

The choice of $\mathcal{T}$ is to refer to the transport maps while  $\mathcal{V}$ is to refer to the vertices of the polytope $C$ which define the set $\mathcal{V}(C)$. Importantly, note that in Proposition \ref{prop : maps to extreme are extreme} it states that $\mathcal{V}(C)$ consists of points which are extreme in $\mathcal{T}(C)$, but it does \textit{not} claim that $\mathcal{V}(C)$ contains all of the extreme points. However this is the case in one dimension when $C = [0,1]$ and the set $\mathcal{V}([0,1])$ is not only a subset of the extreme points of $\mathcal{T}([0,1])$ but in fact consists of the \textit{entire} set of extreme points.  

A natural question in higher dimensions is if the set of measures on the extreme points is sufficient to generate a dense $\LBCM$. Of course the measures on the extreme points $\mathcal{P}(\{v_i\}_{i=1}^\ell)$ can be associated the set of $\mathcal{V}(C)$ through the relation $\mu[\{v_i\}] = U(C)[T^{-1}(v_i)]$ for every $T \in \mathcal{V}(C)$, or vice versa by observing that for every $\mu \in \mathcal{P}(\{v_i\}_{i=1}^\ell)$ the map $T_{U(C)}^\mu \in \mathcal{V}(C)$. This leads to the following two equivalent questions

\begin{question} \label{que : is dense}
    Is the set $\LBCM(U[C]; \mathcal{P}(\{v_i\}_{i=1}^\ell))$ dense in $\mathcal{P}(C)$?
\end{question}

\begin{question} \label{que : convex maps}
    Is it the case that $\normalfont{\text{conv}}(\mathcal{V}(C))$ is dense in $\mathcal{T}(C)$? 
\end{question}

The following result provides a negative answer to Question \ref{que : convex maps}, and thus a negative answer to Question \ref{que : is dense}. 
\begin{theorem} \label{thm : questions negative}
    The answer to Question \ref{que : convex maps} is false, that is $\normalfont{\text{conv}}(\mathcal{V}(C))$ is not dense in $\mathcal{T}(C)$. As a consequence the answer to Question \ref{que : is dense} is also false.
\end{theorem}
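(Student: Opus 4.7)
The plan is to exhibit a map $T^* \in \mathcal{T}(C)$ that cannot be approximated by elements of $\text{conv}(\mathcal{V}(C))$, then appeal to the correspondence $T \leftrightarrow T \# U(C)$ to conclude failure of density for $\LBCM$. A first structural observation: every $T_k \in \mathcal{V}(C)$ is piecewise constant on the cells of a power diagram of the vertex set $\{v_i\}_{i=1}^\ell$, so each cell boundary is a hyperplane whose normal lies in the finite discrete set $\{(v_j - v_i)/\|v_j - v_i\|\}$. A finite convex combination $T = \sum_k \alpha_k T_k$ is therefore piecewise constant on the common refinement of these cell structures, takes values in the finite set $\{\sum_k \alpha_k v_{i_k}\}$, and inherits the same restriction on cell-boundary normals. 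In contrast, a generic $T \in \mathcal{T}(C)$ can have level-set geometry with boundary normals in any direction of $\mathbb{R}^d$; this mismatch is the structural feature that should obstruct density.

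The main step is to exhibit an extreme point of $\mathcal{T}(C)$ lying outside $\mathcal{V}(C)$ and to show it is not even in the closed convex hull. In dimension one, a classical fact strengthens Proposition \ref{prop : maps to extreme are extreme} to the equality $\mathcal{V}([0,1]) = \mathrm{ext}(\mathcal{T}([0,1]))$, so a Choquet-type representation recovers density; this is why Theorem \ref{thm : lbcm 1D capacity} succeeds. In dimension $d \geq 2$ the richer face-geometry of $C$ permits a construction of a piecewise-linear convex potential $\varphi^*$ whose gradient $T^* = \nabla \varphi^*$ takes values on a proper face of $C$ (not only at vertices) in a rigid way; I would verify extremality by showing that any decomposition $T^* = (T_1 + T_2)/2$ with $T_i \in \mathcal{T}(C)$ is forced to be trivial using an edge-alignment Jensen-type argument analogous to the one behind Proposition \ref{prop : maps to extreme are extreme}, applied now to a face rather than a vertex.

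To separate $T^*$ from $\overline{\text{conv}(\mathcal{V}(C))}$ in $L^2(U(C); \mathbb{R}^d)$, I would apply Hahn--Banach with a continuous linear functional of the form $\phi(T) = \int_C \langle g(x), T(x) \rangle \, dU(C)(x)$ for a carefully chosen $g$. Maximizing $\phi$ over $\mathcal{V}(C)$ is a constrained linear program: one maximizes pointwise over vertex assignments subject to the cells being power-diagram cells of $\{v_i\}$. The strict inequality $\sup_{T \in \mathcal{V}(C)} \phi(T) < \phi(T^*)$ would then follow from the mismatch between the admissible cell-boundary normals in $\mathcal{V}(C)$ and the generic directions featured in $T^*$; this sup carries over to $\text{conv}(\mathcal{V}(C))$ by linearity and to its closure by continuity of $\phi$.

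The hard part is the explicit construction of $T^*$ and of the witness $g$: many natural candidates for $T^*$ (for instance the identity, or smooth gradient maps with image in all of $C$) turn out to be approximable by continuous-parameter families of axis-aligned vertex-splitters $T^{(a,b)} \in \mathcal{V}(C)$, so the construction must genuinely exploit the finite set of available normal directions, which is a strict feature of $d \geq 2$. The consequence for Question \ref{que : is dense} then follows because the map $T \mapsto T \# U(C)$ sends $L^2$-convergence of transport maps to weak convergence of pushforward measures: $T^* \notin \overline{\text{conv}(\mathcal{V}(C))}$ yields $T^* \# U(C) \notin \overline{\LBCM}$ in the weak topology on $\mathcal{P}(C)$.
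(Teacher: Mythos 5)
Your proposal has the right target (an explicit map in $\mathcal{T}(C)$ at positive distance from $\normalfont{\text{conv}}(\mathcal{V}(C))$), but the mechanism you offer in place of a construction does not work, and the construction itself is deferred at exactly the point where the difficulty lives. The invariant you propose --- that maps in $\normalfont{\text{conv}}(\mathcal{V}(C))$ have cell boundaries with normals in the finite set $\{(v_j-v_i)/\norm{v_j-v_i}\}$ --- is true for \emph{finite} convex combinations but is destroyed by closure, and density is a statement about the closure: already with two independent normal directions (which the difference set of any $2$-dimensional polytope's vertices contains), piecewise-constant fields on arbitrarily fine grids with those jump normals are $L^2(U(C))$-dense among bounded fields, so no fixed linear functional $\phi(T)=\int \langle g, T\rangle\, dU(C)$ can be separated from $T^*$ on the basis of normal directions alone. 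The strict inequality $\sup_{T\in\mathcal{V}(C)}\phi(T)<\phi(T^*)$, which is the entire content of the theorem, is thus asserted rather than proved; likewise your extreme point $T^*$ supported on a proper face is never constructed, and extremality by itself would not suffice --- an extreme point of $\mathcal{T}(C)$ can a priori lie in $\overline{\normalfont{\text{conv}}(\mathcal{V}(C))}$ unless you add a compactness/Milman-type argument, which you do not supply. The paper's proof isolates a different invariant, one that \emph{does} survive arbitrary mixtures and limits: on the triangle $C_0=\normalfont{\text{conv}}(\{(0,0),(0,1),(1,0)\})$, every $T\in\mathcal{V}(C_0)$ is the subgradient of a max-affine function with slopes at the vertices (Lemma \ref{lem : structure of V}), which forces $a_2\mapsto\langle e_1,T(a_1,a_2)\rangle$ to be non-increasing for all but at most one $a_1$ (Lemma \ref{lem : first coordinate decrease}); a disjointness/countability argument extends this to almost every vertical strip for every element of $\normalfont{\text{conv}}(\mathcal{V}(C_0))$, including infinite mixtures (Lemma \ref{lem : almost every}). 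Against this, the explicit smooth potential $\phi_0(x,y)=\frac{1}{4}\frac{x^2}{2-y}$ has $\langle e_1,\nabla\phi_0(x,\cdot)\rangle$ \emph{strictly increasing} with slope at least $x/8$ (Lemma \ref{lem : phi issues}), and integrating the resulting one-dimensional crossing estimate over vertical strips gives the uniform gap $\int\norm{T-\nabla\phi_0}_2\,dU(C_0)\geq 1/192$ for all $T\in\normalfont{\text{conv}}(\mathcal{V}(C_0))$ --- precisely the quantitative separation your Hahn--Banach step presupposes.

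Your final deduction of the answer to Question \ref{que : is dense} is also logically reversed. Continuity of $T\mapsto T\#U(C)$ from $L^2$ to weak convergence gives only: density of maps implies density of measures, whose contrapositive runs in the wrong direction for you. To conclude that $\nabla\phi_0\#U(C_0)$ is not weakly approximable by the $\LBCM$, you need the converse implication: if $\nu_k=T_k\#U(C_0)\to\nabla\phi_0\#U(C_0)$ weakly with $T_k\in\normalfont{\text{conv}}(\mathcal{V}(C_0))$, then each $T_k$, being a convex combination of gradients of convex functions, is itself the optimal transport map from $U(C_0)$ to $\nu_k$ (as in Proposition \ref{prop:LBCM_VariationalForm}), and stability of optimal maps with fixed absolutely continuous source on a compact set yields $T_k\to\nabla\phi_0$ in $L^2(U(C_0))$, contradicting the uniform gap. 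This stability step is what makes Questions \ref{que : is dense} and \ref{que : convex maps} equivalent; your one-line pushforward-continuity argument, as written, does not establish the implication you need.
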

Theorem \ref{thm : questions negative} invites the following open question.
\begin{question} \label{que : extreme}
    What is the smallest set $\{\mu_i\}_{i \in I}$ such that $\LBCM(\text{U}(C); \{\mu_i\}_{i \in I})$ is dense in $\mathcal{P}(C)$? Equivalently, what are the extreme elements of the convex set $\mathcal{T}(C)$?
\end{question}

\section{Applications}

We consider applications of the LBCM to two problems: estimation of covariance matrices and recovery of corrupted MNIST digits\footnote{Code to recreate the experiments is publicly available at \url{https://github.com/MattWerenski/LBCM}}.  We compare against standard baselines as well as the $W_{2}$BCM approach of \cite{werenski2022measure}.  Details of our approach are in Appendices \ref{sec:Algorithm3}-\ref{sec:AdditionalExperiments}.

\subsection{Covariance Estimation}

In covariance estimation one is given a finite set of samples $X_1,...,X_n \sim \mathcal{N}(0,\Sigma)$ and is tasked with constructing an estimate $\hat{\Sigma} \approx \Sigma$ \cite{pourahmadi2013high,srivastava2013covariance,friedman2008sparse}. We consider the experimental set up in Algorithm \ref{alg : covariance set up}. 

\begin{algorithm}[htbp!]
    \caption{Covariance Experimental Set Up} \label{alg : covariance set up}
    \begin{algorithmic}[1]
        \STATE {\bfseries Input:} Number of references ($m$), random covariance procedure ($\mathtt{RandomCovariances}$), the number of samples ($n$), random coordinate procedure ($\mathtt{RandomCoordinate}$).

\vspace{5pt}
        \STATE Set $(\Sigma_i,...,\Sigma_m) = \mathtt{RandomCovariances}(m)$.
        \STATE Set $\lambda = \mathtt{RandomCoordinate}(m)$.
        \STATE Compute $\Sigma$ by using Algorithm 1 in \cite{chewi2020gradient} with parameters $\lambda,\Sigma_1,...,\Sigma_m$.
        \STATE Draw $n$ samples $X_1,...,X_n \sim \mathcal{N}(0,\Sigma)$.
        \STATE Construct estimates $\hat{\lambda}, \hat{\Sigma}$ using Algorithm \ref{alg : covariance estimate} with the samples $X_1,...,X_n$ and references $\Sigma_1,...,\Sigma_m$.
        
\vspace{5pt}

        \STATE \textbf{Return} $\hat{\lambda}, \hat{\Sigma}$.
    \end{algorithmic}
\end{algorithm}
Algorithm \ref{alg : covariance set up} requires two procedures: $\mathtt{RandomCovariance}$, which returns $m$ positive definite matrices representing the covariances of the reference measures, and $\mathtt{RandomCoordinate}$ which returns a random vector in $\Delta^m$. Throughout we only consider sampling from $\Delta^m$ uniformly. The estimates of $\lambda$ and $\Sigma$ are constructed using a covariance estimation procedure which is specified in Algorithm \ref{alg : covariance estimate}.

\begin{algorithm}[htbp!]
    \caption{Covariance Estimation with References} \label{alg : covariance estimate}
    \begin{algorithmic}[1]
        \STATE {\bfseries Input:} i.i.d. samples ($X_1,...,X_{n} \sim \mathcal{N}(0,\Sigma)$), reference measures ($\mathcal{N}(0,\Sigma_1),...,\mathcal{N}(0,\Sigma_m)$), coordinate estimation procedure ($\mathtt{EstimateCoordinate}$), covariance estimation procedure ($\mathtt{EstimateCovariance}$).

        \vspace{5pt}

        \STATE Set $\hat{\Sigma}_{\text{emp}} = \frac{1}{n} \sum_{i=1}^n X_iX_i^T$.
        \STATE Set $\hat{\lambda} = \mathtt{EstimateCoordinate}(\hat{\Sigma}_{\text{emp}};\Sigma_1,...,\Sigma_m)$.
        \STATE Set $\hat{\Sigma} = \mathtt{EstimateCovariance}(\hat{\lambda};\Sigma_1,...,\Sigma_m)$.
        
        \vspace{5pt}

        \STATE \textbf{Return} $\hat{\lambda}, \hat{\Sigma}$.
    \end{algorithmic}
\end{algorithm}
Algorithm \ref{alg : covariance estimate} requires two further procedures to be provided. The first is $\mathtt{EstimateCoordinate}$ which takes as arguments the empirical covariance matrix $\hat{\Sigma}_{\text{emp}}$ and the collection of reference covariances $\Sigma_{1},...,\Sigma_m$ and returns an estimate of the coordinate $\hat{\lambda}$. The second procedure is $\mathtt{EstimateCovariance}$ which takes as arguments the estimated coordinate and the set of reference covariances and returns the estimate of the covariance. 

We consider three versions of Algorithm \ref{alg : covariance estimate}: LBCM, BCM, and maximum likelihood estimation (MLE); details are in Appendix \ref{sec:Algorithm3}.

\begin{remark} \label{rem : simultaneously diagonalizable}
    In order to effectively compare the three methods above we must consider a setting where the $W_{2}$BCM and LBCM coincide. This is the case when the covariance matrices $\Sigma_1,...,\Sigma_m$ are \textit{simultaneously diagonalizable} which is equivalent to $\Sigma_1,...,\Sigma_m$ having the same eigenvectors. The $\mathtt{RandomCovariances}$ achieves this using a two step process. First, sample an orthogonal matrix $O \sim \text{Ortho}(d)$ and then for each $i = 1,...,m$ set $\Sigma_i = O^TD_iO$ where $D_i$ is a random diagonal matrix with strictly positive entries on its diagonal.  Throughout we let $D_i$ have independent diagonal entries distributed according to the half-normal distribution.\footnote{If $X \sim \mathcal{N}(0,1)$ then $|X|$ has a half-normal distribution.}
\end{remark}

Results are shown in Figure \ref{fig:GaussianResults}.  We plot both the discrepancy between the recovered covariance matrix and the discrepancy in the estimate of the coordinate $\lambda$. In addition we plot the distance from $\mathcal{N}(0,\hat{\Sigma}_{\text{emp}})$ to $\mathcal{N}(0,\Sigma)$. The LBCM uses $\mu_0 = \mathcal{N}(0,I)$.  The first setting we consider is when $m = 10$ and $d = 10$.  In this case, incorporating the $W_{2}$BCM or LBCM leads to approximately one quarter the error obtained by the empirical covariance.  The asymptotic behavior for the $W_{2}$BCM, LBCM, and empirical covariance are all essentially of order $n^{-1/2}$. The quality of the estimate of the covariance is extremely similar for both the $W_{2}$BCM and LBCM in this setting. The behavior of the MLE is interesting but we do not have a precise explanation of its behavior which may be either theoretical or connected with the numerical methods involved in obtaining it.  Next we follow the same procedure except with $d=20$ and observe similar results. 

\begin{figure}[t!]
    \centering
    \includegraphics[width=.9\linewidth]{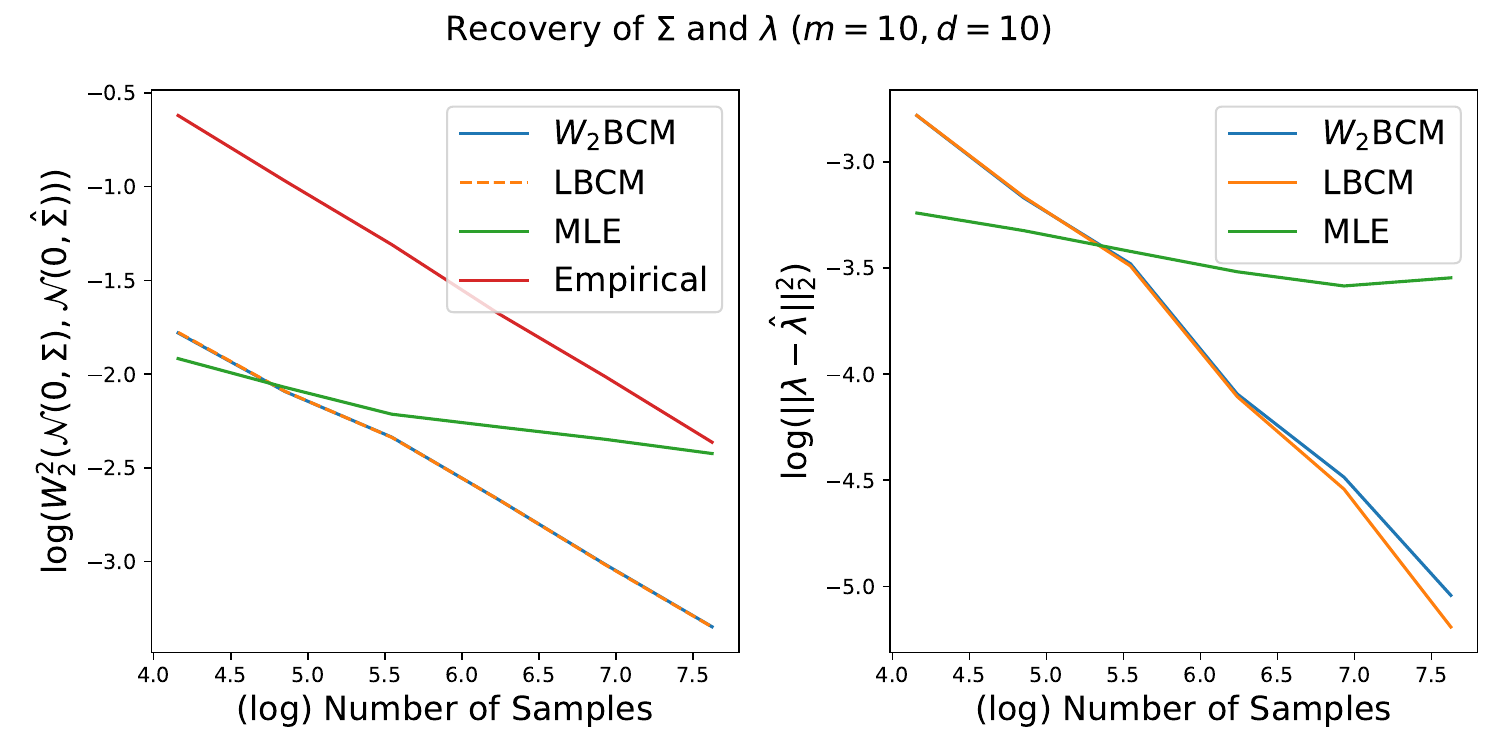}
    \includegraphics[width=.9\linewidth]{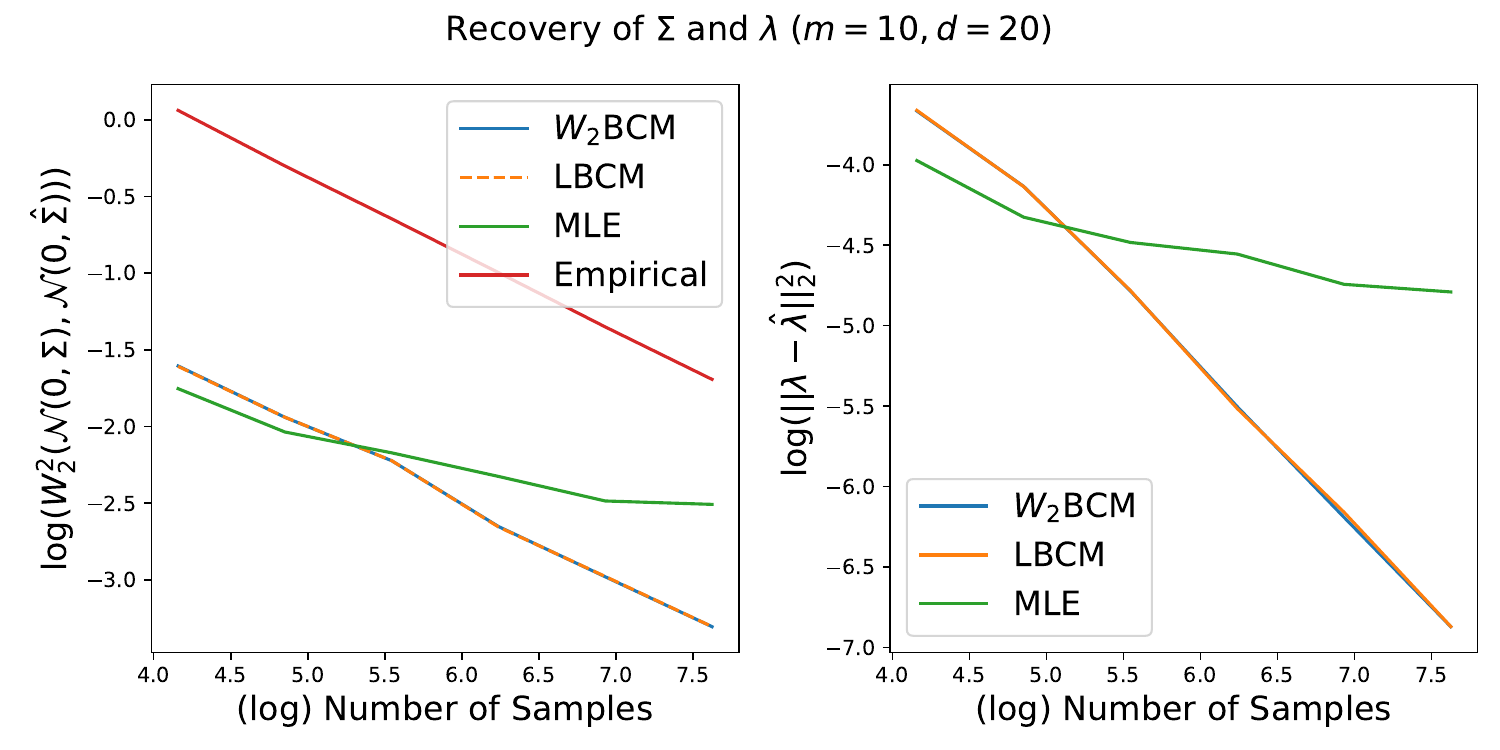}
    \caption{\label{fig:GaussianResults} Recovery of the covariance matrix and coordinate in logarithmic scale.}
\end{figure}
\subsection{MNIST Reconstruction}

Another application we consider is in digit reconstruction. This is done using the MNIST image set which consists of 28$\times$28 grayscale images of handwritten digits 0 through 9. Sample digits are given as the first column of Figures \ref{fig : mnist rebuilds} and \ref{fig : lbcm reconstructions}.  The problem we consider is, given an occluded digit which is missing the central block of pixels, to recover the digit (see Figure \ref{fig : mnist rebuilds}). We do so using the $W_{2}$BCM, LBCM, and a simple linear method. Each of these methods falls into the framework of Algorithm \ref{alg : mnist rebuild} by specifying $\mathtt{EstimateCoordinate}$ and $\mathtt{SynthesizeImage}$.

\begin{algorithm}[htbp!]
    \caption{Digit Reconstruction Procedure} \label{alg : mnist rebuild}
    \begin{algorithmic}[1]
        \STATE {\bfseries Input:} Corrupted digit to reconstruct ($\mathtt{C}_\eta$), Reference digits ($\mathtt{D}_1,...,\mathtt{D}_m$), Coordinate estimation procedure ($\mathtt{EstimateCoordinate}$), Image synthesis procedure ($\mathtt{SynthesizeImage}$).
        \STATE Set $\mathtt{C}_i = \mathtt{Occlude}(\mathtt{D}_i)$ for $i=1,...,m$ $\hspace{0,5cm}\#$ Removes the central block of pixels
        \STATE Set $\lambda = \mathtt{EstimateCoordinate}(\mathtt{C}_\eta, \mathtt{C}_1,...,\mathtt{C}_m)$
        \STATE \textbf{Return} $\mathtt{SynthesizeImage}(\lambda, [\mathtt{D}_1,...,\mathtt{D}_m])$
    \end{algorithmic}
\end{algorithm}

For the LBCM we specify a base digit $\mathtt{D}_0$ and an occluded version $\mathtt{C}_0$. $\mathtt{EstimateCoordinate}$ is then performed by converting $\mathtt{C}_\eta, \mathtt{C}_0,...,\mathtt{C}_m$ to measures using Algorithm \ref{alg : image to measure} and then obtain $\lambda$ by solving the minimization in \eqref{eqn:AnalysisLBCM}. For $\mathtt{SynthesizeImage}$ we convert $\mathtt{D}_0,...,\mathtt{D}_m$ to measures, again using Algorithm \ref{alg : image to measure}, and construct a measure $\rho_\lambda$ by pushing $\mathtt{D}_0$ through the weighted combination of the transport maps. We then convert the resulting measure back to an image using Algorithm \ref{alg : measure to image} in the Appendix which is akin to kernel density estimation.

The linear method we consider performs $\mathtt{EstimateCoordinate}$ by normalizing the occluded digits $\mathtt{C}_\eta,\mathtt{C}_1,...,\mathtt{C}_m$ to  each  have total mass one and then treats them as vectors in $\mathbb{R}^{784}$ with each entry corresponding to the color at a pixel position and projects $\mathtt{C}_\eta$ onto the convex hull $\normalfont{\text{conv}}(\mathtt{C}_1,...,\mathtt{C}_m)$ and returns $\lambda$ corresponding to the closest point. $\mathtt{SynthesizeImage}$ returns the weighted sum of the digits $\sum_{i=1}^m \lambda_i \mathtt{D}_i$.

Finally, for the $W_2$BCM, $\mathtt{EstimateCoordinate}$ converts $\mathtt{C}_\eta,\mathtt{C}_1,...,\mathtt{C}_m$ into measures using Algorithm \ref{alg : image to measure} in the Appendix and then obtains $\lambda$ by solving the minimization in \eqref{eqn:BCM_Analysis_QP}. To recover an image, we first synthesize a (approximate) 2-Wasserstein barycenter using Algorithm \ref{alg:WGD} (see Subsection \ref{subsect:WGD details} for details), with $\alpha=0.05$, $k=200$ and initialized at $\mu_0\triangleq\sum_{i=1}^m\lambda_i\mathtt{D}_i$, the weighted sum of digits computed via the linear method. We recover an image by applying $\mathtt{SynthesizeImage}$ to the output.

Examples of running this procedure with 10 given reference 4's are shown in Figure \ref{fig : mnist rebuilds}. The first two columns correspond to the original digit before and after being occluded. The second and third columns are the reconstruction using the LBCM and two different base digits (see Appendix \ref{sec:AdditionalExperiments}). The fifth column uses the $W_2$BCM and the sixth column is the linear reconstruction. Above each column, we report the (average) run time to produce each image, as well as the $W_2^2$ cost between the reconstructed image and the original image as probability measures.  We note that both the $W_2$BCM and LBCM tend to do a reasonable job recovering the digit while the linear method has significant difficulties. This indicates that the non-linear methods employed in the $W_2$BCM and LBCM may be better suited to this task than the naive linear method. We also note that the choice of initialization for the LBCM has an impact on the quality of the reconstruction, which we explore further in Appendix \ref{sec:AdditionalExperiments}. Finally, we remark that while the $W_2$BCM reconstruction has slightly lower average loss than the LBCM, the average run time is significantly greater than for the LBCM method.

\begin{figure}[t!]
    \centering
    \includegraphics[angle=-90, width=0.9\linewidth]{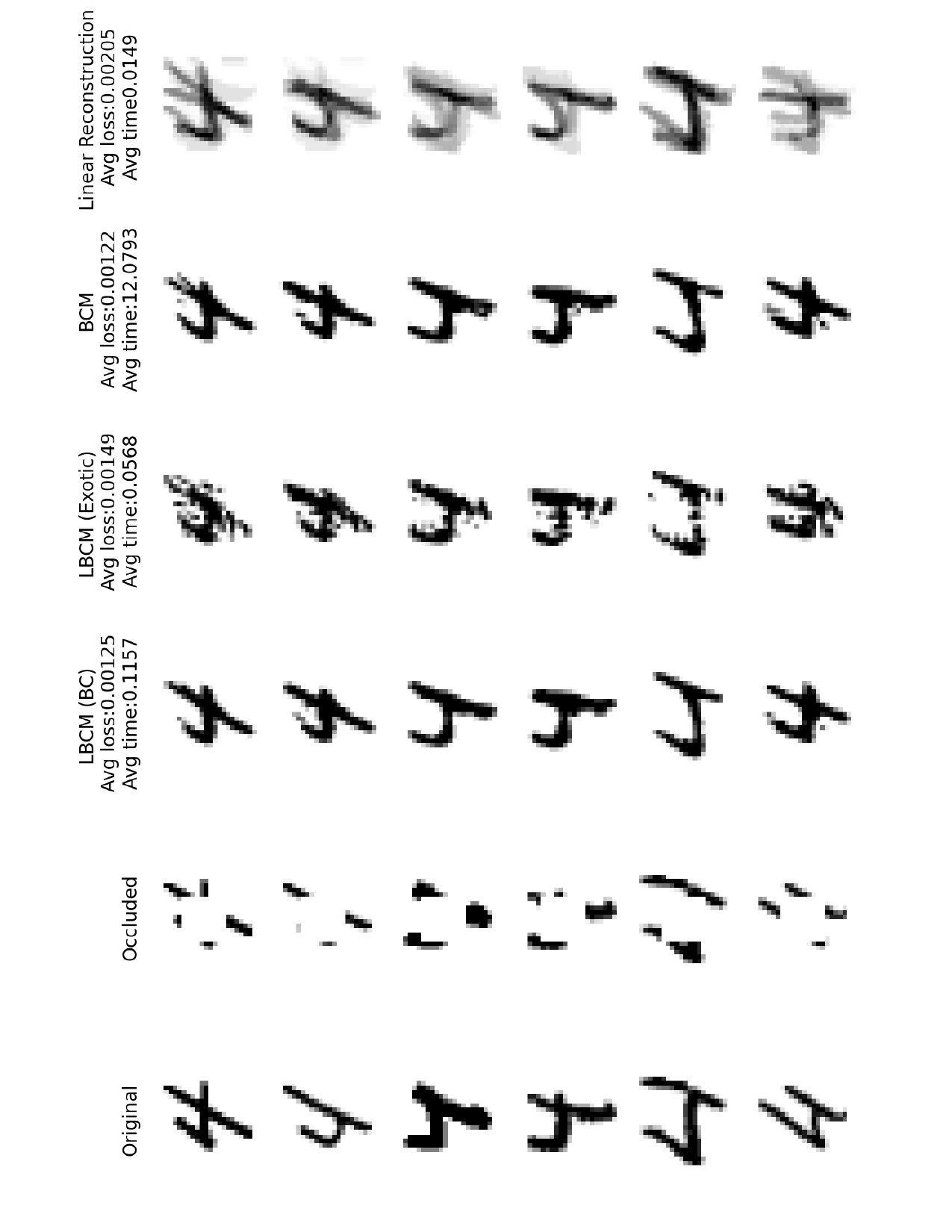}
    \caption{Reconstruction of occluded digits using the $W_{2}$BCM, LBCM, and a linear method. The base measure in Column 4 is the Double Checker in Figure \ref{fig : base measures}.}
    \label{fig : mnist rebuilds}
\end{figure}

\section{Conclusion}

While Theorem \ref{thm : questions negative} establishes that it is not possible to represent an arbitrary optimal transport map as a convex combination of transport maps onto the extreme points, is is interesting to consider whether maps that push onto richer sets (e.g., the boundary) are sufficient.

In \cite{jiang2023algorithms}, quantitative rates of approximation are given for representing 1-dimensional optimal transport maps as linear combinations of polynomials.  Developing similar results in our setting and in higher dimensions is of interest.  This is related to the problem of developing function classes well-suited to approximation of transport maps.

\noindent \textbf{Acknowledgments:}  Matthew Werenski was supported by NSF CCF 1553075, NSF DMS 2309519, and the Camille \& Henry Dreyfus Foundation. Brendan Mallery was supported by NSF CCF 1553075, NSF DMS 2309519, and NSF DMS 2318894. Shuchin Aeron would like to acknowledge partial support by NSF CCF 1553075, NSF DMS 2309519, and from NSF PHY 2019786 (The NSF AI Institute for Artificial Intelligence and Fundamental Interactions, http://iaifi.org/). James M. Murphy would like to acknowledge partial support from NSF DMS 2309519, NSF DMS 2318894, and support from the Camille \& Henry Dreyfus Foundation.

\bibliography{ref}{}
\bibliographystyle{abbrv}
\appendix

\section{Proofs from Section \ref{sec:BasicProperties}}
\label{sec:ProofsSection3}

\subsection{Proof of Proposition \ref{prop:LBCM_VariationalForm}}
For any $x$ such that $T_{\mu_{0}}^{\mu_{i}}(x)$ exists for each $i=1,\dots,m$, the pointwise variational problem 

\begin{equation*}
\argmin_{y\in\mathbb{R}^{d}}\sum_{i=1}^{m}\lambda_{i}\|y-T_{\mu_{0}}^{\mu_{i}}(x)\|^{2}
\end{equation*}has unique solution $ \sum_{i=1}^m \lambda_i T_{\mu_{0}}^{\mu_{i}}(x)$.  In particular, since the transport maps exist $\mu_{0}-$almost everywhere and are necessarily in $L^{2}(\mu_{0})$,

\begin{equation*}
\sum_{i=1}^m \lambda_i T_{\mu_{0}}^{\mu_{i}}=\argmin_{f\in L^{2}(\mu_{0})}\sum_{i=1}^{m}\lambda_{i}\|f-T_{\mu_{0}}^{\mu_{i}}\|_{L^{2}(\mu_{0})}^{2}.
\end{equation*}We now argue that in fact $\sum_{i=1}^m \lambda_i T_{\mu_{0}}^{\mu_{i}}$ is equal to $T_{\mu_{0}}^{\left( \sum_{i=1}^m \lambda_i T_{\mu_{0}}^{\mu_{i}} \right )\#\mu_{0}}$.  Notice that $\sum_{i=1}^m \lambda_i T_{\mu_{0}}^{\mu_{i}}$ is the convex combination of 2-Wasserstein optimal transport maps, each of which is by Brenier's theorem the gradient of a convex function.  It follows that $\sum_{i=1}^m \lambda_i T_{\mu_{0}}^{\mu_{i}}$ is itself the gradient of a convex function.  Moreover, $\sum_{i=1}^m \lambda_i T_{\mu_{0}}^{\mu_{i}}$ pushes $\mu_{0}$ onto $\left ( \sum_{i=1}^m \lambda_i T_{\mu_{0}}^{\mu_{i}} \right )\#\mu_0$.  Again by Brenier's theorem, it follows that 

\begin{equation*} \sum_{i=1}^m \lambda_i T_{\mu_{0}}^{\mu_{i}} = T_{\mu_{0}}^{\left ( \sum_{i=1}^m \lambda_i T_{\mu_{0}}^{\mu_{i}} \right )\#\mu_0}.
\end{equation*}
It follows that $\left(\sum_{i=1}^m \lambda_i T_{\mu_{0}}^{\mu_{i}} \right )\#\mu_0$ solves (\ref{eqn:LBCM-Barycentric-Formulation}).

\subsection{Proof of Proposition \ref{thm:LBCM_Equiv_BCM}}
\label{subsec:LBCM_Equiv_BCM}

This result relies on the equivalence of the 2-Wasserstein barycenter and the solution to the \textit{multimarginal optimal transport (MMOT)} \cite{agueh2011barycenters}.  Let $\{\mu_{i}\}_{i=1}^{m}$ be reference measures and $\lambda\in\Delta^{m}$.  The MMOT problem solves 

\begin{equation}
\label{eqn:MMOT}
\argmin_{\pi\in\Pi(\mu_{1},\dots,\mu_{m})}\int_{\mathbb{R}^{d}\times\dots\times\mathbb{R}^{d}}\sum_{i<j}\lambda_{i}\lambda_{j}\|x_{i}-x_{j}\|^{2}d\pi(x_{1},\dots,x_{m})
\end{equation}where $\Pi(\mu_{1},\dots,\mu_{m})$ is the space of measures in $\mathbb{R}^{d}\times\dots\times\mathbb{R}^{d}$ with $i^{th}$ $d$-dimensional marginal equal to $\mu_{i}$.  The MMOT problem can be related to 2-Wasserstein barycenters via the averaging operator $M_{\lambda}:\mathbb{R}^{d}\times\dots\times\mathbb{R}^{d}\rightarrow\mathbb{R}^{d}$ given by $M_{\lambda}(x_{1},\dots,x_{m})\triangleq\sum_{i=1}^{m}\lambda_{i}x_{i}$.  Indeed, 

\begin{align*}
    &\int_{\mathbb{R}^{d}\times\dots\times\mathbb{R}^{d}}\sum_{i<j}\lambda_{i}\lambda_{j}\|x_{i}-x_{j}\|^{2}d\pi(x_{1},\dots,x_{m})\\
    =&\int_{\mathbb{R}^{d}\times\dots\times\mathbb{R}^{d}}\sum_{i=1}^{m}\lambda_{i}\|x_{i}-M_{\lambda}(x_{1},\dots,x_{m})\|^{2}d\pi(x_{1},\dots,x_{m}). 
\end{align*}
\begin{proposition}(Proposition 4.2 in \cite{agueh2011barycenters} and Proposition 3.1.2 in \cite{panaretos2020invitation})
    Let $\{\mu_{i}\}_{i=1}^{m}\subset\mathcal{P}(\mathbb{R}^{d})$ and $\lambda\in\Delta^{m}$.  Then $\mu_{*}$ is a 2-Wasserstein barycenter for $\{\mu_{i}\}_{i=1}^{m}$ with coefficieints $\lambda$ if and only if there exists a solution $\pi_{*}$ to (\ref{eqn:MMOT}) such that $\mu_{*}=M_{\lambda}\#\pi_{*}$. 
\end{proposition}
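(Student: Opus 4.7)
The plan is to show that the two optimization problems have the same optimal value, and then read off the ``iff'' by tracking equality cases. The workhorse is the identity already displayed in the excerpt,
\begin{equation*}
\sum_{i<j}\lambda_i\lambda_j\|x_i-x_j\|^2 = \sum_{i=1}^m\lambda_i\|x_i-M_\lambda(x_1,\ldots,x_m)\|^2,
\end{equation*}
together with the strictly convex pointwise inequality $\sum_i\lambda_i\|x_i-M_\lambda(x)\|^2\le\sum_i\lambda_i\|x_i-z\|^2$ for any $z\in\mathbb{R}^d$, with equality iff $z=M_\lambda(x)$, which is just the fact that the $\lambda$-weighted mean is the unique minimizer of the weighted squared distances.

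For the inequality MMOT $\ge$ barycenter cost, I would fix any $\pi\in\Pi(\mu_1,\ldots,\mu_m)$, set $\bar\nu\triangleq M_\lambda\#\pi$, and observe that for each $i$ the map $(x_1,\ldots,x_m)\mapsto(x_i,M_\lambda(x))$ pushes $\pi$ forward to a coupling of $\mu_i$ and $\bar\nu$, whose transportation cost therefore dominates $W_2^2(\mu_i,\bar\nu)$. Weighting by $\lambda_i$, summing, and applying the identity on the left yields
\begin{equation*}
\int\sum_{i<j}\lambda_i\lambda_j\|x_i-x_j\|^2 d\pi \;\ge\; \sum_{i=1}^m\lambda_i W_2^2(\bar\nu,\mu_i) \;\ge\; \inf_{\nu}\sum_{i=1}^m\lambda_i W_2^2(\nu,\mu_i),
\end{equation*}
and taking the infimum over $\pi$ gives one inequality. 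For the reverse, start from any $\nu$ and optimal bilateral couplings $\gamma_i\in\Pi(\nu,\mu_i)$, disintegrate $\gamma_i(dz,dx_i)=K_i(z,dx_i)d\nu(z)$, and glue them into $\pi(dx_1\cdots dx_m)=\int\prod_i K_i(z,dx_i)d\nu(z)$, which belongs to $\Pi(\mu_1,\ldots,\mu_m)$. Applying the pointwise inequality with $z$ playing the role of the auxiliary point and integrating against the glued coupling gives that the MMOT cost of $\pi$ is bounded by $\sum_i\lambda_i W_2^2(\nu,\mu_i)$; taking the infimum over $\nu$ closes the loop.

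Combining the two bounds gives equality of the minimal values. For the ``iff'' statement, the $(\Leftarrow)$ direction is immediate from the first chain of inequalities applied at an optimal $\pi_*$: the candidate $\bar\nu=M_\lambda\#\pi_*$ must attain the barycentric infimum. For $(\Rightarrow)$, start from a barycenter $\mu_*$, apply the gluing construction to optimal $\gamma_i\in\Pi(\mu_*,\mu_i)$, and verify both that the resulting $\pi_*$ is MMOT-optimal and that $M_\lambda\#\pi_*=\mu_*$.

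The main obstacle is the last verification. Showing $M_\lambda\#\pi_*=\mu_*$ is the delicate step: it ultimately rests on the first-order/fixed-point condition $\sum_i\lambda_i T_{\mu_*}^{\mu_i}=\mathrm{id}$ holding $\mu_*$-a.e., which presupposes enough regularity on $\mu_*$ (for instance, absolute continuity, so that Brenier maps exist) to even make sense of the transport maps. Without such regularity, one has to argue with couplings rather than maps and invoke a cyclical monotonicity / uniqueness of the MMOT plan argument, as is done in the cited references \cite{agueh2011barycenters,panaretos2020invitation}. The measure-theoretic gluing via disintegration is also technical but standard in the Polish-space setting and does not pose a conceptual difficulty.
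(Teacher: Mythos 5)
Your architecture coincides with the paper's own proof: the same identity converting the pairwise MMOT cost into the $\lambda$-weighted variance around $M_\lambda$, the same pushforward $(x_1,\dots,x_m)\mapsto(x_i,M_\lambda(x_1,\dots,x_m))$ establishing MMOT cost $\geq$ barycenter cost, and the same gluing of optimal two-marginal couplings for the reverse inequality. Up through the equality of the two optimal values and the $(\Leftarrow)$ direction, your argument is complete and correct.

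The gap is exactly the step you flag, and your diagnosis of it is wrong. You claim that verifying $M_\lambda\#\pi_*=\mu_*$ for the glued plan ultimately rests on the fixed-point condition $\sum_i\lambda_i T_{\mu_*}^{\mu_i}=\mathrm{id}$ holding $\mu_*$-a.e., hence presupposes regularity of $\mu_*$, or else requires a cyclical monotonicity / uniqueness argument that you defer to the references. Neither is needed --- and since the proposition assumes no regularity on $\mu_*$ at all, a proof routed through Brenier maps would not even establish the stated result. The equality-case bookkeeping you already set up closes the argument at the level of couplings: glue the optimal $\gamma_i\in\Pi(\mu_*,\mu_i)$ into $\eta$ on $(\mathbb{R}^d)^{m+1}$ with projection $\pi\in\Pi(\mu_1,\dots,\mu_m)$, and write $\sum_i\lambda_i W_2^2(\mu_*,\mu_i)=\int\sum_i\lambda_i\|x_i-z\|^2\,d\eta\geq\int\sum_i\lambda_i\|x_i-M_\lambda(x)\|^2\,d\pi\geq\sum_i\lambda_i W_2^2(M_\lambda\#\pi,\mu_i)$, where the first inequality follows from the bias--variance decomposition $\sum_i\lambda_i\|x_i-z\|^2=\sum_i\lambda_i\|x_i-M_\lambda(x)\|^2+\|z-M_\lambda(x)\|^2$ and is an equality if and only if $z=M_\lambda(x)$ $\eta$-a.e. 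If $\mu_*\neq M_\lambda\#\pi$, that a.e.\ identity must fail on a set of positive $\eta$-measure, the first inequality is strict, and $M_\lambda\#\pi$ strictly beats $\mu_*$ in barycentric cost --- contradicting optimality of $\mu_*$. Hence $z=M_\lambda(x)$ $\eta$-a.e., so $\mu_*=M_\lambda\#\pi$, and rerunning the chain as a string of equalities, combined with the identity of optimal values you already proved, shows this $\pi$ is MMOT-optimal. This is precisely how the paper finishes; no regularity hypotheses, no transport maps, and no cyclical monotonicity enter. You had every ingredient in hand and stopped one observation short.
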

\begin{proof}
 Let $\pi\in\Pi(\mu_{1},\dots,\mu_{m})$ be arbitrary and let $\mu=M_{\lambda}\#\pi$.  Then the map $S_{i}(x_{1},\dots,x_{m})=(x_{i},M(x_{1},\dots,x_{m}))$ generates a coupling between $\mu_{i}$ and $\mu$ via $S_{i}\#\pi$.  Thus
 \begin{align*}
     \int_{(\mathbb{R}^{d})^{m}}\|x_{i}-M_{\lambda}(x_{1},\dots,x_{m})\|^{2}d\pi(x_{1},\dots,x_{m})\ge W_{2}^{2}(\mu,\mu_{i}).
 \end{align*}
Summing over $\lambda$, we see
\begin{align}
\min_{\pi\in\Pi(\mu_{1},\dots,\mu_{m})}\int_{(\mathbb{R}^{d})^{m}}\sum_{i=1}^{m}\lambda_{i}\|x_{i}-M_{\lambda}(x_{1},\dots,x_{m})\|^{2}d\pi(x_{1},\dots,x_{m})\ge \min_{\mu\in\mathcal{P}(\mathbb{R}^{d})}\sum_{i=1}^{m}\lambda_{i}W_{2}^{2}(\mu,\mu_{i}).\label{eqn:W2<=MMOT}
\end{align}
 On the other hand, let $\mu\in\mathcal{P}(\mathbb{R}^{d})$ be arbitrary and for each $i=1,\dots,m$ let $\pi^{i}$ be the optimal 2-Wasserstein coupling between $\mu$ and $\mu_{i}$. We may glue the $\pi^{i}$ via their common marginal to get a measure $\eta$ on $(\mathbb{R}^{d})^{m+1}$ with marginals $\mu_{1},\dots,\mu_{m},\mu$ and its projection $\pi$ onto the first $m$ $d$-dimensional marginals is an element of $\Pi(\mu_{1},\dots,\mu_{m})$.  We now note 
\begin{align*}
    &\sum_{i=1}^{m}\lambda_{i}W_{2}^{2}(\mu,\mu_{i})\\
    =&\sum_{i=1}^{m}\lambda_{i}\int_{\mathbb{R}^{d}\times\mathbb{R}^{d}}\|x_{i}-y\|^{2}d\pi^{i}(y,x_{i})\\=&\int_{(\mathbb{R}^{d})^{m+1}}\sum_{i=1}^{m}\lambda_{i}\|x_{i}-y\|^{2}d\eta(x_{1},\dots,x_{m},y)\\
    \ge & \int_{(\mathbb{R}^{d})^{m+1}}\sum_{i=1}^{m}\lambda_{i}\|x_{i}-M_{\lambda}(x_{1},\dots,x_{m})\|^{2}d\eta(x_{1},\dots,x_{m},y)\\
    =&\int_{(\mathbb{R}^{d})^m}\sum_{i=1}^{m}\lambda_{i}\|x_{i}-M_{\lambda}(x_{1},\dots,x_{m})\|^{2}d\pi(x_{1},\dots,x_{m}),\\
\end{align*}
with equality if and only if $y=M_{\lambda}(x)$ $\eta$-a.e., so that $\mu=M_{\lambda}\#\pi$.  This establishes 

\begin{equation}
\label{eqn:W2=MMOT}
\min_{\pi\in\Pi(\mu_{1},\dots,\mu_{m})}\int_{(\mathbb{R}^{d})^{m}}\sum_{i=1}^{m}\lambda_{i}\|x_{i}-M_{\lambda}(x_{1},\dots,x_{m})\|^{2}d\pi(x_{1},\dots,x_{m})=\min_{\mu\in\mathcal{P}(\mathbb{R}^{d})}\sum_{i=1}^{m}\lambda_{i}W_{2}^{2}(\mu,\mu_{i}).
\end{equation}

Now, if a barycenter $\mu_{*}$ does not equal $M_{\lambda}\#\pi$ with $\pi$ as above, then 
\begin{align*}
    &\sum_{i=1}^{m}\lambda_{i}W_{2}^{2}(\mu_{*},\mu_{i})\\
    >&\int_{(\mathbb{R}^{d})^{m}}\sum_{i=1}^{m}\lambda_{i}\|x_{i}-M_{\lambda}(x_{1},\dots,x_{m})\|^{2}d\pi(x_{1},\dots,x_{m})\\
    \ge &\sum_{i=1}^{m}\lambda_{i}W_{2}^{2}(M_{\lambda}\#\pi,\mu_{i}),
\end{align*}which contradicts optimality of $\mu_{*}$. 
 Now, let $\pi_{*}$ denote this choice such that $M_{\lambda}\#\pi_{*}=\mu_{*}$.  Then by (\ref{eqn:W2=MMOT}) $\pi_{*}$ must be optimal for the MMOT problem:

 \begin{align*}
    &\int_{(\mathbb{R}^{d})^{m}}\sum_{i=1}^{m}\lambda_{i}\|x_{i}-M_{\lambda}(x_{1},\dots,x_{m})\|^{2}d\pi_{*}(x_{1},\dots,x_{m})\\
     =&\sum_{i=1}^{m}\lambda_{i}W_{2}^{2}(\mu_{*},\mu_{i})\\
     =&\min_{\mu\in\mathcal{P}(\mathbb{R}^{d})}\sum_{i=1}^{m}\lambda_{i}W_{2}^{2}(\mu,\mu_{i})\\
=&\min_{\pi\in\Pi(\mu_{1},\dots,\mu_{m})}\int_{(\mathbb{R}^{d})^{m}}\sum_{i=1}^{m}\lambda_{i}\|x_{i}-M_{\lambda}(x_{1},\dots,x_{m})\|^{2}d\pi(x_{1},\dots,x_{m})
 \end{align*}
 
 Conversely, if $\pi_{*}$ is an optimal coupling for the MMOT problem, then 
\begin{align*}
    &\sum_{i=1}^{m}\lambda_{i}W_{2}^{2}(M_{\lambda}\#\pi_{*},\mu_{i})\\
    \le &\sum_{i=1}^{m}\lambda_{i}
    \int_{(\mathbb{R}^{d})^{m}}\sum_{i=1}^{m}\|x_{i}-M_{\lambda}(x_{1},\dots,x_{m})\|^{2}d\pi_{*}(x_{1},\dots,x_{m})\\
    =&\min_{\pi\in\Pi(\mu_{1},\dots,\pi_{m})}\int_{(\mathbb{R}^{d})^{m}}\sum_{i=1}^{m}\|x_{i}-M_{\lambda}(x_{1},\dots,x_{m})\|^{2}d\pi(x_{1},\dots,x_{m})\\
    =&\min_{\mu\in\mathcal{P}(\mathbb{R}^{d})}\sum_{i=1}^{m}\lambda_{i}W_{2}^{2}(\mu,\mu_{i}),
\end{align*}so that $M_{\lambda}\#\pi_{*}$ is necessarily a 2-Wasserstein barycenter.

\end{proof}

With this result, we can now establish the equivalence of the $W_{2}$BCM and LBCM for compatible measures.\\

\begin{proof}[\textbf{Proof of Proposition \ref{thm:LBCM_Equiv_BCM}}]  It is enough to show that if $\{\mu_{i}\}_{i=0}^{m}$ are compatible, then the 2-Wasserstein barycenter of $\{\mu_{i}\}_{i=1}^{m}$ with coefficients $\lambda\in\Delta^{m}$ is $\left(\sum_{i=1}^{m}\lambda_{i}T_{\mu_{0}}^{\mu_{i}}\right)\#\mu_{0}$.  Note that $\pi_{*}=(T_{\mu_{0}}^{\mu_{1}},T_{\mu_{0}}^{\mu_{2}},\dots,T_{\mu_{0}}^{\mu_{m}})\#\mu_{0}\in\Pi(\mu_{1},\mu_{2},\dots,\mu_{m})$ is such that 

\begin{align*}
    &\int_{(\mathbb{R}^{d})^{m}}\|x_{i}-x_{j}\|^{2}d\pi_{*}(x_{1},\dots,x_{m})\\
    =&\int_{\mathbb{R}^{d}}\|T_{\mu_{0}}^{\mu_{i}}(x_{0})-T_{\mu_{0}}^{\mu_{j}}(x_{0})\|^{2}d\mu_{0}(x_{0})\\
    =&\int_{\mathbb{R}^{d}}\|T_{\mu_{0}}^{\mu_{i}}\circ T_{\mu_{j}}^{\mu_{0}}(x_{j})-T_{\mu_{0}}^{\mu_{j}}\circ T_{\mu_{j}}^{\mu_{0}}(x_{j})\|^{2}d\mu_{j}(x_{j})\\
    =&\int_{\mathbb{R}^{d}}\|T_{\mu_{j}}^{\mu_{i}}(x_{j})-x_{j}\|^{2}d\mu_{j}(x_{j})\\
    =&W_{2}^{2}(\mu_{i},\mu_{j}).
\end{align*}

Noting that 

\begin{align*}
\min_{\pi\in\Pi(\mu_{1},\dots,\mu_{m})}\int_{(\mathbb{R}^{d})^{m}}\sum_{i<j}\lambda_{i}\lambda_{j}\|x_{i}-x_{j}\|^{2}d\pi(x_{1},\dots,x_{m})\ge \sum_{i<j}\lambda_{i}\lambda_{j}W_{2}^{2}(\mu_{i},\mu_{j}),
\end{align*}
we see that $\pi_{*}$ is optimal for the MMOT problem, and thus $M_{\lambda}\#\pi_{*}$ is the 2-Wasserstein barycenter of $\{\mu_{i}\}_{i=1}^{m}$.  The result follows by noting $M_{\lambda}\#\pi_{*}=\left(\sum_{i=1}^{m}\lambda_{i} T_{\mu_{0}}^{\mu_{i}}\right)\#\mu_{0}$. \end{proof}

\section{Estimation of OT Maps Via Entropy-Regularized OT}
\label{sec:EOT}

The \textit{entropy-regularized optimal transport (EOT)} problem is:
\begin{equation}\label{eq : entropic problem}
    S_\varepsilon(\mu,\nu) \triangleq \min_{\pi \in \Pi(\mu,\nu)} \int \frac{1}{2}\norm{x-y}_2^2 d\pi(x,y) + \varepsilon  \int \log \left(\frac{d\pi}{d\mu\otimes d\nu}\right)(x,y) d\pi(x,y)
\end{equation}
where $\varepsilon > 0$.

\begin{theorem}(\cite{csiszar1975divergence, mena2019statistical})\label{thm : eot master theorem}
    Assume that $\mu$ and $\nu$ have finite second moments. Then the following hold. 
    \begin{enumerate}
        \item There exists a measure $\pi_\varepsilon$ which achieves the minimum in \eqref{eq : entropic problem}.
        \item The dual problem to \eqref{eq : entropic problem} is given by
        \begin{align} \label{eq : dual entropic problem}
            S_\varepsilon(\mu,\nu) &= \max_{f,g \in L^1(\mu) \times L^1(\nu)}  \Psi^{\mu,\nu}_\varepsilon(f,g), \\
            \Psi^{\mu,\nu}_\varepsilon(f,g) &\triangleq \int f(x) d\mu(x) + \int g(y) d\nu(y) \notag \\
            &\hspace{0.5cm} - \varepsilon \iint \expe\left ( f(x) + g(y) - \frac{1}{2}\norm{x-y}_2^2 \right ) d\mu(x)d\nu(y) + \varepsilon. \label{eq : dual objective}
        \end{align}
        \item There exists an optimal pair $f_\varepsilon,g_\varepsilon$ which are unique up to a.e. equality and translation by an additive constant, that is ($f_\varepsilon + a, g_\varepsilon - a)$ is also optimal for any $a \in \mathbb{R}$. 
        \item The primal-dual optimality relation is given by the following holding for $\mu\otimes\nu$ almost every ($x,y)$:
        \begin{equation} \label{eq : primal dual}
            \frac{d\pi_\varepsilon}{d\mu \otimes d\nu}(x,y) = \expe(f_\varepsilon(x) + g_\varepsilon(y) - \frac{1}{2}\norm{x-y}_2^2). 
        \end{equation}
        \item The optimal pair $f_\varepsilon,g_\varepsilon$ can be taken such that
        \begin{align*}
            f_\varepsilon(x) &= -\varepsilon \log \left ( \int \expe(g_\varepsilon(y) - \frac{1}{2}\norm{x-y}_2^2) d\nu(y) \right ) \ \forall x \in \mathbb{R}^d, \label{eq : f relation} \\
            g_\varepsilon(y) &= -\varepsilon \log \left ( \int \expe(f_\varepsilon(x) - \frac{1}{2}\norm{x-y}_2^2) d\mu(x) \right ) \ \forall y \in \mathbb{R}^d. 
        \end{align*}
    \end{enumerate}
\end{theorem}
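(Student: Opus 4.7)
The plan is to follow the standard convex-duality playbook for entropy-regularized optimal transport, essentially reproducing the arguments of Csisz\'ar and Mena-Niles-Weed. The problem is naturally viewed as strictly convex minimization of a relative entropy over the set $\Pi(\mu,\nu)$, and every one of items (1)--(5) will fall out of primal existence, Fenchel-Rockafellar duality, and first-order conditions; the finite second-moment hypothesis is used precisely to guarantee that the product coupling $\mu\otimes\nu$ has finite objective and hence that the primal value is finite.

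For item (1), I would first rewrite (\ref{eq : entropic problem}) in relative-entropy form by introducing the reference measure $dm(x,y) = \expe(-\tfrac{1}{2}\|x-y\|_2^2)\,d\mu(x)\,d\nu(y)$, so that the objective equals $\varepsilon\,\mathrm{KL}(\pi\|m)$ plus a finite additive constant. Existence then follows from the direct method: $\Pi(\mu,\nu)$ is tight (its elements have fixed marginals) and therefore weakly precompact, while $\pi\mapsto \mathrm{KL}(\pi\|m)$ is weakly lower semicontinuous and strictly convex. Any minimizing sequence therefore admits a weakly convergent subsequence whose limit is the unique primal optimizer $\pi_\varepsilon$.

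For items (2) and (4), I would apply Fenchel-Rockafellar duality (equivalently, the Donsker-Varadhan variational characterization of KL) to this marginal-constrained KL problem. The marginal constraints are linear in $\pi$, so the Lagrangian dual yields $\Psi^{\mu,\nu}_\varepsilon(f,g)$ in (\ref{eq : dual objective}) after eliminating $\pi$ via the explicit softmax minimizer $d\pi/dm = \expe(f(x)+g(y))$. Strong duality holds because the primal value is finite and the relative-entropy functional is well-behaved under linear constraints; reading off the formula for $d\pi_\varepsilon/dm$ together with the definition of $dm$ yields exactly the primal-dual optimality relation (\ref{eq : primal dual}).

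For items (3) and (5), uniqueness of the dual maximizer up to the gauge $(f+a,g-a)$ follows because any two maximizers $(f,g)$ and $(f',g')$ induce couplings via (\ref{eq : primal dual}) that must both equal $\pi_\varepsilon$; dividing the two formulas forces $f-f'$ to be a constant $\mu$-a.e.\ and $g'-g$ the same constant $\nu$-a.e. Existence of a maximizer follows either by normalizing the gauge (e.g.\ imposing $\int f\,d\mu=0$) and invoking standard a priori bounds on softmin-regularized potentials, or via convergence of Sinkhorn iterations. The Schr\"odinger equations in item (5) are the first-order optimality conditions: setting the functional derivative of $\Psi^{\mu,\nu}_\varepsilon$ with respect to $f$ (resp.\ $g$) equal to zero at the optimum produces exactly the stated integral fixed-point formulas, which in turn serve to extend $f_\varepsilon$ and $g_\varepsilon$ to well-defined functions on all of $\mathbb{R}^d$. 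The main obstacle is verifying the absence of a duality gap and the $L^1$ integrability of the dual potentials when $\mu,\nu$ have only finite second moments; this is where the quadratic ground cost combined with the second-moment hypothesis is used crucially, via the standard estimate $|f_\varepsilon(x)| \lesssim 1 + \|x\|_2^2$ obtained from the softmin representation.
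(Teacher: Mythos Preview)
The paper does not supply a proof of this theorem; it states the result and immediately defers to the literature, writing ``Point 1--4 are well-known and can be found in \cite{csiszar1975divergence}. Point 5 can be found in \cite{mena2019statistical}.'' There is therefore no in-paper argument to compare against.

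Your sketch is the standard route and is consistent with the cited sources: rewrite the primal as a KL minimization against the Gibbs kernel $\expe(-\tfrac12\|x-y\|_2^2)\,d\mu\,d\nu$, obtain existence and uniqueness of $\pi_\varepsilon$ by the direct method and strict convexity, derive the dual and the primal--dual relation via Fenchel--Rockafellar/Donsker--Varadhan, and read off the Schr\"odinger system as the first-order optimality conditions. The one place you should be slightly more careful is the existence of dual maximizers in $L^1(\mu)\times L^1(\nu)$ under only a second-moment assumption: the softmin bound $|f_\varepsilon(x)|\lesssim 1+\|x\|_2^2$ you quote is correct but relies on first fixing a representative $g_\varepsilon$ with controlled growth, which in practice is obtained from the Schr\"odinger equations themselves (this is exactly the role of item (5) as treated in \cite{mena2019statistical}). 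With that caveat, your outline matches the arguments the paper is citing.
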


Point 1-4 are well-known and can be found in \cite{csiszar1975divergence}.  Point 5 can be found in \cite{mena2019statistical}. We will refer to $f_\varepsilon,g_\varepsilon$, the maximizers in the dual problem \eqref{eq : dual entropic problem}, as the optimal potentials. Importantly in point 5, the equality is for all $x$ (or $y$) instead of just almost every $x$ (or $y$). We will only consider potentials which satisfy point 5.

Unlike in the unregularized OT problem in which an optimal transport map realizing (\ref{eqn:W2}) exists when $\mu$ is absolutely continuous, there is no transport map on whose graph $\pi_{\varepsilon}$ concentrates.  However, we can construct one from $f_{\varepsilon}, g_{\varepsilon}$:

\begin{definition} \label{def : entropic map}
    The \textit{entropy-regularized map} $T_{\mu}^{\nu,\varepsilon}$ for $\mu$ and $\nu$ is defined as
    \begin{equation} \label{eq : entropic map}
        T_{\mu}^{\nu, \varepsilon}(x) \triangleq \int y \expe(f_\varepsilon(x) +  g_\varepsilon(y) - \frac{1}{2}\norm{x-y}_2^2) d\nu(y).
    \end{equation}

\end{definition}

When $\mu^n$ and $\nu^n$ are empirical measures of $\mu$ and $\nu$ supported on points $X_{1},\dots,X_{n}$ and $Y_{1},\dots,Y_{n}$ respectively, we get the plug-in estimate of $T_{\mu}^{\nu, \varepsilon}$ 
        \begin{equation} \label{eq : sample entropic map}
            T_{\mu}^{\nu,n,\varepsilon}(x) \triangleq \sum_{j=1}^n \expe(f_{n,\varepsilon}(x) + g_{n,\varepsilon}(Y_j) - \frac{1}{2}\norm{x - Y_j}_2^2)Y_j,
        \end{equation}where $f_{n, \varepsilon},g_{n, \varepsilon}$ are the (extended) entropic dual potentials for the measures $\mu^{n},\nu^{n}$.

\section{Proofs from Section \ref{sec:SynthesisAnalysis}}

\subsection{Proof of Proposition \ref{prop : synthesis}}
We will require the following bound for estimating $\mathbb{E}[W_2(\mu,\mu^n)]$ on a compact set:
\begin{theorem}(Theorem 1, \cite{fournier2015rate})\label{thm:wasserstein law of large numbers}
Let $\mu\in \mathcal{P}(\Omega)$, where $\Omega\subset \mathbb{R}^d$ is a compact set, and let $\mu^n$ denote the empirical measure on $n$ samples from $\mu.$ Then:
\begin{equation*}
\mathbb{E}[W_2(\mu,\mu^n)]\leq C_{\Omega,d} \begin{cases}
n^{-\frac{1}{4}} & \text{if } d=1,2,3 \\
n^{-\frac{1}{4}}\sqrt{\log(n)} & \text{if }  d=4 \\
n^{-\frac{1}{d}} & \text{if } d\ge 5
\end{cases}
\end{equation*}
where $C_{\Omega,d}$ is a constant depending on the dimension and diameter of $\Omega$.
\end{theorem}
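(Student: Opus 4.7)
The plan is to prove this by the classical multiscale (dyadic partitioning) argument that underlies Fournier and Guillin's result. First I would reduce to the case $\Omega=[0,1]^d$ by scaling: if $\Omega$ has diameter $D$, then under the rescaling $x\mapsto x/D$ both sides scale homogeneously (in $D$ for $W_2$), absorbing the geometry of $\Omega$ into the final constant $C_{\Omega,d}$. By Jensen's inequality it suffices to control $\mathbb{E}[W_2^2(\mu,\mu^n)]$ and then take a square root, which dictates why the rates for $W_1$ would halve in exponent.

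Next I would set up the dyadic decomposition. For each $k\ge 0$, partition $[0,1]^d$ into the collection $\mathcal{Q}_k$ of $2^{kd}$ axis-aligned half-open cubes of side $2^{-k}$, each of diameter $\sqrt{d}\,2^{-k}$. The key transport-cost estimate I would prove (by constructing an explicit coupling that rebalances mass between the two measures level by level, matching as much mass as possible inside each cube at each scale) is
\begin{equation*}
W_2^2(\mu,\mu^n) \;\le\; C_d \sum_{k\ge 0} 4^{-k} \sum_{Q\in\mathcal{Q}_k} |\mu(Q)-\mu^n(Q)|,
\end{equation*}
which is the standard multiscale bound: within a cube of side $2^{-k}$ any transport costs at most $d\cdot 4^{-k}$ per unit mass, and the total mass needing to be moved across cubes at scale $k$ is controlled by the discrepancies $|\mu(Q)-\mu^n(Q)|$.

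Then I would take expectations and apply the pointwise concentration estimate $\mathbb{E}|\mu(Q)-\mu^n(Q)|\le \min\bigl(\mu(Q),\sqrt{\mu(Q)/n}\bigr)$, which follows from $\mathrm{Var}(\mu^n(Q))=\mu(Q)(1-\mu(Q))/n$ together with Cauchy--Schwarz. Using $\sqrt{\mu(Q)}$ and Cauchy--Schwarz across the $2^{kd}$ cubes at level $k$ gives $\sum_{Q\in\mathcal{Q}_k}\sqrt{\mu(Q)}\le 2^{kd/2}$. Combining these two bounds yields
\begin{equation*}
\mathbb{E}[W_2^2(\mu,\mu^n)] \;\lesssim\; \sum_{k=0}^{K} 4^{-k}\,\frac{2^{kd/2}}{\sqrt{n}} \;+\; \sum_{k>K} 4^{-k},
\end{equation*}
for any truncation $K$, where the second sum uses the trivial bound $\sum_Q|\mu(Q)-\mu^n(Q)|\le 2$. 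Optimizing the choice of $K=K(n,d)$ produces the three regimes: for $d\le 3$ the series in $k$ converges without truncation, giving $\mathbb{E}[W_2^2]\lesssim n^{-1/2}$, hence $\mathbb{E}[W_2]\lesssim n^{-1/4}$; for $d=4$ the geometric ratio equals one and truncating at $K\asymp \log_2 n$ produces the extra $\sqrt{\log n}$ factor; for $d\ge 5$ the optimum balances $4^{-K}$ against $2^{K(d-4)/2}/\sqrt{n}$, giving $2^{-K}\asymp n^{-1/d}$ and $\mathbb{E}[W_2]\lesssim n^{-1/d}$.

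The main obstacle in this plan is establishing the multiscale transport bound rigorously: the inductive construction of a coupling that telescopes cleanly across scales requires some care (one typically proves it by induction on $k$, splitting each parent cube's discrepancy into the contributions of its children and transporting any imbalance at cost $\lesssim 4^{-k}$ per unit of residual mass). The remainder of the argument is bookkeeping to handle the three dimensional regimes and to absorb the diameter and dimensional dependence into the constant $C_{\Omega,d}$.
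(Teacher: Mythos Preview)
The paper does not give its own proof of this statement: it is quoted verbatim as a result from \cite{fournier2015rate} and used as a black box in the proof of Proposition~\ref{prop : synthesis}. Your proposal is a correct outline of the original multiscale (dyadic-partition) argument of Fournier and Guillin, so in that sense you are reproducing the cited proof rather than diverging from anything the present paper does. The key steps you identify---the telescoping transport bound $W_2^2(\mu,\mu^n)\le C_d\sum_{k\ge 0}4^{-k}\sum_{Q\in\mathcal{Q}_k}|\mu(Q)-\mu^n(Q)|$, the variance estimate $\mathbb{E}|\mu(Q)-\mu^n(Q)|\le\sqrt{\mu(Q)/n}$, and the truncation at scale $K$ with case analysis on $d$---are all correct and yield exactly the stated rates.
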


\begin{proof}[Proof of Proposition \ref{prop : synthesis}]
Recall that for any measures $\mu_{0}, \mu,\nu$, $W_{2}(\mu,\nu)\le \|T_{\mu_{0}}^{\mu}-T_{\mu_{0}}^{\nu}\|_{L^{2}(\mu_{0})}$.  We now calculate and apply Theorem \ref{thm : pooladian entropic estimation}:
\begin{align}
\mathbb{E}[W_{2}(\nu,\nu^n)]=&\mathbb{E}\left(W_{2}\left(\left(\sum_{i=1}^{m}\lambda_{i}T_{\mu_{0}}^{\mu_{i}}\right)\#\mu_{0}, \left(\sum_{i=1}^{m}\lambda_{i}T_{\mu_{0}}^{\mu_{i},n,\varepsilon}\right)\#\tilde{\mu}_{0}^{n}\right)\right)\notag \\
\le & \mathbb{E}\left(W_{2}\left(\left(\sum_{i=1}^{m}\lambda_{i}T_{\mu_{0}}^{\mu_{i}}\right)\#\mu_{0}, \left(\sum_{i=1}^{m}\lambda_{i}T_{\mu_{0}}^{\mu_{i},n,\varepsilon}\right)\#\mu_{0}\right)\right)\notag
\\ & +\mathbb{E}\left[\mathbb{E}\left(W_{2}\left(\left(\sum_{i=1}^{m}\lambda_{i}T_{\mu_{0}}^{\mu_{i},n,\varepsilon}\right)\#\mu_{0}, \left(\sum_{i=1}^{m}\lambda_{i}T_{\mu_{0}}^{\mu_{i},n,\varepsilon}\right)\#\tilde{\mu}_{0}^{n}\right)\bigg| \sum_{i=1}^m \lambda_i T^{\mu_i,n,\varepsilon}_{\mu_0}\right)\right]\notag \\
\le & \mathbb{E}\left(W_{2}\left(\left(\sum_{i=1}^{m}\lambda_{i}T_{\mu_{0}}^{\mu_{i}}\right)\#\mu_{0}, \left(\sum_{i=1}^{m}\lambda_{i}T_{\mu_{0}}^{\mu_{i},n,\varepsilon}\right)\#\mu_{0}\right)\right)+\mathbb{E}\left[\mathbb{E}\left(C_{\Omega,d} r_{n,d}\bigg| \sum_{i=1}^m \lambda_i T^{\mu_i,n,\varepsilon}_{\mu_0}\right)\right]\label{eqn:conditional} \\
\le & \mathbb{E}\left(\left\|\sum_{i=1}^{m}\lambda_{i}T_{\mu_{0}}^{\mu_{i}}-\sum_{i=1}^{m}\lambda_{i}T_{\mu_{0}}^{\mu_{i},n,\varepsilon}\right\|_{L^{2}(\mu_{0})}\right)+C_{\Omega,d}r_{n,d}\notag \\
\le & \sum_{i=1}^{m}\lambda_{i}\mathbb{E}\|T_{\mu_{0}}^{\mu_{i}}-T_{\mu_{0}}^{\mu_{i},n,\varepsilon}\|_{L^{2}(\mu_{0})}+C_{\Omega,d}r_{n,d}\notag\\
\lesssim &\max_{1\le i\le m}\sqrt{(1+I_{0}(\mu_{0},\mu_{i}))n^{-\frac{(\bar{\alpha} + 1)}{2(d + \bar{\alpha} + 1)}}\log n}+r_{n,d},\notag
\end{align}
where in (\ref{eqn:conditional}), we applied Theorem \ref{thm:wasserstein law of large numbers} to $\mathbb{E}\left(W_{2}\left(\left(\sum_{i=1}^{m}\lambda_{i}T_{\mu_{0}}^{\mu_{i},n,\varepsilon}\right)\#\mu_{0}, \left(\sum_{i=1}^{m}\lambda_{i}T_{\mu_{0}}^{\mu_{i},n,\varepsilon}\right)\#\tilde{\mu}_{0}^{n}\right)\bigg| \sum_{i=1}^m \lambda_i T^{\mu_i,n,\varepsilon}_{\mu_0}\right)$, which applies as $\tilde{\mu}_0^n$ and $T_{\mu_0}^{\mu_i,n,\varepsilon}$ are independent,  and hence $\left(\sum_{i=1}^{m}\lambda_{i}T_{\mu_{0}}^{\mu_{i},n,\varepsilon}\right)\#\tilde{\mu}_{0}^{n}$ is supported on an i.i.d. sample from  $\left(\sum_{i=1}^{m}\lambda_{i}T_{\mu_{0}}^{\mu_{i},n,\varepsilon}\right)\#\mu_{0}$.
\end{proof}

\subsection{Proof of Theorem \ref{thm : estimate of Aij lbcm}}
\begin{proof}
Throughout, unless otherwise noted, the expectation is with respect to all of $X_1,...,X_{2n}, Y_1^{1},...,Y_n^{1}, Y_1^{2},...,Y_{n}^{2}$ and $Z_1,...,Z_n$. We also use the notations $X^n$ to denote the set of variables $(X_1,...,X_n)$, and similarly for $Y^n$ and $Z^n$.  

We can assume without loss of generality that $0 \in \Omega$; see \cite{peyre2020computational} Remark 2.19 for invariance of $T_\eta^{\mu_1},T_\eta^{\mu_2}$ under translation. For the translation invariance of $T_{\eta}^{\mu_{1},n,\varepsilon},T_{\eta}^{\mu_{2},n,\varepsilon}$, observe that both obtaining $g_\varepsilon$, and evaluating $T_{\eta}^{\mu_{1},n,\varepsilon},T_{\eta}^{\mu_{2},n,\varepsilon}$ at fixed points, requires only the distances $\norm{X_i - Y_j}_2^2 = \norm{(X_i - t) - (Y_j - t)}_2^2$.  We calculate:

\begin{align}\label{eqn:BCM_BigSplit}
&\mathbb{E}\left [ \left | \int \langle T_{\mu_{0}}^{\mu_1} - T_{\mu_{0}}^{\eta}, T_{\mu_{0}}^{\mu_2} - T_{\mu_{0}}^{\eta} \rangle d\mu_{0} - \frac{1}{n} \sum_{i=n+1}^{2n} \langle T_{\mu_{0}}^{\mu_{1},n,\varepsilon}(X_i) - T_{\mu_{0}}^{\eta,n,\varepsilon}(X_i), T_{\mu_{0}}^{\mu_{2},n,\varepsilon}(X_i) - T_{\mu_{0}}^{\eta,n,\varepsilon}(X_i) \rangle \right | \right ]\notag  \\
= &\mathbb{E}\bigg[ \bigg | \int \langle T_{\mu_{0}}^{\mu_1} - T_{\mu_{0}}^{\eta}, T_{\mu_{0}}^{\mu_2} - T_{\mu_{0}}^{\eta} \rangle d\mu_{0} - \frac{1}{n}\sum_{i=n+1}^{2n} \langle T_{\mu_{0}}^{\mu_1}(X_i) - T_{\mu_{0}}^{\eta}(X_i), T_{\mu_{0}}^{\mu_{2}}(X_i) - T_{\mu_{0}}^{\eta}(X_i)\rangle\notag \\      
& + \frac{1}{n}\sum_{i=n+1}^{2n} \langle T_{\mu_{0}}^{\mu_1}(X_i) - T_{\mu_{0}}^{\eta}(X_i), T_{\mu_{0}}^{\mu_{2}}(X_i) - T_{\mu_{0}}^{\eta}(X_i)\rangle\notag
 \\
&-\frac{1}{n} \sum_{i=n+1}^{2n} \langle T_{\mu_{0}}^{\mu_{1},n,\varepsilon}(X_i) - T_{\mu_{0}}^{\eta,n,\varepsilon}(X_i), T_{\mu_{0}}^{\mu_{2},n,\varepsilon}(X_i) - T_{\mu_{0}}^{\eta,n,\varepsilon}(X_i) \rangle\bigg|\bigg]\notag \\
\le &\mathbb{E}\bigg[ \bigg | \int \langle T_{\mu_{0}}^{\mu_1} - T_{\mu_{0}}^{\eta}, T_{\mu_{0}}^{\mu_2} - T_{\mu_{0}}^{\mu_{2}} \rangle d\mu_{0} - \frac{1}{n}\sum_{i=n+1}^{2n} \langle T_{\mu_{0}}^{\mu_1}(X_i) - T_{\mu_{0}}^{\eta}(X_i), T_{\mu_{0}}^{\mu_{2}}(X_i) - T_{\mu_{0}}^{\eta}(X_i)\rangle\bigg|\bigg] \\  
& + \mathbb{E}\bigg[\bigg|\frac{1}{n}\sum_{i=n+1}^{2n} \langle T_{\mu_{0}}^{\mu_1}(X_i) - T_{\mu_{0}}^{\eta}(X_i), T_{\mu_{0}}^{\mu_{2}}(X_i) - T_{\mu_{0}}^{\eta}(X_i)\rangle \notag\\
&-\frac{1}{n} \sum_{i=n+1}^{2n} \langle T_{\mu_{0}}^{\mu_{1},n,\varepsilon}(X_i) - T_{\mu_{0}}^{\eta,n,\varepsilon}(X_i), T_{\mu_{0}}^{\mu_{2},n,\varepsilon}(X_i) - T_{\mu_{0}}^{\eta,n,\varepsilon}(X_i) \rangle\bigg|\bigg]\notag
    \end{align}

So, it suffices to bound two terms in (\ref{eqn:BCM_BigSplit}) separately.  To bound the first term, let $h(x)\triangleq\langle T_{\mu_{0}}^{\mu_{1}}(x)-T_{\mu_{0}}^{\eta}(x), T_{\mu_{0}}^{\mu_{2}}(x)-T_{\mu_{0}}^{\eta}(x)\rangle$.  Note that for all $x\in\Omega$, $|h(x)|\le 4|\Omega|^{2}$, and thus $Var(h(x))\le 16|\Omega|^{4}$.  From this it follows
    \begin{align*}
        &\mathbb{E}\left[ \left | 
            \int \langle T_{\mu_{0}}^{\mu_1} - T_{\mu_{0}}^{\eta}, T_{\mu_{0}}^{\mu_2} - T_{\mu_{0}}^{\eta} \rangle d\eta - \frac{1}{n}\sum_{i=n+1}^{2n} \langle T_{\mu_{0}}^{\mu_1}(X_i) - T_{\mu_{0}}^{\eta}(X_i), T_{\mu_{0}}^{\mu_{2}}(X_i) - T_{\mu_{0}}^{\eta}(X_i)\rangle \right| \right ] \\
            = &\mathbb{E} \left[\left|\mathbb{E}_{X \sim \eta}[h(X)] - \frac{1}{n}\sum_{i=n+1}^{2n} h(X_i)\right|\right] \\
            \leq &\sqrt{\frac{\text{Var}[h(X)]}{n}} \leq \sqrt{\frac{16|\Omega|^4}{n}}  \lesssim \frac{1}{\sqrt{n}}
    \end{align*}
    where we have used that for all i.i.d. random variables $U,U_1,...,U_n$ with finite variance 
    $$\mathbb{E}[|\mathbb{E}[U] - \frac{1}{n}\sum_{i=1}^n U_i|] \leq \sqrt{\frac{\text{Var}[U]}{n}}.$$

    We now proceed to the second term in (\ref{eqn:BCM_BigSplit}).
    \begin{align*}
        &\mathbb{E}\bigg[\bigg|\frac{1}{n}\sum_{i=n+1}^{2n} \langle T_{\mu_{0}}^{\mu_1}(X_i) - T_{\mu_{0}}^{\eta}(X_i), T_{\mu_{0}}^{\mu_{2}}(X_i) - T_{\mu_{0}}^{\eta}(X_i)\rangle- \\
        &\frac{1}{n} \sum_{i=n+1}^{2n} \langle T_{\mu_{0}}^{\mu_{1},n,\varepsilon}(X_i) - T_{\mu_{0}}^{\eta,n,\varepsilon}(X_i), T_{\mu_{0}}^{\mu_{2},n,\varepsilon}(X_i) - T_{\mu_{0}}^{\eta,n,\varepsilon}(X_i) \rangle\bigg|\bigg]\\
        \le &\frac{1}{n}\sum_{i=n+1}^{2n} \mathbb{E}\bigg[\bigg|\langle T_{\mu_{0}}^{\mu_1}(X_i) - T_{\mu_{0}}^{\eta}(X_i), T_{\mu_{0}}^{\mu_{2}}(X_i) - T_{\mu_{0}}^{\eta}(X_i)\rangle- \langle T_{\mu_{0}}^{\mu_{1},n,\varepsilon}(X_i) - T_{\mu_{0}}^{\eta,n,\varepsilon}(X_i), T_{\mu_{0}}^{\mu_{2},n,\varepsilon}(X_i) - T_{\mu_{0}}^{\eta,n,\varepsilon}(X_i) \rangle\bigg|\bigg]\\
        =&\mathbb{E}\bigg[\bigg|\langle T_{\mu_{0}}^{\mu_1}(X_{n+1}) - T_{\mu_{0}}^{\eta}(X_{n+1}), T_{\mu_{0}}^{\mu_{2}}(X_{n+1}) - T_{\mu_{0}}^{\eta}(X_{n+1})\rangle-\\
        &\langle T_{\mu_{0}}^{\mu_{1},n,\varepsilon}(X_{n+1}) - T_{\mu_{0}}^{\eta,n,\varepsilon}(X_{n+1}), T_{\mu_{0}}^{\mu_{2},n,\varepsilon}(X_{n+1}) - T_{\mu_{0}}^{\eta,n,\varepsilon}(X_{n+1}) \rangle\bigg|\bigg]\\
        =&\mathbb{E}\bigg[\bigg|\langle (T_{\mu_{0}}^{\mu_{1}}(X_{n+1})-T_{\mu_{0}}^{\eta}(X_{n+1}))-(T_{\mu_{0}}^{\mu_{1},n,\varepsilon}(X_{n+1})-T_{\mu_{0}}^{\eta,n,\varepsilon}(X_{n+1})), T_{\mu_{0}}^{\mu_{2}}(X_{n+1})-T_{\mu_{0}}^{\eta}(X_{n+1})\rangle\\
        &-\langle T_{\mu_{0}}^{\mu_{1},n,\varepsilon}(X_{n+1})-T_{\mu_{0}}^{\eta,n,\varepsilon}(X_{n+1}), (T_{\mu_{0}}^{\mu_{2},n,\varepsilon}(X_{n+1})-T_{\mu_{0}}^{\eta,n,\varepsilon}(X_{n+1}))-(T_{\mu_{0}}^{\mu_{2}}(X_{n+1})-T_{\mu_{0}}^{\eta}(X_{n+1}))\rangle\bigg|\bigg]\\
        \le&\mathbb{E}\bigg[\bigg|\langle (T_{\mu_{0}}^{\mu_{1}}(X_{n+1})-T_{\mu_{0}}^{\eta}(X_{n+1}))-(T_{\mu_{0}}^{\mu_{1},n,\varepsilon}(X_{n+1})-T_{\mu_{0}}^{\eta,n,\varepsilon}(X_{n+1})), T_{\mu_{0}}^{\mu_{2}}(X_{n+1})-T_{\mu_{0}}^{\eta}(X_{n+1})\rangle\bigg|\bigg]\\
        +&\mathbb{E}\bigg[\bigg|\langle T_{\mu_{0}}^{\mu_{1},n,\varepsilon}(X_{n+1})-T_{\mu_{0}}^{\eta,n,\varepsilon}(X_{n+1}), (T_{\mu_{0}}^{\mu_{2},n,\varepsilon}(X_{n+1})-T_{\mu_{0}}^{\eta,n,\varepsilon}(X_{n+1}))-(T_{\mu_{0}}^{\mu_{2}}(X_{n+1})-T_{\mu_{0}}^{\eta}(X_{n+1}))\rangle\bigg|\bigg]\\
        \le&\mathbb{E}\bigg[\|T_{\mu_{0}}^{\mu_{1}}(X_{n+1})-T_{\mu_{0}}^{\eta}(X_{n+1})-(T_{\mu_{0}}^{\mu_{1},n,\varepsilon}(X_{n+1})-T_{\mu_{0}}^{\eta,n,\varepsilon}(X_{n+1}))\|\bigg]\mathbb{E}\bigg[\|T_{\mu_{0}}^{\mu_{2}}(X_{n+1})-T_{\mu_{0}}^{\eta}(X_{n+1})\|\bigg]\\
        +&\mathbb{E}\bigg[\| T_{\mu_{0}}^{\mu_{1},n,\varepsilon}(X_{n+1})-T_{\mu_{0}}^{\eta,n,\varepsilon}(X_{n+1})\|\bigg] \mathbb{E}\bigg[\|(T_{\mu_{0}}^{\mu_{2},n,\varepsilon}(X_{n+1})-T_{\mu_{0}}^{\eta,n,\varepsilon}(X_{n+1}))-(T_{\mu_{0}}^{\mu_{2}}(X_{n+1})-T_{\mu_{0}}^{\eta}(X_{n+1}))\|\bigg]\\
        \le &|\Omega|\mathbb{E}\bigg[\|T_{\mu_{0}}^{\mu_{1}}(X_{n+1})-T_{\mu_{0}}^{\eta}(X_{n+1})-(T_{\mu_{0}}^{\mu_{1},n,\varepsilon}(X_{n+1})-T_{\mu_{0}}^{\eta,n,\varepsilon}(X_{n+1}))\|\bigg]\\
        +&|\Omega|\mathbb{E}\bigg[\|(T_{\mu_{0}}^{\mu_{2},n,\varepsilon}(X_{n+1})-T_{\mu_{0}}^{\eta,n,\varepsilon}(X_{n+1}))-(T_{\mu_{0}}^{\mu_{2}}(X_{n+1})-T_{\mu_{0}}^{\eta}(X_{n+1}))\|\bigg]\\
        \le&|\Omega|\bigg(\mathbb{E}\bigg[\| T_{\mu_{0}}^{\mu_{1}}(X_{n+1})-T_{\mu_{0}}^{\mu_{1},n,\varepsilon}(X_{n+1})\|\bigg] +\mathbb{E}\bigg[\|T_{\mu_{0}}^{\eta}(X_{n+1})-T_{\mu_{0}}^{\eta,n,\varepsilon}(X_{n+1})\|\bigg]\\
        +&\mathbb{E}\bigg[\| T_{\mu_{0}}^{\mu_{2}}(X_{n+1})-T_{\mu_{0}}^{\mu_{2},n,\varepsilon}(X_{n+1})\|\bigg] +\mathbb{E}\bigg[\|T_{\mu_{0}}^{\eta}(X_{n+1})-T_{\mu_{0}}^{\eta,n,\varepsilon}(X_{n+1})\|\bigg]\bigg).
    \end{align*}
Each of these four expectations can be controlled by applying Jensen's inequality followed by Theorem \ref{thm : pooladian entropic estimation} to give the bound $\lesssim \sqrt{n^{-\frac{\bar{\alpha}+1}{2(d+\bar{\alpha}+1)}}\log(n)}=n^{-\frac{\bar{\alpha}+1}{4(d+\bar{\alpha}+1)}}\sqrt{\log(n)}.$  The result follows.
\end{proof}

\subsection{Proof of Corollary \ref{cor : estimate lambda lbcm}}

\begin{proof}
The proof is similar to one found in \cite{werenski2022measure} and we include it here for completeness. Let $B_{n}$ denote the entrywise bound in Theorem \ref{thm : estimate of Aij lbcm}. Noting that $\hat{\lambda}^{T}\hat{A}^{L}\hat{\lambda} \leq  \lambda_{*}^{T}\hat{A}^{L}\lambda_{*}$ by construction, we estimate 
    \begin{align*}
        \mathbb{E}[\hat{\lambda}^{T}A^{L}\hat{\lambda}] 
        &= \mathbb{E}\left[\hat{\lambda}^{T}(A^{L}-\hat{A}^{L})\hat{\lambda}\right]+\mathbb{E}[\hat{\lambda}^{T}\hat{A}^{L}\hat{\lambda}] \\
        &\leq \mathbb{E}\left[|\hat{\lambda}^{T}(A^{L}-\hat{A}^{L})\hat{\lambda}|\right] +\mathbb{E}[\lambda_{*}^{T}\hat{A}^{L}\lambda_{*}] \\
        &= \mathbb{E}\left[|\hat{\lambda}^{T}(A^{L}-\hat{A}^{L})\hat{\lambda}|\right]+\mathbb{E}[\lambda_{*}^{T}(\hat{A}^{L}-A^{L})\lambda_{*}] \\
        &= \mathbb{E}\left[\bigg| \sum_{i,j=1}^m (\hat{\lambda})_{i}(\hat{\lambda})_{j} (A^{L} - \hat{A}^{L})_{ij}\bigg|\right]+
            \mathbb{E}\left[\bigg| \sum_{i,j=1}^m (\lambda_{*})_{i}(\lambda_{*})_{j} (A^{L} - \hat{A^{L}})_{ij}\bigg|\right] \\
        &\leq \mathbb{E}\left[ \sum_{i,j=1}^m (\hat{\lambda})_{i}(\hat{\lambda})_{j} |A_{ij}^{L} - \hat{A}_{ij}^{L}|\right] + 
            \mathbb{E}\left[\sum_{i,j=1}^m (\lambda_{*})_{i}(\lambda_{*})_{j} |A_{ij}^{L} - \hat{A}_{ij}^{L}|\right] \\
        &\leq 2\mathbb{E}\left[\sum_{i,j=1}^m |A_{ij}^{L} - \hat{A}_{ij}^{L}|\right] = 2\sum_{i,j=1}^m \mathbb{E}[|A_{ij}^{L} - \hat{A}_{ij}^{L}|] \lesssim 2m^{2}B_{n} 
    \end{align*}
    In the second line we have used the assumption that $\lambda_*^TA^{L}\lambda_* = 0$. The second to last line uses the triangle inequality. The last line uses $\lambda_*, \hat{\lambda} \in \Delta^m$ so their entries in $[0,1]$ and then concludes with Theorem \ref{thm : estimate of Aij lbcm}.
    
    Since $A^{L}$ is positive semidefinite and by assumption $\lambda_{*}\in\Delta^{m}$ satisfies $\lambda_{*}^{T}A^{L}\lambda_{*}=0$, it follows that $\lambda_{*}$ is an eigenvector of $A^{L}$ with eigenvalue 0.  Let $0<a_{2}\le\dots\le a_{m}$ be the non-zero eigenvalues of $A^{L}$ with associated orthonormal eigenvectors $v_{2},\dots,v_{m}$. Orthogonally decompose $\hat{\lambda}=\hat{\beta}\lambda_{*}+\hat{\lambda}_{\perp}$, where $\hat{\beta}\in \mathbb{R}$ and $\hat{\lambda}_{\perp}$ is in the span of $\{v_{2},\dots,v_{m}\}$.  Note that $\hat{\beta}$ and $\hat{\lambda}_{\perp}$ are random.  Then,
    
    \begin{align*}&\mathbb{E}[\|\hat{\lambda}-\hat{\beta}\lambda_{*}\|_{2}^2]\\ 
    =&\mathbb{E}[\|\hat{\lambda}_{\perp}\|_{2}^2]\\
    =&\mathbb{E}\left[\sum_{i=2}^{m}|v_{i}^{T}\hat{\lambda}_{\perp}|^{2}\right]\\
    \leq &\frac{1}{a_{2}}\mathbb{E}\left[\sum_{i=2}^{m}a_{i}|v_{i}^{T}\hat{\lambda}_{\perp}|^{2}\right]\\
    =&\frac{1}{a_{2}}\mathbb{E}[|(\hat{\lambda}_{\perp})^{T}A^{L}\hat{\lambda}_{\perp}|]\\
    =&\frac{1}{a_{2}}\mathbb{E}[|\hat{\lambda}^{T}A^{L}\hat{\lambda}|]\\
    \lesssim &\frac{2 m^{2}}{a_{2}}B_{n}.
    \end{align*}
    
    Summing both sides of the equation $\hat{\lambda} = \hat{\beta} \lambda_*  + \hat{\lambda}_{\perp}$ and recalling $\lambda_{*}, \hat{\lambda}\in\Delta^{m}$ yields 
    \begin{align*}1 &= \hat{\beta} +\sum_{j=1}^{m}(\hat{\lambda}_\perp)_{j} \leq \hat{\beta}+\|\hat{\lambda}_\perp\|_{1} \leq  \hat{\beta}+\sqrt{m}\|\hat{\lambda}_\perp\|_{2},
    \end{align*}
    which implies that $\mathbb{E}[(1 - \hat{\beta})^2] \leq m \mathbb{E}[\| \hat{\lambda}_\perp\|_2^2] \lesssim \frac{2m^3}{a_2} B_n$.
    
    Finally, we use the fact that $\hat{\lambda}-\hat{\beta}\lambda_{*}=\hat{\lambda}_{\perp}$ and $(\hat{\beta}-1)\lambda_{*}$ are orthogonal to bound: 
    
    \begin{align*}
        \mathbb{E}[\|\hat{\lambda}-\lambda_{*}\|_{2}^{2}] 
        &= \mathbb{E}[\|\hat{\lambda}-\hat{\beta}\lambda_{*}\|_{2}^{2}+\|(\hat{\beta}-1)\lambda_{*}\|_{2}^{2}] \\
        &= \mathbb{E}[\|\hat{\lambda}-\hat{\beta}\lambda_{*}\|_{2}^{2}] + \mathbb{E}[\|(\hat{\beta}-1)\lambda_{*}\|_{2}^{2}]\\
        &\lesssim \frac{2 m^{2}}{a_{2}}B_{n}+\mathbb{E}[(\hat{\beta}-1)^{2}\|\lambda_{*}\|_{2}^{2}]\\
        &\le \frac{2 m^{2}}{a_{2}}B_{n} + \mathbb{E}[(\hat{\beta}-1)^{2}]\\
        &\lesssim \frac{2 m^{2}}{a_{2}}B_{n} + \frac{2m^3}{a_2} B_n
    \end{align*}as desired. 
\end{proof}

\begin{remark}

As shown in the proof, the implicit constant in Corollary \ref{cor : estimate lambda lbcm} depends on the second eigenvalue $a_{2}$ of $A^{L}$ as $1/a_{2}$.  The choice of $\mu_0$ may be especially important to ensuring that the second eigenvalue is not too small.
\end{remark}

\section{Proofs From Section \ref{sec:RepresentationalCapacity}}
\label{sec:ProofsSection5}
\subsection{Proof of Proposition \ref{prop : bcm on gaussians}}

\begin{proof}
    That an element of $\BCM(\{\mu_i\}_{i\in I})$ is a Gaussian is a consequence of Theorem 3.10 in \cite{alvarez2018wide}. That $\LBCM(\{\mu_i\}_{i\in I}; \mu_0)$ is a Gaussian follows from the fact $T_{\mu_{0}}^{\mu_{i}}$ is an affine transformation and therefore $\int \lambda T_{\mu_{0}}^{\mu_{i}} d\lambda(i)$ is also affine, and the family of Gaussian measures is closed under affine transformations. The latter fact is well-known, see \cite{goodfellow2016deep}. 
\end{proof}

\subsection{Proof of Theorem \ref{thm : lbcm 1D capacity}}

We begin by establishing a preliminary result. This serves the role of a Choquet-type theorem in that it identifies the set of extreme points in a convex set and shows that every point in the set is a convex combination of the extreme points.

\begin{proposition} \label{prop : choquet type}
    Let $\mathcal{T}([0,1])$ denote the set of functions 
    \begin{equation*}
        \mathcal{T}([0,1]) \triangleq \{ T : [0,1] \rightarrow [0,1] \ | \ T \text{ is increasing} \}.
    \end{equation*}
    The set $\mathcal{T}([0,1])$ is convex. Let $T_a(x) \triangleq \pmb{1}[x \geq a]$. Then for every $T \in \mathcal{T}([0,1])$ there is a measure $\lambda \in \mathcal{P}([0,1])$ such that for $U([0,1])$-almost every $x$ it holds
    \begin{equation*}
        T(x) = \int T_a(x) d\lambda(a).
    \end{equation*}
\end{proposition}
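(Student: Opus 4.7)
Convexity of $\mathcal{T}([0,1])$ is immediate: a convex combination of monotone non-decreasing $[0,1]$-valued functions is again monotone non-decreasing and $[0,1]$-valued. The substance of the statement is the representation formula, and the key observation is that for any Borel probability measure $\lambda$ on $[0,1]$,
\[ \int_{[0,1]} T_a(x)\, d\lambda(a) \;=\; \int_{[0,1]} \pmb{1}[a \leq x]\, d\lambda(a) \;=\; \lambda([0,x]). \]
Thus producing a representation $T(x) = \int T_a(x)\, d\lambda(a)$ is equivalent to exhibiting a probability measure on $[0,1]$ whose cumulative distribution function agrees with $T$ Lebesgue-almost everywhere.

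Given $T \in \mathcal{T}([0,1])$, I would construct $\lambda$ as the Lebesgue--Stieltjes measure associated with a slightly modified version of $T$. Concretely, let $\tilde T(x) := \lim_{y \downarrow x} T(y)$ for $x \in [0,1)$ denote the right-continuous modification of $T$, and set $\tilde T(1) := 1$. Since $T$ is monotone and bounded in $[0,1]$, it has at most countably many discontinuities, so $\tilde T$ coincides with $T$ outside the union of this countable set with $\{1\}$; in particular $\tilde T = T$ Lebesgue-a.e. Moreover $\tilde T$ is non-decreasing and right-continuous on $[0,1]$ with $\tilde T(1) = 1$, so (using the convention $\tilde T(0^-) = 0$, which allows a point mass of size $\tilde T(0)$ at the origin) it is the CDF of a unique Borel probability measure $\lambda$ on $[0,1]$, satisfying $\lambda([0,x]) = \tilde T(x)$ for every $x \in [0,1]$.

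Combining these two ingredients, for $U([0,1])$-a.e. $x$,
\[ \int_{[0,1]} T_a(x)\, d\lambda(a) \;=\; \lambda([0,x]) \;=\; \tilde T(x) \;=\; T(x), \]
as required. The only subtleties are bookkeeping: (i) handling the countable discontinuity set of $T$ and the single point $x = 1$, and (ii) accommodating the case $T(1) < 1$ by placing the ``missing'' mass so as to ensure $\lambda$ is a probability measure without disturbing the identity off a Lebesgue-null set. Both are absorbed by the fact that equality is required only $U([0,1])$-almost everywhere, so the main technical step reduces to the standard correspondence between right-continuous non-decreasing functions and Borel probability measures on the interval.
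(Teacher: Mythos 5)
Your proof is correct and follows essentially the same route as the paper's: both pass to the right-continuous modification of $T$, take $\lambda$ to be the associated Lebesgue--Stieltjes probability measure so that $\lambda([0,x])$ agrees with $T(x)$ off the countable set of discontinuities, and conclude via the identity $\int T_a(x)\,d\lambda(a)=\lambda([0,x])$. The only cosmetic difference is that you normalize $\tilde T(1)=1$, whereas the paper keeps $T_+(1)=T(1)$ and explicitly assigns the residual mass $1-T_+(1)$ to the point $\{1\}$; the two choices coincide up to a $U([0,1])$-null set, which is all the statement requires.
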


\begin{proof}
    Let $T \in \mathcal{T}([0,1])$. Since $T$ is monotonic at each point in the domain of $T$ there exists well-defined left and right limit. We can define a right continuous version of $T$, say $T_+$ for every $x \in [0,1)$ by
    \begin{equation*}
        T_+(x) = \lim_{y \rightarrow x^+} T(y)
    \end{equation*}
    and set $T_+(1) = T(1)$.
    
    Since $T$ is monotonic it has only countably many discontinuities and $T_+$ agrees with $T$, except possibly at these discontinuities, the functions $T$ and $T_+$ disagree on a countable set which is of measure zero.  Now define the measure $\lambda$ by 
    \begin{equation*}
        \lambda([0,x]) = T_+(x), \ x \in [0,1) \hspace{1cm} \lambda(\{1\}) = 1 - T_+(1).
    \end{equation*}
    One can show that this satisfies all the required properties to be a probability measure over $[0,1]$.
    
    For $x \in [0,1)$ we have
    \begin{equation*}
        \int_{[0,1]} \pmb{1}[a \leq x] d\lambda(a) = \int_{[0,x]} 1 d\lambda(a) = \lambda([0,x]) = T_+(x).
    \end{equation*}Since $T_+$ agrees with $T$ for almost every $x$ and the set $\{1\}$ is of measure zero we have established that for almost every $x$
    \begin{equation*}
        \int_{[0,1]} \pmb{1}[a \leq x] d\lambda(a) = T(x)
    \end{equation*}
    and therefore the proposed $\lambda$ satisfies the claim.
    
    Since $T$ was chosen arbitrarily in $\mathcal{T}([0,1])$, the result holds over the entire class of functions.
\end{proof}

We will also leverage the following two known results from \cite{panaretos2020invitation}.

\begin{theorem}[\cite{panaretos2020invitation} Theorem 3.1.5] \label{thm : barycenter stability}
    Let $m \in \mathbb{N}$ and let $\lambda = (1/m,...,1/m)$. Suppose that $W_2(\mu^k_i,\mu_i) \rightarrow 0$ for $i=1,...,m$ and let $\mu^k_\lambda$ be the Wasserstein barycenter with parameter $\lambda$ for $(\mu^k_1,...,\mu^k_m)$. Then any limit point of the sequence $\{\mu_\lambda^k\}_{k=1}^\infty$ is the Wasserstein barycenter of $\mu_1,...,\mu_m$ with coordinate $\lambda$.
\end{theorem}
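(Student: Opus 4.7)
The plan is to exploit the variational characterization of $\mu^k_\lambda$ as a minimizer and pass to the limit using the joint continuity of $W_2$. Let $\mu_*$ be a limit point of $\{\mu^k_\lambda\}$; by extracting a subsequence (which I relabel) I may assume $\mu^{k}_\lambda \to \mu_*$ in $W_2$. For every test measure $\nu \in \mathcal{P}_2(\mathbb{R}^d)$, the definition of $\mu^k_\lambda$ as a barycenter gives the comparison
\begin{equation*}
    \sum_{i=1}^m W_2^2(\mu^k_\lambda, \mu^k_i) \;\leq\; \sum_{i=1}^m W_2^2(\nu, \mu^k_i).
\end{equation*}

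The key analytic fact is that $W_2$ is $1$-Lipschitz in each argument: by the triangle inequality,
\begin{equation*}
    \bigl| W_2(\mu^k_\lambda, \mu^k_i) - W_2(\mu_*, \mu_i) \bigr| \;\leq\; W_2(\mu^k_\lambda, \mu_*) + W_2(\mu^k_i, \mu_i) \;\longrightarrow\; 0,
\end{equation*}
and analogously $W_2(\nu, \mu^k_i) \to W_2(\nu, \mu_i)$. Squaring and passing to the limit in the displayed inequality yields
\begin{equation*}
    \sum_{i=1}^m W_2^2(\mu_*, \mu_i) \;\leq\; \sum_{i=1}^m W_2^2(\nu, \mu_i) \qquad \forall \nu \in \mathcal{P}_2(\mathbb{R}^d),
\end{equation*}
which is exactly the statement that $\mu_*$ is a $2$-Wasserstein barycenter of $(\mu_1,\dots,\mu_m)$ with weights $\lambda=(1/m,\dots,1/m)$.

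The main subtlety is reconciling the notion of ``limit point'' with the $W_2$-continuity argument above. If the convergence $\mu^{k_j}_\lambda \to \mu_*$ is only weak, one needs uniform control of second moments to upgrade to $W_2$-convergence. This can be handled by observing that $W_2(\mu^k_i, \mu_i) \to 0$ implies the second moments of $\mu^k_i$ are uniformly bounded, and then bounding $\int \|x\|^2 d\mu^k_\lambda(x)$ by comparing $\mu^k_\lambda$ to a convenient test measure $\nu$ (e.g.\ $\mu^k_1$) via the minimizing inequality, which forces $\int \|x\|^2 d\mu^k_\lambda$ to be uniformly bounded as well; Prokhorov then gives tightness and existence of $W_2$-limit points, and uniform integrability of $\|x\|^2$ upgrades weak convergence to $W_2$-convergence. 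This moment bookkeeping is the only place where the argument requires care; the rest is a clean Gamma-convergence-type passage to the limit of minimizers.
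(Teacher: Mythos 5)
First, note that the paper does not prove this statement: it is imported verbatim from \cite{panaretos2020invitation} (Theorem 3.1.5) and used as a black box, so your proposal can only be compared against the standard argument --- which is precisely what your main computation is. That part is correct: the minimality inequality $\sum_{i=1}^m W_2^2(\mu^k_\lambda,\mu^k_i) \le \sum_{i=1}^m W_2^2(\nu,\mu^k_i)$, the triangle-inequality continuity estimates, and the passage to the limit together show that any $W_2$-limit point $\mu_*$ satisfies $\sum_i W_2^2(\mu_*,\mu_i) \le \sum_i W_2^2(\nu,\mu_i)$ for all $\nu$. If ``limit point'' is read in the $W_2$ topology --- the natural reading for a theorem set in Wasserstein space, and the one consistent with how the result is applied in the paper --- then your first three displays already constitute a complete proof and the final paragraph is unnecessary. (One cosmetic point: you prove the limit point is \emph{a} barycenter; the definite article in the statement presumes uniqueness, which holds under extra hypotheses and is not something your argument needs to address.)

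The one genuine soft spot is in your final paragraph. You correctly extract a uniform bound on $\int \|x\|^2 d\mu^k_\lambda$ by testing against $\nu = \mu^k_1$, and this does give tightness; but you then assert that this yields the uniform integrability of $\|x\|^2$ needed to upgrade weak convergence to $W_2$-convergence, and that implication is false. Take $\mu^k = (1-1/k)\delta_0 + (1/k)\delta_{x_k}$ with $\|x_k\|_2^2 = k$: the second moments are constantly $1$ and $\mu^k \to \delta_0$ weakly, yet $W_2^2(\mu^k,\delta_0) = 1$ for all $k$, so no upgrade occurs. The repair is simple and avoids any upgrade at all: $W_2$ is jointly lower semicontinuous under weak convergence, so along a weakly convergent subsequence $W_2(\mu_*,\mu_i) \le \liminf_k W_2(\mu^k_\lambda, \mu^k_i)$ (using that $\mu^k_i \to \mu_i$ in $W_2$, hence weakly), while on the right-hand side $W_2(\nu,\mu^k_i) \to W_2(\nu,\mu_i)$ converges outright by your Lipschitz estimate. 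Combining these with the minimality inequality gives $\sum_i W_2^2(\mu_*,\mu_i) \le \sum_i W_2^2(\nu,\mu_i)$ directly, and Fatou (lower semicontinuity of the second-moment functional) guarantees $\mu_* \in \mathcal{P}_2(\mathbb{R}^d)$. With that substitution your argument is complete under either reading of ``limit point.''
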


\begin{corollary} \label{cor : barycenter stability}
    Let $m \in \mathbb{N}$ and let $\lambda \in \mathbb{Q}^m \cap \Delta^m$. Suppose that $W_2(\mu^k_i,\mu_i) \rightarrow 0$ for $i=1,...,m$ and let $\mu^k_\lambda$ be the Wasserstein barycenter with parameter $\lambda.$ Then any limit point of the sequence $\{\mu_\lambda^k\}_{k=1}^\infty$ is the Wasserstein barycenter of $\mu_1,...,\mu_m$ with coordinate $\lambda$.
\end{corollary}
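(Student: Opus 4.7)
The plan is a reduction from rational weights to uniform weights via duplication of measures. Write $\lambda = (p_1/N, \ldots, p_m/N)$ with $p_i \in \mathbb{Z}_{\geq 0}$ and $\sum_{i=1}^m p_i = N$. For any probability measure $\nu$, observe the identity
\begin{equation*}
    \sum_{i=1}^m \lambda_i W_2^2(\nu, \mu_i) = \frac{1}{N} \sum_{i=1}^m p_i W_2^2(\nu, \mu_i),
\end{equation*}
so the barycentric functional with coefficients $\lambda$ on the list $(\mu_1,\ldots,\mu_m)$ coincides (up to the positive factor $1/N$) with the uniformly weighted barycentric functional on the expanded list
\begin{equation*}
    (\underbrace{\mu_1, \ldots, \mu_1}_{p_1}, \underbrace{\mu_2, \ldots, \mu_2}_{p_2}, \ldots, \underbrace{\mu_m, \ldots, \mu_m}_{p_m}),
\end{equation*}
which has exactly $N$ entries. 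Consequently $\mu_\lambda$ (the barycenter of $(\mu_1,\ldots,\mu_m)$ with weights $\lambda$) equals the uniform barycenter of the expanded list, and the analogous statement holds for each $k$ with $\mu_\lambda^k$ and the corresponding expansion of $(\mu_1^k,\ldots,\mu_m^k)$.

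Next, since $W_2(\mu_i^k, \mu_i) \to 0$ for each $i = 1,\ldots,m$, the expanded $k$-indexed list converges in $W_2$ entrywise to the expanded limit list. I would then apply Theorem \ref{thm : barycenter stability} directly to the expanded sequences, which have $N$ entries and uniform weights $(1/N,\ldots,1/N)$. This yields that any limit point of $\{\mu_\lambda^k\}_{k=1}^\infty$ equals the uniform barycenter of the expanded limit list, which by the identity above is exactly the $\lambda$-weighted barycenter of $(\mu_1,\ldots,\mu_m)$.

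There is essentially no technical obstacle here; the main thing to be careful about is just verifying that the duplication preserves the barycentric variational problem, which follows tautologically from collecting like terms. The only subtlety, if one wishes to be pedantic, is the edge case in which some $p_i = 0$, but then $\mu_i$ simply does not enter the sum on either side and can be dropped from the list, so the reduction still applies verbatim to the subcollection with $p_i > 0$.
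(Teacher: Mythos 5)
Your proposal is correct and is precisely the argument the paper intends: its proof of Corollary \ref{cor : barycenter stability} is the one-line remark that the result ``follows from Theorem \ref{thm : barycenter stability} by repeating the measures $\mu_i$ as necessary,'' and your common-denominator duplication with uniform weights $(1/N,\ldots,1/N)$ is exactly that reduction, spelled out (including the harmless $p_i = 0$ edge case).
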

\begin{proof}
    This follows from Theorem \ref{thm : barycenter stability} by repeating the measures $\mu_i$ as necessary. 
\end{proof}

\begin{theorem}[\cite{panaretos2020invitation} Theorem 3.1.9] \label{thm : barycenter compatible}
    Let $m \in \mathbb{N}$ and let $\lambda = (1/m,...,1/m)$. Suppose that $\mu_0,...,\mu_m$ are compatible. Then the Wasserstein barycenter $\mu_\lambda$ is given by
    \begin{equation*}
        \mu_\lambda = \left ( \frac{1}{m}\sum_{i=1}^m T_i \right )\#\mu_0,
    \end{equation*}
    where $T_i$ is the optimal transport map from $\mu_0$ to $\mu_i$.
\end{theorem}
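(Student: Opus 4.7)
The plan is to reduce the claim to a multimarginal optimal transport (MMOT) computation, using the tools already developed in Appendix A.2. Recall that (by the Agueh--Carlier proposition quoted there) a 2-Wasserstein barycenter of $\mu_1,\dots,\mu_m$ with weights $\lambda$ can be realized as $M_\lambda \# \pi_*$ for any solution $\pi_*$ of the MMOT problem
\begin{equation*}
\min_{\pi\in\Pi(\mu_1,\dots,\mu_m)} \int_{(\mathbb{R}^d)^m}\sum_{i<j}\lambda_i\lambda_j\|x_i-x_j\|^2\,d\pi(x_1,\dots,x_m),
\end{equation*}
where $M_\lambda(x_1,\dots,x_m)=\sum_i\lambda_i x_i$. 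So it is enough to exhibit an optimal $\pi_*$ whose pushforward under $M_\lambda$ is $\left(\frac{1}{m}\sum_i T_i\right)\#\mu_0$.

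The natural candidate is $\pi_*\triangleq(T_1,\dots,T_m)\#\mu_0$, where $T_i=T_{\mu_0}^{\mu_i}$. The marginals are clearly correct, since the $i$th coordinate projection of $\pi_*$ is $T_i\#\mu_0=\mu_i$, so $\pi_*\in\Pi(\mu_1,\dots,\mu_m)$. Next I would compute its cost using compatibility. For any $i,j$, the compatibility identity $T_{\mu_0}^{\mu_i}=T_{\mu_j}^{\mu_i}\circ T_{\mu_0}^{\mu_j}$ gives
\begin{equation*}
\int \|x_i-x_j\|^2\,d\pi_*
=\int \|T_i(x_0)-T_j(x_0)\|^2\,d\mu_0(x_0)
=\int \|T_{\mu_j}^{\mu_i}(y)-y\|^2\,d\mu_j(y)=W_2^2(\mu_i,\mu_j),
\end{equation*}
after the change of variable $y=T_j(x_0)\sim\mu_j$ and using Brenier's theorem to identify $T_{\mu_j}^{\mu_i}$ as the $W_2$ optimal map.

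To conclude optimality of $\pi_*$, I would invoke the pairwise lower bound: for any $\pi\in\Pi(\mu_1,\dots,\mu_m)$, its $(i,j)$-marginal is a coupling of $\mu_i$ and $\mu_j$, so $\int\|x_i-x_j\|^2\,d\pi\ge W_2^2(\mu_i,\mu_j)$. Weighting by $\lambda_i\lambda_j$ and summing yields
\begin{equation*}
\min_{\pi\in\Pi(\mu_1,\dots,\mu_m)}\int\sum_{i<j}\lambda_i\lambda_j\|x_i-x_j\|^2\,d\pi\ \ge\ \sum_{i<j}\lambda_i\lambda_j W_2^2(\mu_i,\mu_j),
\end{equation*}
and the candidate $\pi_*$ saturates this bound, so it is MMOT-optimal. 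Finally, $M_\lambda\#\pi_*=\bigl(\sum_i\lambda_i T_i\bigr)\#\mu_0=\bigl(\frac{1}{m}\sum_i T_i\bigr)\#\mu_0$ for $\lambda=(1/m,\dots,1/m)$, which gives the claimed barycenter formula.

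The main subtlety, and the step most deserving of care, is the change-of-variables computation that requires compatibility — without $T_i=T_j^i\circ T_j$ one cannot reduce the integrand $\|T_i-T_j\|^2$ to a pure $W_2^2(\mu_i,\mu_j)$ cost; a mere coupling between $\mu_i$ and $\mu_j$ via $(T_i,T_j)\#\mu_0$ is not generally optimal. Compatibility is precisely what makes the pushforward coupling the $W_2$ optimal one, so the pairwise lower bound is attained simultaneously for every $(i,j)$, which is the heart of the argument.
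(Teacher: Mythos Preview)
Your proposal is correct and follows essentially the same approach as the paper: the paper does not separately prove this cited theorem, but its proof of Proposition~\ref{thm:LBCM_Equiv_BCM} in Appendix~\ref{subsec:LBCM_Equiv_BCM} carries out exactly the argument you give (define $\pi_*=(T_1,\dots,T_m)\#\mu_0$, use compatibility to show each pairwise cost equals $W_2^2(\mu_i,\mu_j)$, invoke the marginal lower bound to conclude MMOT optimality, then push forward by $M_\lambda$). Your write-up and the paper's differ only cosmetically.
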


\begin{corollary} \label{cor : barycenter compatible}
    Let $m \in \mathbb{N}$ and let $\lambda \in \mathbb{Q}^m \cap \Delta^m$. Suppose that $\mu_0,...,\mu_m$ are compatible. Then the Wasserstein barycenter $\mu_\lambda$ is given by
    \begin{equation*}
        \mu_\lambda = \left ( \sum_{i=1}^m \lambda_i T_i \right )\#\mu_0,
    \end{equation*}
    where $T_i$ is the optimal transport map from $\mu_0$ to $\mu_i$.
\end{corollary}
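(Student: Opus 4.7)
The plan is to mimic the repetition trick that was used to derive Corollary \ref{cor : barycenter stability} from Theorem \ref{thm : barycenter stability}, now applied to the compatibility-based formula of Theorem \ref{thm : barycenter compatible}.

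First, I would write the rational coordinate as $\lambda_i = p_i / q$ with a common positive integer denominator $q$ and nonnegative integer numerators $p_i$ satisfying $\sum_{i=1}^m p_i = q$. Then I would form the enlarged list of $q$ measures
\[
(\tilde{\mu}_1,\ldots,\tilde{\mu}_q) = (\underbrace{\mu_1,\ldots,\mu_1}_{p_1 \text{ times}},\underbrace{\mu_2,\ldots,\mu_2}_{p_2 \text{ times}},\ldots,\underbrace{\mu_m,\ldots,\mu_m}_{p_m \text{ times}}),
\]
and observe the elementary identity
\[
\tfrac{1}{q}\sum_{j=1}^q W_2^2(\nu,\tilde{\mu}_j) = \sum_{i=1}^m \lambda_i W_2^2(\nu,\mu_i) \qquad \forall\, \nu \in \mathcal{P}(\mathbb{R}^d),
\]
so that the Wasserstein barycenter of $(\tilde\mu_1,\ldots,\tilde\mu_q)$ with uniform coordinate equals the Wasserstein barycenter of $(\mu_1,\ldots,\mu_m)$ with coordinate $\lambda$.

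Next I would check that compatibility is preserved under repetition: the family $\{\mu_0,\tilde{\mu}_1,\ldots,\tilde{\mu}_q\}$ is still compatible, because every transport map appearing in the compatibility condition $T_{\tilde{\mu}_j}^{\tilde{\mu}_k} = T_{\tilde{\mu}_\ell}^{\tilde{\mu}_k}\circ T_{\tilde{\mu}_j}^{\tilde{\mu}_\ell}$ is one of the transport maps among the original $\{\mu_i\}_{i=0}^m$ (or the identity, when two indices coincide), and these already satisfy the compatibility relation by hypothesis. I can now invoke Theorem \ref{thm : barycenter compatible} on the enlarged family with uniform weights to obtain
\[
\mu_\lambda = \Big(\tfrac{1}{q}\sum_{j=1}^q \tilde{T}_j\Big)\#\mu_0,
\]
where $\tilde{T}_j = T_{\mu_0}^{\tilde{\mu}_j}$.

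Finally, I would group the repeated terms: by construction each $\mu_i$ contributes exactly $p_i$ copies of $T_i = T_{\mu_0}^{\mu_i}$ to the sum, so
\[
\tfrac{1}{q}\sum_{j=1}^q \tilde{T}_j = \tfrac{1}{q}\sum_{i=1}^m p_i T_i = \sum_{i=1}^m \lambda_i T_i,
\]
yielding the claimed formula $\mu_\lambda = \big(\sum_{i=1}^m \lambda_i T_i\big)\#\mu_0$. No step here is a genuine obstacle — the only thing to be careful about is the bookkeeping that compatibility survives the repetition (trivial, as repeated entries introduce only identity transports) and the quadratic-cost identity relating weighted and repeated formulations, which is immediate from the definition of $\lambda_i = p_i/q$.
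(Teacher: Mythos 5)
Your proposal is correct and is essentially the paper's own proof: the paper derives Corollary \ref{cor : barycenter compatible} from Theorem \ref{thm : barycenter compatible} precisely ``by repeating the measures $\mu_i$ as necessary,'' which is the repetition trick you carry out. Your write-up simply makes explicit the bookkeeping the paper leaves implicit (the common denominator $q$, the equality of objectives, and the fact that compatibility survives repetition since coinciding measures contribute only identity maps), all of which is sound.
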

\begin{proof}
    This follows from Theorem \ref{thm : barycenter compatible} by repeating the measures $\mu_i$ as necessary. 
\end{proof}

We can now proceed to the main proof.

\begin{proof}[Proof of Theorem \ref{thm : lbcm 1D capacity}]
    To each measure $\mu \in \mathcal{P}([0,1])$ associate to it the optimal transport map from $U([0,1])$ to $\mu$, denoted by $T$. The map $T$ is almost everywhere uniquely defined and is contained in the sub-differential of a convex function. In addition $T$ is an increasing function and the image of $[0,1]$ under $T$ is contained in $[0,1]$.

    By Proposition \ref{prop : choquet type} there exists a measure $\lambda \in \mathcal{P}([0,1])$ such that for $U([0,1])$ almost every $x$
    \begin{equation*}
        T(x) = \int_{[0,1]} \pmb{1}[a \leq x] d\lambda(a).
    \end{equation*}
    Note that the measures $\{ a \delta_0 + (1-a)\delta_1 \ | \ a \in [0,1] \}$ have optimal transport maps given precisely by $\{ x \mapsto \pmb{1}[x \geq a] \ | \ a \in [0,1] \}.$ Denote $a \delta_0 + (1-a)\delta_1$ by $\mu_a$ and $T_a$ the corresponding transport map from $U([0,1])$ to $\mu_a$. With this notation it holds for almost every $x$ that
    \begin{equation} \label{eq : ae equality}
        T(x) = \int_{[0,1]} T_a(x) d\lambda(a).
    \end{equation}
    This implies that for every bounded continuous function $f$ we have
    \begin{align*}
        \int f(x) d\mu(x) 
        &= \int f(T(x)) dU([0,1])(x) \\
        &= \int f \left ( \int_{[0,1]} T_a(x) d\lambda(a) \right ) dU([0,1])(x) \\
        &= \int f(x) d \left [ \left ( \int_{[0,1]} T_a(x) d\lambda(a) \right )\#U([0,1])\right ](x)
    \end{align*}
    where the first equality holds because $T\#U([0,1]) = \mu$, the second holds because of \eqref{eq : ae equality}, and the third holds by definition of the push-forward of a measure. This shows that with respect to weak convergence the measure $\mu = \left ( \int_{[0,1]} T_a(x) d\lambda(a) \right )\#U([0,1])$ and therefore is contained in $\LBCM(U([0,1]); \{a\delta_0 + (1-a)\delta_1 \ | \ a \in [0,1] \})$.  Since this $\mu$ was arbitrary we see that every measure in $\mathcal{P}([0,1])$ is indeed contained in $\LBCM(U([0,1]); \{a\delta_0 + (1-a)\delta_1 \ | \ a \in [0,1] \})$. 
    
    For the weak convergence of $\BCM(\{a\delta_0 + (1-a)\delta_1 \ | \ a \in [0,1] \})$ we would like to appeal to compatibility but cannot immediately do so because the measures do not even have optimal transport maps. However, one can subvert this problem by an approximation argument and using Corollaries \ref{cor : barycenter stability} and \ref{cor : barycenter compatible}.  We will establish that the set 
    \begin{equation} \label{eq : rational measures}
        \left \{ \sum_{i=1}^m a_i\delta_{b_i} \ | \ m \in \mathbb{N}, a_i,b_i \in \mathbb{Q} \cap [0,1], \sum_{i=1}^m a_i = 1 \right \} \subseteq \ \BCM(\{a\delta_0 + (1-a)\delta_1 \ | \ a \in [0,1] \})
    \end{equation}
    and the former set is clearly dense with respect to weak convergence. 

    We introduce the family of measures parameterized by $a \in [0,1], b \in (0,1/2)$:
    \begin{equation*}
        \nu_{a,b} = a U([0,b]) + (1-a)U([1-b, b]).
    \end{equation*}
    The measures $\nu_{a,b}$ are absolutely continuous and satisfy
    \begin{equation}
        \lim_{b \rightarrow 0^+} W_2\left (a\delta_0 + (1-a)\delta_1, \nu_{a,b} \right ) = 0. \label{eq : smooth to diracs}
    \end{equation}
    In addition, the optimal transport map from $U([0,1])$ to $\nu_{a,b}$, which we denote by $T_{a,b}$ for $a \in (0,1)$, is given by
    \begin{equation*}
        T_{a,b}(x) = \begin{cases}
            \frac{b}{a}x & x \leq a \\
            (1 - b) + \frac{b}{1-a}(x-a)& x > a
        \end{cases}
    \end{equation*}
    while $T_{1,b} = bx$ and $T_{0,b} = 1 - b + bx$. These maps satisfy for $a > 0$
    \begin{equation*}
        \lim_{b \rightarrow 0^+} T_{a,b} = \pmb{1}[x > a]
    \end{equation*}
    as well as the bound
    \begin{equation*}
        \max_{x \in [0,1]} |T_{a,b}(x) - \pmb{1}[x > a]| \leq b.
    \end{equation*}
   For $a = 0$ they satisfy
    \begin{equation*}
        \lim_{b \rightarrow 0^+} T_{0,b} = 1
    \end{equation*}
    as well as the bound
    \begin{equation*}
        \max_{x \in [0,1]} |T_{0,b}(x) - 1| \leq b.
    \end{equation*}
    This bound further implies for any $(\lambda_0,...,\lambda_{m}) \in \Delta^{m+1}$
    \begin{align*}
        \max_{x \in [0,1]} \left | \left (\sum_{i=0}^{m} \lambda_i T_{i/m,b}(x) \right ) - \left ( \sum_{i=1}^{m} \lambda_i \pmb{1}[x > i/m] + \lambda_0 \cdot 1 \right ) \right | 
        &\leq \max_{x \in [0,1]} \lambda_0 |T_{0,b}(x) - 1| + \sum_{i=1}^{m} \lambda_i |T_{i/m,b}(x) - \pmb{1}[x > i/m]| \\
        &\leq \max_{x \in [0,1]} \lambda_0 b + \sum_{i=1}^{m} \lambda_i b = b.
    \end{align*}
    This bound further implies for any $(\lambda_0,...,\lambda_{m}) \in \Delta^{m+1}$ that
    \begin{align}
        &W_2^2 \left ( 
            \left ( \sum_{i=0}^{m} \lambda_i T_{i/m,b} \right )\#U([0,1]), 
            \left ( \sum_{i=1}^{m} \lambda_i \pmb{1}[x > i/m] + \lambda_0 \cdot 1 \right )\#U([0,1]) 
        \right ) \notag \\
        &\leq \int \left ( 
            \left ( \sum_{i=0}^{m} \lambda_i T_{i/m,b} \right ) -  
            \left ( \sum_{i=1}^{m} \lambda_i \pmb{1}[x > i/m] + \lambda_0 \cdot 1 \right )
        \right )^2 dU([0,1])(x) \notag \\
        &\leq \int b^2 dU([0,1]) = b^2 \label{eq : step function bound}
    \end{align}
    where the first inequality is realized by the choose of coupling given by
    \begin{equation*}
        \pi = \left (
            \left ( \sum_{i=0}^{m} \lambda_i T_{i/m,b} \right ),
            \left ( \sum_{i=1}^{m} \lambda_i \pmb{1}[x > i/m] + \lambda_0 \cdot 1 \right )
        \right )\#U([0,1]).
    \end{equation*}
    Next observe that because $U([0,1])$ and $\nu_{a,b}$ for $b > 0$ are absolutely continuous and one-dimensional they are compatible and therefore by Corollary \ref{cor : barycenter compatible} for $(\lambda_0,...,\lambda_{m}) \in \mathbb{Q}^{m+1} \cap \Delta^{m+1}$ it holds that the Wasserstein barycenter with coordinate $\lambda$ of $(\nu_{0,b}, \nu_{1/m,b},...,\nu_{(m-1)/m,b},\nu_{1,b})$ is given by
    \begin{equation*}
        \left ( \sum_{i=0}^{m} \lambda_i T_{i/m,b} \right )\#U([0,1]).
    \end{equation*}
    We are now able to apply Corollary \ref{cor : barycenter stability} to $\lambda \in \mathbb{Q}^{m+1} \cap \Delta^{m+1}$. This is done with $\mu_i^k = \nu_{i/m, 1/k}$ and $\mu_i = (i/m)\delta_0 + (1 - i/m)\delta_1$. In \eqref{eq : smooth to diracs} we have established that 
    \begin{equation*}
        \lim_{k \rightarrow \infty} W_2(\mu_i^k,\mu_i) = \lim_{k \rightarrow \infty}  W_2(\nu_{i/m, 1/k}, (i/m)\delta_0 + (1 - i/m)\delta_1) = 0.
    \end{equation*}
    We have also established that the barycenter of $(\mu_0^k,...,\mu_m^k) = (\nu_{0/m, 1/k},...,\nu_{m/m,1/k})$ is given by
    \begin{equation*}
        \left ( \sum_{i=0}^{m} \lambda_i T_{i/m,1/k} \right )\#U([0,1]).
    \end{equation*}
    and by \eqref{eq : step function bound} 
    \begin{equation*}
         \lim_{k \rightarrow \infty }W_2 \left ( 
            \left ( \sum_{i=0}^{m} \lambda_i T_{i/m,1/k} \right )\#U([0,1]), 
            \left ( \sum_{i=1}^{m} \lambda_i \pmb{1}[x > i/m] + \lambda_0 \cdot 1 \right )\#U([0,1]) 
        \right ) \leq \lim_{k \rightarrow \infty} \frac{1}{k} = 0.
    \end{equation*}
    Therefore by Corollary \ref{cor : barycenter stability} the barycenter of $(\mu_0,...,\mu_m) = (\delta_1, (1/m)\delta_0 + (1 - 1/m)\delta_1,...,(1-1/m)\delta_0 + (1/m)\delta_1, \delta_0)$ with coordinate $\lambda$ is given by
    \begin{equation*}
        \left ( \sum_{i=1}^{m} \lambda_i \pmb{1}[x > i/m] + \lambda_0 \cdot 1 \right )\#U([0,1])
    \end{equation*}
    which we can expand as
    \begin{equation}
        \left ( \sum_{i=1}^{m} \lambda_i \pmb{1}[x > i/m] + \lambda_0 \cdot 1 \right )\#U([0,1]) = \sum_{i=1}^{m} \frac{1}{m} \delta_{\sum_{j = 0}^{i-1} \lambda_j}. \label{eq : push forward form}
    \end{equation}
    It is a straightforward exercise to show that every set in \eqref{eq : rational measures} can be expressed in the form of \eqref{eq : push forward form} by taking $m$ sufficiently large and appropriately setting $\lambda$.

    That $\normalfont{\text{conv}}(\{U([0,1])\} \cup \{a\delta_0 + (1-a)\delta_1 \ | \ a \in [0,1] \})$ is not dense follows from the fact that it only consists of measures of the form
    \begin{equation*}
        \lambda_1 \delta_0 + \lambda_2 \delta_1 + (1-\lambda_1-\lambda_2)U([0,1]),
    \end{equation*}where $(\lambda_1,\lambda_2,\lambda_3)\in\Delta^{3}$.
\end{proof}

\subsection{Proof of Proposition \ref{prop : maps to extreme are extreme}}

\begin{proof}
    That $\mathcal{V}(C)$ consists of the optimal transport maps from $U(C)$ to $\mathcal{P}(\{v_i\}_{i=1}^\ell)$ follows from the fact that for every $T \in \mathcal{V}(C)$ the measure $T\#U(C) \in \mathcal{P}(\{v_i\}_{i=1}^\ell)$ since by definition of $\mathcal{V}(C)$ it holds that $T(x) \in \{v_i\}_{i=1}^\ell$ almost surely. In addition for every $\mu \in \mathcal{P}(\{v_i\}_{i=1}^\ell)$ the optimal transport map $T$ (which must be in $\mathcal{T}(C)$ by Theorem \ref{thm : knott-smith and brenier}) from $U(C)$ to $\mu$ must be in $\mathcal{V}(C)$ since it must hold with probability 1 that $T(x) \in \{v_i\}_{i=1}^\ell$ in order for $T$ to transport $U(C)$ to $\mu$. 

    To establish that $\mathcal{V}(C)$ are extreme in $\mathcal{T}(C)$, for each $T \in \mathcal{V}(C)$ we will construct a continuous linear functional $F_T$ on $\mathcal{T}(C)$ such that 
    \begin{equation*}
        T = \argmax_{T' \in \mathcal{T}(C)} F_T(T').
    \end{equation*}
    This will demonstrate that $T$ is not only extreme but \textit{exposed} in $\mathcal{T}(C)$. 

    In order to give the function $F_T$ we will use that in a convex polytope $\text{conv}(\{v_i\}_{i=1}^\ell)$ in $\mathbb{R}^d$ for every extreme point there exists $v_i$ there exists at least one vector $u_i \in \mathbb{R}^d$ such that
    \begin{equation*}
        v_i = \argmax_{v \in \text{conv}(\{v_i\}_{i=1}^\ell)} \langle v, u_i \rangle
    \end{equation*}
    and the maximum is achieved uniquely.
    
    Now let $\{u_i\}_{i=1}^\ell$ be such a set of vectors for $\{v_i\}_{i=1}^\ell$. In addition define the sets $R_1,...,R_\ell \subset C$ according to
    \begin{equation*}
        R_i \triangleq T^{-1}(v_i).
    \end{equation*}
    These sets are disjoint and since $T(x) \in \{v_i\}_{i=1}^\ell$ almost surely it holds that
    \begin{equation*}
        \sum_{i=1}^\ell U(C)[R_i] = 1.
    \end{equation*}
    Finally define the function $M_T:\mathbb{R}^d \rightarrow \mathbb{R}^d$ by
    \begin{equation*}
        M_T(x) = \sum_{i=1}^\ell \pmb{1}[x \in R_i] u_i.
    \end{equation*}
    
    The linear functional we will use is given by
    \begin{equation*}
        F_T(T') = \int \langle T'(x), M_T(x) \rangle d[U(C)](x).
    \end{equation*}
    This is clearly linear by the linearity of inner products and integration. We must show that $T$ is the unique maximizer of this functional.

    To see this, we have for every $T' \in \mathcal{T}(C)$
    \begin{align*}
        F_T(T') 
        &= \int \langle T'(x), M_T(x) \rangle d[U(C)](x) \\
        &= \sum_{i=1}^\ell \int_{R_i} \langle T'(x), M_T(x) \rangle d[U(C)](x) \\
        &= \sum_{i=1}^\ell \int_{R_i} \langle T'(x), u_i \rangle d[U(C)](x) \\
        &\leq \sum_{i=1}^\ell \int_{R_i} \langle v_i, u_i \rangle d[U(C)](x) \\
        &= \sum_{i=1}^\ell \langle v_i,u_i \rangle U(C)[R_i]
    \end{align*}
    The inequality follows from the definition of $u_i$ and the fact that $T' \in \mathcal{T}(C)$ and therefore 
    \begin{equation*}
        T'(x) \in C \implies \langle T'(x), u_i \rangle \leq \max_{x \in C} \langle x,u_i \rangle = \langle v_i, u_i \rangle.
    \end{equation*}
    The equality case happens if and only if for every $i = 1,...,\ell$ almost every $x \in R_i$ it holds that $T'(x) = v_i = T(x)$ which demonstrates that indeed $T$ is the unique (up to almost everywhere equality) maximizer of $F_T$ over $\mathcal{T}(C)$ and is therefore exposed which implies it is extreme.

    Since $T$ was chosen arbitrarily in $\mathcal{V}(C)$ we can conclude that the entire set $\mathcal{V}(C)$ must be extreme.
\end{proof}

\subsection{Proof of Theorem \ref{thm : questions negative}}

The polytope that we use is the set
    $C_0 = \normalfont{\text{conv}} \left ( \{(0,0), (0,1), (1,0) \} \right )$
and the convex function is given by $\phi_0(x,y) = \frac{1}{4}\frac{x^2}{(2-y)}.$  One can directly compute the Hessian of this function and check that it is positive definite and also show that $\nabla \phi_{0}(C_0) \subset C_{0}$ so that $\nabla \phi_0 \in \mathcal{T}(C_0)$. To establish that $\nabla \phi_0$ is not contained in $\normalfont{\text{conv}}(\mathcal{V}(C_0))$, we must characterize the structure of the functions which are contained in the set $\mathcal{V}(C_0)$.

\begin{lemma} \label{lem : structure of V}
    A function $T$ is contained in $\mathcal{V}(C_0)$ if and only if there exist a vector $b \in \mathbb{R}^3$ such that $T(x) \in \partial v(x;b)$ where 
    \begin{equation*}
        v(x;b) = \max_{i\in \{1,2,3\}} \langle v_i, x\rangle + b_i
    \end{equation*}
    with $v_1 = (0,0), v_2 = (0,1), v_3 = (1,0)$.
\end{lemma}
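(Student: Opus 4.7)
My plan is to prove the two directions separately, with both turning on the subdifferential structure of the piecewise-affine function $v(\cdot; b)$ together with Fenchel--Young duality between $\varphi$ and its conjugate $\varphi^*$.

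For the ``if'' direction, suppose $T(x) \in \partial v(x; b)$ for some $b \in \mathbb{R}^3$. Since $v(\cdot; b)$ is a maximum of finitely many affine functions it is convex, so $T \in \mathcal{T}(C_0)$ immediately. At any $x$, $\partial v(x; b)$ equals the convex hull of those $v_i$ that attain the maximum defining $v(x;b)$. The set on which two or more affine pieces agree is a finite union of lines $\{x:\langle v_i-v_j,x\rangle=b_j-b_i\}$, hence has planar Lebesgue measure zero; at every other $x$ the active index is unique and $\partial v(x;b)$ is the singleton $\{v_{i(x)}\}$. Thus $T(x)\in\{v_1,v_2,v_3\}$ a.e., and $T$ maps into $C_0=\mathrm{conv}(\{v_i\})$ by the convex-hull description. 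So $T\in\mathcal{V}(C_0)$.

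For the ``only if'' direction, take $T\in\mathcal{V}(C_0)$ with associated convex $\varphi$, extended to $\mathbb{R}^2$ by $+\infty$ outside $C_0$ so that the conjugate $\varphi^*$ is well defined on $\mathbb{R}^2$. I would set $b_i \triangleq -\varphi^*(v_i)$ whenever $v_i$ lies in the range of $T$ (and pick $b_i$ very negative otherwise, so that affine piece is dominated on $C_0$ and contributes nothing to the max; this is the one bookkeeping step that needs attention). The key chain is then: (i) the Fenchel--Young inequality $\varphi(x)+\varphi^*(v_i)\geq\langle v_i,x\rangle$ rearranges to $\varphi(x)\geq v(x;b)$ pointwise on $C_0$; (ii) at every $x$ with $T(x)=v_{i(x)}$, Fenchel--Young equality (since $v_{i(x)}\in\partial\varphi(x)$) gives $\varphi(x)=\langle v_{i(x)},x\rangle+b_{i(x)}\leq v(x;b)$. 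Combining these, $\varphi=v(\cdot;b)$ almost everywhere on $C_0$.

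Both $\varphi$ and $v(\cdot;b)$ are convex and therefore continuous on $\mathrm{int}\,C_0$, so a.e.\ equality upgrades to pointwise equality on $\mathrm{int}\,C_0$, and consequently $\partial\varphi(x)=\partial v(x;b)$ for every such $x$. In particular $T(x)\in\partial v(x;b)$ on $\mathrm{int}\,C_0$, which is all that matters since $\partial C_0$ has measure zero and $T$ is only specified up to a.e.\ equality in $\mathcal{V}(C_0)$. The step I expect to be the most technically delicate is checking that $\varphi^*(v_i)$ is finite precisely when needed and handling unattained vertices so that $b\in\mathbb{R}^3$ rather than $(\mathbb{R}\cup\{-\infty\})^3$; the rest of the proof is the clean duality argument above.
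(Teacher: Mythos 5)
Your proof is correct, but it takes a genuinely different route from the paper. The paper dispatches the ``only if'' direction in three lines of optimal transport: since $T \in \mathcal{V}(C_0)$ pushes $U(C_0)$ forward to a discrete measure $\mu = \sum_i a_i \delta_{v_i}$ and lies in the subdifferential of a convex function, Brenier's theorem (Theorem \ref{thm : knott-smith and brenier}) forces $T$ to be \emph{the} optimal transport map from $U(C_0)$ to $\mu$, and the claimed form is then the standard semi-discrete/Laguerre-cell structure of optimal maps onto finitely supported targets, cited from Section 5.2 of \cite{peyre2020computational}. You instead give a self-contained convex-duality argument that never mentions pushforwards or uniqueness of OT maps: setting $b_i = -\varphi^*(v_i)$, the Fenchel--Young inequality gives $\varphi \geq v(\cdot;b)$ on $C_0$, Fenchel--Young equality at the a.e.\ points where $T(x) \in \{v_1,v_2,v_3\}$ gives the reverse inequality, and continuity upgrades a.e.\ equality to $\varphi = v(\cdot;b)$ and hence $\partial\varphi = \partial v(\cdot;b)$ on $\operatorname{int} C_0$ --- in effect you reprove the semi-discrete structure theorem the paper cites. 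What each approach buys: the paper's is shorter and reuses established machinery; yours is elementary and makes the role of the dual weights $b_i$ explicit. Two small remarks: the step you flag as delicate is actually automatic, since $\varphi$ in Definition \ref{def : sub differential} is finite and convex on all of $\mathbb{R}^2$, hence continuous, so $\varphi^*(v_i) = \max_{x \in C_0}\bigl(\langle v_i, x\rangle - \varphi(x)\bigr)$ is finite for every $i$ (attained or not) once $\varphi$ is restricted to the compact set $C_0$, and you may simply take $b_i = -\varphi^*(v_i)$ for all three indices with no bookkeeping. Second, your caveat that the conclusion $T(x) \in \partial v(x;b)$ is only secured on $\operatorname{int} C_0$ (i.e.\ up to a.e.\ equality at the boundary) is a real feature, not a defect of your argument alone: the paper's route through Brenier uniqueness likewise identifies $T$ with the Laguerre-cell map only up to a.e.\ equality, and the paper works modulo a.e.\ equivalence throughout, so your handling is consistent with how the lemma is used in Lemma \ref{lem : first coordinate decrease}.
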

\begin{proof}
    Clearly if $T(x) \in \partial v(x;b)$ for some $b \in \mathbb{R}^d$ then $T \in \mathcal{V}(C_0)$. For the reverse direction let $\mu = T\#U(C_0)$ so that $\mu = a_1 \delta_{v_1} + a_2 \delta_{v_2} + a_3 \delta_{v_3}$, which is the case because $T \in \{v_1,v_2,v_3\}$ for $U(C_0)$ by definition of $\mathcal{V}(C_0)$. Since it is also the case that $T$ is contained in the \emph{sub-differential} of a convex function by Theorem \ref{thm : knott-smith and brenier} it must be the case that $T$ is the optimal transport map from $U(C_0)$ to $\mu$. We refer to \cite{peyre2020computational} Section 5.2 for proof that this implies $T$ has the claimed form. 
\end{proof}

\begin{lemma} \label{lem : first coordinate decrease}
    Let $T \in \mathcal{V}(C_0)$. The following holds for all but at most one $a_1 \in [0,1]$. For all $0\leq a_2 < a_2' \leq 1 - a_1$ we have
    \begin{equation*}
        \langle e_1, T((a_1,a_2))\rangle \geq \langle e_1, T((a_1,a_2')) \rangle.
    \end{equation*}
\end{lemma}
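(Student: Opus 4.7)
The plan is to read the lemma directly off the piecewise-affine structure $v(x;b) = \max(b_1,\, x_2+b_2,\, x_1+b_3)$ furnished by Lemma \ref{lem : structure of V}. The key algebraic observation is that $\langle e_1,v_1\rangle = \langle e_1,v_2\rangle = 0$ while $\langle e_1,v_3\rangle = 1$, so whenever $T(x)\in\{v_1,v_2,v_3\}$ we have $\langle e_1, T(x)\rangle = \pmb{1}[T(x)=v_3]$. Monotonicity of the first coordinate along the vertical line $L_{a_1}=\{a_1\}\times[0,1-a_1]$ is therefore equivalent to the slice $\{a_2 : T(a_1,a_2)=v_3\}$ being an initial segment (i.e., downward-closed subset) of $[0,1-a_1]$.

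First I would write down the three maximal cells $R_i$ where $v_i$ attains the maximum in $v(\,\cdot\,;b)$; each is an intersection of two half-planes in $(x_1,x_2)$, and in particular
\begin{equation*}
R_3 = \{x : x_1+b_3 \geq b_1,\ x_1 - x_2 \geq b_2 - b_3\}.
\end{equation*}
On the interior of $R_i$ the subdifferential $\partial v(x;b)$ equals $\{v_i\}$, forcing $T(x)=v_i$; on a shared edge it is a segment between two adjacent vertices, and $T$ may take either endpoint value while remaining in $\mathcal{V}(C_0)$. Next, for fixed $a_1$, I would classify $L_{a_1}$ according to which cells it meets. When $a_1 < b_1-b_3$ the line lies entirely in $\{x_1+b_3<b_1\}$, so misses $R_3$, and the first coordinate is identically $0$. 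When $a_1 > b_1-b_3$ the line intersects $R_3$ exactly on the initial interval $\{a_2 \leq a_1 + b_3 - b_2\}$ and otherwise lies in the interior of $R_2$, since $a_1+b_3 > b_1$ rules out any entry into $R_1$. In both cases the set $\{a_2 : T(a_1,a_2)=v_3\}$ is downward-closed; the only potential ambiguity occurs at the single transition point $a_2 = a_1 + b_3 - b_2$ (if it even lies in $[0,1-a_1]$), and either admissible choice of $T$ there preserves down-closedness.

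The reason the lemma excludes one value of $a_1$ is the vertical line $a_1 = b_1-b_3$. On the portion $\{a_2 < b_1 - b_2\}$ of this line we have $\partial v(x;b) = \mathrm{conv}\{v_1,v_3\}$, and the definition of $\mathcal{V}(C_0)$ permits $T$ to toggle arbitrarily between $v_1$ and $v_3$, so the first coordinate may oscillate and monotonicity can genuinely fail. I do not anticipate a deep technical obstacle: the main step requiring care is keeping track of the subdifferential on the measure-zero cell boundaries and at the triple point $(b_1-b_3, b_1-b_2)$, to confirm that the forced vertex values in each cell interior, together with the restricted vertex-valued choices on the edges, assemble into down-closedness on every vertical line other than the exceptional one.
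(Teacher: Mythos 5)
Your proof is correct and takes essentially the same approach as the paper's: both invoke Lemma \ref{lem : structure of V}, decompose the plane into the cells $R_1,R_2,R_3$ with the explicit casewise subdifferential of $v(\cdot\,;b)$, identify $a_1 = b_1 - b_3$ as the unique exceptional vertical line (where $\partial v = \mathrm{conv}\{v_1,v_3\}$ allows the first coordinate to oscillate), and verify monotonicity on every other line by checking which cells the line can meet. Your observation that $\langle e_1, T(x)\rangle = \pmb{1}[T(x) = v_3]$ at vertex values, so that monotonicity reduces to down-closedness of the slice $\{a_2 : T(a_1,a_2) = v_3\}$, merely compresses the paper's explicit five-case enumerations on either side of the exceptional line.
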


\begin{proof}
 By Lemma \ref{lem : structure of V} there exists some $b \in \mathbb{R}^3$ such that $T(x) \in \partial v(x;b)$. Define the sets
    \begin{align*}
        R_1 &= \{ x \in \mathbb{R}^2 \ | \ \langle x, v_1 \rangle + b_1 = v(x;b) \} \\
        R_2 &= \{ x \in \mathbb{R}^2 \ | \ \langle x, v_2 \rangle + b_2 = v(x;b) \} \\
        R_3 &= \{ x \in \mathbb{R}^2 \ | \ \langle x, v_3 \rangle + b_3 = v(x;b) \} \\
    \end{align*}
    These are illustrated in Figure \ref{fig : lbcm structure}.
    \begin{figure}
        \centering
        \includegraphics[width=0.5\linewidth]{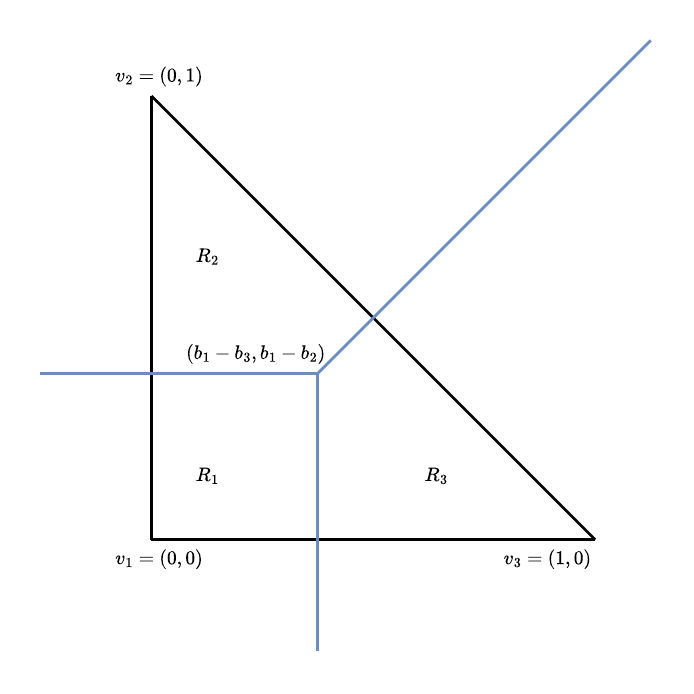}
        \caption[Structure of $C_0$ and $R_1,R_2,R_3$]{Structure of $C_0$ and the sets $R_1,R_2,R_3$. Each set contains its boundary (shown in blue) and the intersection point is determined by $b$.}
        \label{fig : lbcm structure}
    \end{figure}
    With these sets the sub-differential of $v(x;b)$ over all of $\mathbb{R}^2$ can be explicitly written as
    \begin{equation*}
        \partial v(x;b) = \begin{cases}
            \{(0,0)\} & x \in \text{int}(R_1)\\
            \{(0,1)\} & x \in \text{int}(R_2)\\
            \{(1,0)\} & x \in \text{int}(R_3) \\
            \normalfont{\text{conv}}(\{(0,0),(0,1)\}) & x \in (R_1 \cap R_2) \setminus R_3 \\
            \normalfont{\text{conv}}(\{(0,0),(1,0)\}) & x \in (R_1 \cap R_3) \setminus R_2 \\
            \normalfont{\text{conv}}(\{(0,1),(1,0)\}) & x \in (R_2 \cap R_3) \setminus R_1 \\
            \normalfont{\text{conv}}(\{(0,0),(0,1),(1,0)\}) & x \in R_1 \cap R_2 \cap R_3
        \end{cases}
    \end{equation*}
    The set $R_1 \cap R_2 \cap R_3$ consists of the single point $\{(b_1 - b_3, b_1 - b_2)\}$. If $b_1 - b_3 \in [0,1)$ then $a_1 = b_1 - b_3$ is the single value where it is possible that there are $0 \leq a_2' < a_2 \leq 1 - (b_1 - b_3)$
    \begin{equation*}
        \langle e_1, T((a_1,a_2))\rangle \geq \langle e_1, T((a_1,a_2')) \rangle.
    \end{equation*}
    Otherwise if $a_1 < b_1 - b_3$ and $0 \leq a_2 < a_2' \leq 1 - a_1$ then $(a_1,a_2), (a_1,a_2') \notin R_3$ and there are five possible configurations:
    \begin{enumerate}
        \item $(a_1,a_2) \in \text{int}(R_1)$ and $(a_1,a_2') \in \text{int}(R_1)$,
        \item $(a_1,a_2) \in \text{int}(R_1)$ and $(a_1,a_2') \in R_1 \cap R_2$,
        \item $(a_1,a_2) \in \text{int}(R_1)$ and $(a_1,a_2') \in \text{int}(R_2)$,
        \item $(a_1,a_2) \in R_1 \cap R_2$ and $(a_1,a_2') \in \text{int}(R_2)$,
        \item $(a_1,a_2) \in \text{int}(R_2)$ and $(a_1,a_2') \in \text{int}(R_2)$.
    \end{enumerate}
    One can check explicitly using the formula for the sub-differential that in each of these cases that $\langle e_1, T((a_1,a_2)) \rangle = \langle e_1, T((a_1,a_2')) \rangle = 0.$
    If $a_1 > b_1 - b_3$ then $(a_1,a_2), (a_1,a_2') \notin R_1$ and there are again five cases.
    \begin{enumerate}
        \item $(a_1,a_2) \in \text{int}(R_3)$ and $(a_1,a_2') \in \text{int}(R_3)$
        \item $(a_1,a_2) \in \text{int}(R_3)$ and $(a_1,a_2') \in R_2 \cap R_3$
        \item $(a_1,a_2) \in \text{int}(R_3)$ and $(a_1,a_2') \in \text{int}(R_2)$
        \item $(a_1,a_2) \in R_2 \cap R_3$ and $(a_1,a_2') \in \text{int}(R_2)$
        \item $(a_1,a_2) \in \text{int}(R_2)$ and $(a_1,a_2') \in \text{int}(R_2)$
    \end{enumerate}
    In the first three cases 
    \begin{equation*}
        \langle e_1, T((a_1,a_2)) \rangle = 1 \geq \langle e_1, T((a_1,a_2')) \rangle,
    \end{equation*}
    in the fourth 
    \begin{equation*}
        \langle e_1, T((a_1,a_2)) \rangle \geq 0 = \langle e_1, T((a_1,a_2')) \rangle,
    \end{equation*}
    and in the fifth
    \begin{equation*}
        \langle e_1, T((a_1,a_2)) \rangle = 0 = \langle e_1, T((a_1,a_2')) \rangle.
    \end{equation*}
    This completes the proof.
\end{proof}

Lemma \ref{lem : first coordinate decrease} and the following lemma are the two main tools used to show that the function $\phi_0$ is a counterexample.
\begin{lemma} \label{lem : phi issues}
    For $x \in [0,1]$ and $0 \leq y_1 < y_2 \leq 1 - x$ it holds that 
    \begin{equation*}
        \frac{x}{8}(y_2 - y_1) \leq \langle e_1, \nabla \phi_0(x,y_2) - \nabla \phi_0(x,y_1) \rangle.
    \end{equation*}
\end{lemma}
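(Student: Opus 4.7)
The plan is to reduce this inequality to a direct computation on the explicit formula $\phi_0(x,y) = \tfrac{1}{4}\tfrac{x^2}{2-y}$. First I would compute the first partial derivative
\[
\langle e_1, \nabla \phi_0(x,y)\rangle \;=\; \partial_x \phi_0(x,y) \;=\; \frac{x}{2(2-y)}.
\]
Substituting into the right-hand side of the claimed inequality gives
\[
\langle e_1, \nabla \phi_0(x,y_2) - \nabla \phi_0(x,y_1)\rangle \;=\; \frac{x}{2(2-y_2)} - \frac{x}{2(2-y_1)} \;=\; \frac{x}{2}\cdot\frac{y_2 - y_1}{(2-y_1)(2-y_2)}.
\]

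So the desired inequality is equivalent to
\[
\frac{x}{8}(y_2 - y_1) \;\leq\; \frac{x}{2}\cdot\frac{y_2-y_1}{(2-y_1)(2-y_2)}.
\]
For $x = 0$ both sides vanish and the inequality holds trivially; for $x > 0$, since $y_2 > y_1$ we may divide through by $\tfrac{x}{2}(y_2-y_1)>0$ and the inequality becomes $(2-y_1)(2-y_2) \leq 4$.

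The final step is to verify this last bound using the hypotheses. Since $0 \leq y_1 < y_2 \leq 1 - x \leq 1$, we have $2-y_1 \leq 2$ and $2-y_2 \leq 2$, so $(2-y_1)(2-y_2) \leq 4$, establishing the claim. There is no real obstacle here; the entire content of the lemma is that the convex function $\phi_0$ has first partial $\partial_x\phi_0 = x/(2(2-y))$ which is monotone increasing in $y$, and the quantitative constant $1/8$ is obtained from the crude upper bound $2 - y_i \leq 2$ valid on the triangle $C_0$.
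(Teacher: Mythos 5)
Your proof is correct and is essentially the same argument as the paper's: both reduce to the explicit formula $\partial_x \phi_0(x,y) = \frac{x}{2(2-y)}$ and extract the constant $\frac{1}{8}$ from the bound $2-y \leq 2$ on the triangle. The only cosmetic difference is that the paper bounds the mixed partial $\frac{\partial}{\partial y}\langle e_1, \nabla\phi_0(x,y)\rangle = \frac{x}{2(2-y)^2} \geq \frac{x}{8}$ and integrates from $y_1$ to $y_2$, whereas you evaluate the difference in closed form and verify $(2-y_1)(2-y_2) \leq 4$ directly; both routes are equally valid.
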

Before proceeding to the proof we remark that the maps in Lemma \ref{lem : first coordinate decrease} behave in an \textit{opposite way} to $\nabla \phi_0$. Specifically, fix a map $T$ as in   Lemma \ref{lem : first coordinate decrease} and let $x \in [0,1]$ take any value besides $x = 1$ and the single point of issue in Lemma \ref{lem : first coordinate decrease} (if it exists). Along the segment $\{(x,y) \ | 0 \leq y \leq 1- x\}$ the inner product with $e_1$ \textit{decreases} for $T$ as $y$ increases, while the inner product of $e_1$ and  $\nabla \phi_0$ \textit{increases} as $y$ increases. These properties will be leveraged in the proof of Theorem \ref{thm : questions negative} below.
\begin{proof}
    Fix an $x \in [0,1]$ and a $y \in [0,1-x]$ and compute the partial derivative
    \begin{equation*}
        \frac{\partial}{\partial y} \langle e_1, \nabla \phi_0(x,y) \rangle = \frac{x}{2(2-y)^2}.
    \end{equation*}
    Since $y \in [0,1]$ we have
    \begin{equation*}
        \frac{x}{8} \leq \frac{x}{2(2-y)^2}.
    \end{equation*}
    From this it follows
    \begin{align*}
        \langle e_1, \nabla \phi_0(x,y_2) \rangle &= \langle e_1, \nabla \phi_0(x,y_1) \rangle + \int_{y_1}^{y_2} \frac{\partial}{\partial y} \langle e_1, \nabla \phi_0(x,y) \rangle dy \\
        &\geq \langle e_1, \nabla \phi_0(x,y_1) \rangle + \int_{y_1}^{y_2} \frac{\partial}{\partial y} \frac{x}{8} dy \\
        &= \langle e_1, \nabla \phi_0(x,y_1) \rangle + \frac{x}{8}(y_2 - y_1).
    \end{align*}
    Re-arranging proves the bound.
\end{proof}

\begin{figure}[t!]
    \centering
    \includegraphics[width=1.0\linewidth]{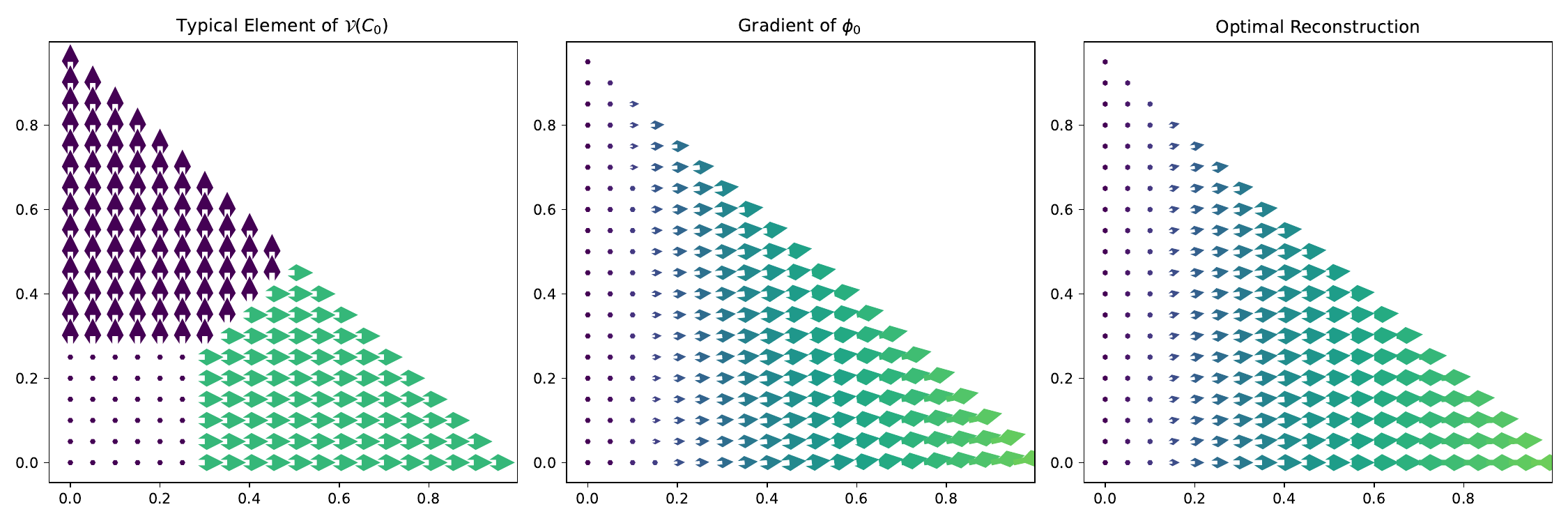}
    \caption[Counter example in Theorem \ref{thm : questions negative}]{Counter example in Theorem \ref{thm : questions negative}. On the left we have a typical element of $\mathcal{V}(C_0)$, in the middle the gradient of $\phi_0$, and on the right a numerically obtained optimal reconstruction of $\phi_0$ using a convex combination of elements in $\mathcal{V}(C_0)$ represented as vector fields. Arrow color corresponds to the magnitude of the first coordinate.}
    \label{fig : counter example}
\end{figure}

The essence of Lemmas \ref{lem : first coordinate decrease} and \ref{lem : phi issues} are visualized in Figure \ref{fig : counter example}. Along any vertical strip, when moving from bottom to top, the first component cannot increase for an element of $\mathcal{V}(C_0)$ while for $\nabla \phi_0$ it is strictly increasing. The optimal reconstruction fails to capture this behavior and has constant first component along any vertical line. The proof of Theorem \ref{thm : questions negative} makes this precise and gives a lower bound on the amount of error. Before showing it we must establish a technical lemma which will dispose of many edge cases in the proof of Theorem \ref{thm : questions negative}.

\begin{lemma} \label{lem : almost every}
    Let $T \in \normalfont{\text{conv}}(\mathcal{V}(C_0))$. Then for almost every $a_1 \in [0,1]$ it holds that for all $0\leq a_2 < a_2' \leq 1-a_1$ we have 
    \begin{equation*}
        \langle e_1, T((a_1,a_2))\rangle \geq \langle e_1, T((a_1,a_2')) \rangle.
    \end{equation*}
\end{lemma}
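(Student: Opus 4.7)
The plan is a short reduction to Lemma \ref{lem : first coordinate decrease} via linearity. By the convention for $\text{conv}$ adopted in the Background section (``all convex combinations, i.e.\ linear mixtures''), any $T \in \text{conv}(\mathcal{V}(C_0))$ can be written as a \emph{finite} convex combination
\begin{equation*}
T \;=\; \sum_{j=1}^{k} \lambda_j\, T_j, \qquad T_j \in \mathcal{V}(C_0),\ (\lambda_1,\dots,\lambda_k) \in \Delta^k.
\end{equation*}

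First I would invoke Lemma \ref{lem : first coordinate decrease} separately on each summand $T_j$. This produces, for every $j$, an exceptional set $E_j \subset [0,1]$ of cardinality at most one, outside of which $a_2 \mapsto \langle e_1, T_j((a_1,a_2))\rangle$ is non-increasing on $[0, 1-a_1]$. Setting $E \triangleq \bigcup_{j=1}^{k} E_j$ gives a finite (hence Lebesgue-null) exceptional set that works simultaneously for all the constituents of the mixture.

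Second, for any $a_1 \in [0,1]\setminus E$ and any $0 \leq a_2 < a_2' \leq 1-a_1$, I would exploit bilinearity of the Euclidean inner product together with linearity of the finite convex combination to compute
\begin{equation*}
\langle e_1, T((a_1,a_2))\rangle - \langle e_1, T((a_1,a_2'))\rangle \;=\; \sum_{j=1}^{k} \lambda_j \Bigl(\langle e_1, T_j((a_1,a_2))\rangle - \langle e_1, T_j((a_1,a_2'))\rangle\Bigr),
\end{equation*}
and observe that each summand is non-negative since $\lambda_j \geq 0$ and $a_1 \notin E_j$. This yields the desired inequality off the null set $E$, which is precisely the statement of the lemma.

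The argument is essentially unobstructed: it chains one application of Lemma \ref{lem : first coordinate decrease} per extreme map appearing in the mixture, and the conclusion transfers cleanly under finite linear combinations. The only point to be mindful of is the finiteness of the mixture, which guarantees $E$ is finite; were one to read $\text{conv}$ in the broader sense of integrals against probability measures on $\mathcal{V}(C_0)$ (as in the generalized LBCM), one would instead apply Fubini to the product of Lebesgue measure on $[0,1]$ with the mixing measure, using that the ``bad graph'' $\{(a_1, T') : a_1 \in E_{T'}\}$ has $T'$-sections of Lebesgue measure at most one point.
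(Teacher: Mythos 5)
Your argument is correct, but be aware that the reading of $\mathrm{conv}(\mathcal{V}(C_0))$ the paper actually needs---and uses in its own proof---is the integral one: $T = \int_{\mathcal{V}(C_0)} T_i \, d\lambda(T_i)$ with $\lambda \in \mathcal{P}(\mathcal{V}(C_0))$, matching the representation in Proposition \ref{prop : maps to extreme are extreme} and the mixture form of the LBCM in Question \ref{que : is dense}. So your finite-combination argument, while airtight, covers only a special case, and the load-bearing part of your proposal is the Fubini remark at the end. That remark does work: by Lemma \ref{lem : first coordinate decrease} each $T'$-section of the bad graph $\{(a_1,T') : T' \text{ fails at } a_1\}$ contains at most one point; product measurability can be justified via Lemma \ref{lem : structure of V} (each $T' \in \mathcal{V}(C_0)$ is parameterized by $b \in \mathbb{R}^3$ with unique potential failure point $a_1 = b_1 - b_3$, so the bad graph is essentially the graph of a measurable function); Fubini then gives that for almost every $a_1$, $\lambda$-almost every constituent map is monotone along the segment above $a_1$, and integrating the pointwise inequality yields the claim for $T$, with the exceptional $\lambda$-null set independent of the pair $(a_2,a_2')$. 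The paper takes a genuinely different route to the same end: for each bad $a_1$ it shows, by integrating the violated inequality against $\lambda$, that the set $\mathcal{T}_{a_1}$ of constituent maps failing at $a_1$ satisfies $\lambda[\mathcal{T}_{a_1}] > 0$; since by Lemma \ref{lem : first coordinate decrease} these sets are pairwise disjoint across distinct $a_1$, and a disjoint family of positive-probability sets in a probability space is at most countable, the bad set is countable, hence Lebesgue-null. The paper's argument buys a slightly stronger conclusion (countability rather than mere nullity) and sidesteps joint measurability of the bad graph, needing only measurability of each section $\mathcal{T}_{a_1}$; your Fubini route is shorter once measurability is granted. Both your sketch and the paper's proof implicitly use the pointwise identity $\langle e_1, T((a_1,a_2))\rangle = \int \langle e_1, T_i((a_1,a_2))\rangle \, d\lambda(T_i)$ on the (Lebesgue-null) vertical segment, which is legitimate here because elements of $\mathcal{T}(C_0)$ are defined everywhere as subdifferential selections, so the mixture may be taken pointwise.
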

\begin{proof}
    Let $T \in \normalfont{\text{conv}}(\mathcal{V}(C_0))$ and let $\lambda \in \mathcal{P}(\mathcal{V}(C_0))$ be such that 
    \begin{equation*}
        T = \int_{\mathcal{V}(C_0)} T_i d\lambda(T_i).
    \end{equation*}
    The choice of $T_i$ can be interpreted as the optimal transport map to the corresponding measure $\mu_i$, or simply as the integrated transport map.
    
    Define the set
    \begin{equation*}
        A = \{ a_1 \in [0,1] \ | \ \exists \ 0 \leq a_2 < a_2' \leq 1 - a_1 \text{ s.t. } \langle e_1, T((a_1,a_2))\rangle < \langle e_1, T((a_1,a_2')) \rangle \}
    \end{equation*}
    which corresponds to the set where the property in Lemma \ref{lem : first coordinate decrease} fails. Now for $a_1 \in A$ define the set 
    \begin{equation*}
        \mathcal{T}_{a_1} = \{ T' \in \mathcal{V}(C_0) \ | \ \exists \ 0 \leq a_2 < a_2' \leq 1 - a_1 \text{ s.t. } \langle e_1, T'((a_1,a_2))\rangle < \langle e_1, T'((a_1,a_2')) \rangle \}.
    \end{equation*}
    This corresponds to the set of maps where the property in Lemma \ref{lem : first coordinate decrease} fails at the point $a_1$. 

    Observe that by Lemma \ref{lem : first coordinate decrease} it must be the case that for $a_1 \neq a_1'$ that
    \begin{equation*}
        \mathcal{T}_{a_1} \cap \mathcal{T}_{a_1'} = \varnothing
    \end{equation*}
    since each function $T' \in \mathcal{V}(C_0)$ fails to have the property in Lemma \ref{lem : first coordinate decrease} at at most a single point.

    Now let $a_1 \in A$ and let $0 \leq a_2 < a_2' \leq 1 - a_1$ be such that $\langle e_1, T((a_1,a_2))\rangle < \langle e_1, T((a_1,a_2')) \rangle$ which implies $\langle e_1, T((a_1,a_2))\rangle - \langle e_1, T((a_1,a_2')) \rangle < 0$. It holds that
    \begin{align*}
        0 &> T((a_1,a_2))\rangle - \langle e_1, T((a_1,a_2')) \rangle \\
        &=\int_{\mathcal{V}(C_0)} \langle e_1, T_i((a_1,a_2))\rangle - \langle e_1, T_i((a_1,a_2')) \rangle d\lambda(T_i) \\
        &=\int_{\mathcal{T}_{a_1}} \langle e_1, T_i((a_1,a_2))\rangle - \langle e_1, T_i((a_1,a_2')) \rangle d\lambda(T_i) + \int_{\mathcal{V}(C_0) \setminus \mathcal{T}_{a_1}} \langle e_1, T_i((a_1,a_2))\rangle - \langle e_1, T_i((a_1,a_2')) \rangle d\lambda(T_i) \\
        &\geq \int_{\mathcal{T}_{a_1}} \langle e_1, T_i((a_1,a_2))\rangle - \langle e_1, T_i((a_1,a_2')) \rangle d\lambda(T_i) \\
        &\geq \int_{\mathcal{T}_{a_1}} -1 d\lambda(T_i) = -\lambda[\mathcal{T}_{a_1}]
    \end{align*}
    which implies that $\lambda[\mathcal{T}_{a_1}] > 0$. In particular this holds for every $a_1 \in A$. Since the sets $\mathcal{T}_{a_1}$ are disjoint and for every $a_1 \in A$ the set $\mathcal{T}_{a_1}$ receives strictly positive mass from $\lambda$ it must be the case that $A$ is countable since for any probability measure any disjoint family of sets with positive probability is at most countable. 

    In particular since countable sets have measure 0 we can conclude that the set $A$ has measure 0 with respect to the Lebesgue measure on $[0,1]$. Therefore almost every $a_1 \in [0,1]$ is not contained in the set $A$ and therefore has the claimed property.
\end{proof}

\begin{proof}[\textbf{Proof of Theorem \ref{thm : questions negative}}]
    Let $T \in \normalfont{\text{conv}}(\mathcal{V}(C_0))$. By Lemma \ref{lem : almost every} we have for almost every $x$ the function $y \mapsto \left \langle e_1, T(x,y) \right \rangle$ is monotonically decreasing. Let $S$ denote the set of measure zero where this fails. Since $S$ has measure 0 we can ignore it while integrating. Now let $x \in [0,1) \setminus S$. Define the point 
    \begin{equation*}
        y_x = \sup \left \{ y \in [0,1-x] \ | \ \langle e_1, \nabla \phi_0(x,y) \rangle \leq  \left \langle e_1, T(x,y) \right \rangle \right \}.
    \end{equation*}
    with the convention that $y_x = 0$ if the supremum is over an empty set.
    Along the segment $\{(x,y) \ | \ y \in [0, 1 - x] \}$ we have
    \begin{align*}
        &\int_0^{1-x} \norm{T(x,y) - \nabla \phi_0(x,y)}_2 dy \\
        \geq &\int_0^{1-x} \left | \left \langle e_1, T(x,y) - \nabla \phi_0(x,y) \right \rangle  \right | dy \\
        =& \int_0^{y_x} \left | \left \langle e_1, T(x,y) - \nabla \phi_0(x,y) \right \rangle  \right | dy \\
        & + \int_{y_x}^{1-x} \left | \left \langle e_1, T(x,y) - \nabla \phi_0(x,y) \right \rangle  \right | dy \\
        =& \int_0^{y_x} \left \langle e_1, T(x,y) - \nabla \phi_0(x,y) \right \rangle   dy \\
        & + \int_{y_x}^{1-x}  \left \langle e_1, \nabla \phi_0(x,y) - T(x,y)  \right \rangle  dy \\
        \geq &\int_0^{y_x} \left \langle e_1, \nabla \phi_0(x,y_x) - \nabla \phi_0(x,y) \right \rangle  dy + \int_{y_x}^{1-x} \left \langle e_1, \nabla\phi_0(x,y) - \nabla \phi_0(x,y_x) \right \rangle  dy \\
        \geq &\int_0^{y_x} \frac{x}{8} (y_x - y) dy + \int_{y_x}^{1-x} \frac{x}{8} (y - y_x) dy \\
        = &\frac{x}{8} \left [ \left ( y_x^2 - \frac{y_x^2}{2} \right ) + \left ( \frac{(1-x)^2}{2} - \frac{y_x^2}{2} - y_x(1-x) + y_x^2 \right ) \right ]\\
        = &\frac{x}{8} \left [ \frac{(1-x)^2}{2} + y_x^2 - y_x(1-x) \right ] \geq \frac{x}{8} \cdot \frac{(1-x)^2}{4}.
    \end{align*}
    The first inequality is a consequence of $\norm{z}_2^2 = \langle e_1,z\rangle^2 + \langle e_2, z\rangle^2 \geq \langle e_1,z\rangle^2.$  For the second inequality we have that since $x \in S$ the function $T(x,\cdot)$ is monotonically decreasing and by continuity of $\nabla \phi_0(x,\cdot)$ (at least on $C_0$) that for all $0 \leq y < y_x$
    \begin{align*}
        \left \langle e_1, T(x,y) \right \rangle &\geq \lim_{y_0^+ \rightarrow y_x} \left \langle e_1, T(x,y_0) \right \rangle \\
        &\geq \lim_{y_0^+ \rightarrow y_x} \left \langle e_1, \nabla \phi_0(x,y_0) \right \rangle = \left \langle e_1, \nabla \phi_0(x,y_x) \right \rangle
    \end{align*}
    which implies
    \begin{equation*}
        \int_0^{y_x} \left \langle e_1, T(x,y) - \nabla \phi_0(x,y) \right \rangle   dy \geq \int_0^{y_x} \left \langle e_1, \nabla \phi_0(x,y_x) - \nabla \phi_0(x,y) \right \rangle  dy.
    \end{equation*}
    Similarly for the other integral we have for $y_x < y \leq 1-x$
    \begin{align*}
        \left \langle e_1, T(x,y) \right \rangle 
        &\leq \lim_{y_0^- \rightarrow y_x} \left \langle e_1, T(x,y_0) \right \rangle \\
        &\leq \lim_{y_0^- \rightarrow y_x} \left \langle e_1, \nabla \phi_0(x,y_0) \right \rangle = \left \langle e_1, \nabla \phi_0(x,y_x) \right \rangle
    \end{align*}
    which implies
    \begin{equation*}
        \int_{y_x}^{1-x}  \left \langle e_1, \nabla \phi_0(x,y) - T(x,y)  \right \rangle  dy \geq \int_{y_x}^{1-x} \left \langle e_1, \nabla\phi_0(x,y) - \nabla \phi_0(x,y_x) \right \rangle  dy.
    \end{equation*}
    The third inequality uses Lemma \ref{lem : phi issues}. The fourth uses that $y_x \in [0,1-x]$ and minimizes the bound.
    
    Using this inequality and the fact that $S$ is negligible we can compute
    \begin{align*}
        &\int \norm{T - \nabla \phi_0}_2 dU(C_0) \\
        = &2\int_0^1 \int_0^{1-x} \norm{T - \nabla \phi_0(x,y)}_2 dy dx \\
        \geq &2\int_0^1 \frac{x(1-x)^2}{32} dx = \frac{1}{192}.
    \end{align*}
    This bound holds uniformly over $\mathcal{V}(C_0)$ and disproves Question \ref{que : convex maps}. As noted above Question \ref{que : is dense} is equivalent to Question \ref{que : convex maps}.
\end{proof}

\section{Details for Versions of Algorithm \ref{alg : covariance estimate}}
\label{sec:Algorithm3}

We consider three versions of Algorithm \ref{alg : covariance estimate}.
\begin{enumerate}
    \item (\textbf{BCM}) $\mathtt{EstimateCoordinate}$ is done using the $W_{2}$BCM with $\eta = N(0,\hat{\Sigma}_{\text{emp}})$ and $\mu_i = \mathcal{N}(0,\Sigma_i)$. Next construct $A_{ij}$ as described in Section \ref{sec:SynthesisAnalysis} and select $\hat{\lambda}_{\text{bcm}}$ as a minimizer in \eqref{eqn:BCM_Analysis_QP}. $\mathtt{EstimateCovariance}$ is done using Algorithm 1 in \cite{chewi2020gradient} with coordinate $\hat{\lambda}_{\text{bcm}}$ and $\Sigma_1,...,\Sigma_m$.
    
    \item (\textbf{LBCM}) Fix a $\mu_0 = \mathcal{N}(0,\Sigma_0)$, set $\eta = \mathcal{N}(0,\hat{\Sigma}_{\text{emp}})$, and set $\mu_i = \mathcal{N}(0,\Sigma_i)$. Compute the transport maps $T_{\mu_0}^{\mu_i}$ (which are of the form $T_{\mu_0}^{\mu_i}(x) = C_i(x)$) and $T_{\mu_0}^\eta$ using \begin{equation} \label{eq : map matrix}
    C_i = S_\eta^{-1/2}\left ( S_\eta^{1/2} S_i S_\eta^{1/2} \right )^{1/2} S_\eta^{-1/2}.
\end{equation} Select $\hat{\lambda}_{\text{lbcm}}$ as a minimizer in \eqref{eqn:AnalysisLBCM}. $\mathtt{EstimtateCovariance}$ is done using the estimate
    \begin{equation*}
        \left ( \sum_{i=1}^m (\hat{\lambda}_{lbcm})_i C_i  \right )^T \Sigma_0 \left ( \sum_{i=1}^m (\hat{\lambda}_{lbcm})_i C_i  \right )
    \end{equation*}
    which corresponds to the covariance of $\left (  \sum_{i=1}^m (\hat{\lambda}_{lbcm})_i T_{\mu_0}^{\mu_i} \right )\#\mu_0$.
    
    \item (\textbf{Maximum Likelihood Estimation (MLE)}) $\mathtt{EstimateCoordinate}$ is done by setting $\eta = N(0,\hat{\Sigma}_{emp})$ and returning $\hat{\lambda}_{\text{mle}}$ as the minimizer of $\min_{\lambda \in \Delta^m} D_{KL}(\eta \ || \ \nu_\lambda)$ which can be numerically approximated. $\mathtt{EstimateCovariance}$ is done using using Algorithm 1 in \cite{chewi2020gradient} with coordinate $\hat{\lambda}_{\text{mle}}$ and $\Sigma_1,...,\Sigma_m$.  We defer details of how the MLE strategy is implemented to Appendix \ref{Appendix:MLE_Details}. 
\end{enumerate}

\section{Implementation Details for MLE} \label{Appendix:MLE_Details}

In order to solve the maximum likelihood estimation problem we differentiate through a truncated version of \cite{chewi2020gradient} Algorithm 1 and perform projected gradient descent. This is implemented using the auto-differentiation library PyTorch. The procedure is summarized in Algorithm \ref{alg:MLE}.

\begin{algorithm}[t!]
    \caption{MLE}\label{alg:MLE}
    \begin{algorithmic}[1]
    \STATE {\bfseries Input:} $\{S_i\}_{i=0}^m$, $\eta > 0$, MaxIters > 0, FPIters > 0, SQIters > 0 
    \STATE $i \leftarrow 0, \lambda  \leftarrow (1/m)\bm{1}$
    \WHILE{Not Converged and $i < $ MaxIters}
        \STATE $i \leftarrow i + 1$ 
        \STATE $\nabla \mathcal{L}(\lambda) \leftarrow $ $\mathtt{BackPropLoss}(\{S_j\}_{j=0}^m$, $\lambda$, FPIters, SQIters)
        \STATE $\lambda \leftarrow \mathtt{SimplexProject}(\lambda - \eta \nabla \mathcal{L}(\lambda))$
    \ENDWHILE
    \STATE \textbf{Return} $\lambda$
    \end{algorithmic}
\end{algorithm}
Here $\mathtt{SimplexProject}(x)$ is defined as 
\begin{equation*}
    \mathtt{SimplexProject}(x) = \argmin_{y \in \Delta^m} \norm{x - y}_2.
\end{equation*}
which enforces the constraints on $\lambda$ at each iteration.

To compute $\nabla \mathcal{L}(\lambda)$ we use auto-differentiation to obtain a gradient of the procedure given in Algorithm \ref{alg:cl}. In this procedure the square root of a matrix is computed using SQIters number of Newton-Schulz iterations. The forward pass of the loss computation is given in Algorithm \ref{alg:cl}, and the gradient is obtained by back propagation through it.
\begin{algorithm}[htbp!]
\caption{ComputeLoss} \label{alg:cl}
\begin{algorithmic}[1]
\STATE {\bfseries Input:} $\{S_i\}_{i=0}^m$, $\lambda$, FPIters > 0, SQIters > 0
\STATE BC $\leftarrow S_0$ 
\FOR{$j = 1,..., $ FPIters}
    \STATE BC\_root $\leftarrow \mathtt{SquareRoot}$(BC, SQIters) 
    \STATE  BC\_root\_inv $\leftarrow \mathtt{Invert}$(BC\_root) 
    \STATE BC $\leftarrow$ BC\_root\_inv $ \left ( \sum_{i=1}^m \lambda_i \mathtt{SquareRoot}\text{(BC\_root, SQIters,} S_i \text{BC\_root} ) \right )$ BC\_root\_inv 
\ENDFOR
\STATE \textbf{Return} Tr($\text{BC}^{-1}S_0$) + log det BC 
\end{algorithmic}
\end{algorithm}

The parameters that we chose in our experiments are given in Table \ref{tab:ag_params}.
\begin{table}[b]
\centering
\begin{tabular}{|c|c|}
\hline
Parameter & Value  \\ \hline
$\eta$    & 0.0003 \\ \hline
MaxIters  & 500    \\ \hline
FPIters   & 10     \\ \hline
SQIters   & 10     \\ \hline
\end{tabular}
\caption{\label{tab:ag_params}Parameters used when performing the maximum likelihood estimation.}
\end{table}
These parameters were sufficient to ensure that both the matrix square roots and fixed point iterations converged, and $\lambda$ always converged before the final iteration was reached.

\section{Implementation Details for 2-Wasserstein Barycenter Synthesis}\label{subsect:WGD details}

\begin{algorithm}
\caption{Iterative 2-Wasserstein Barycenter Synthesis}\label{alg:WGD}
    \begin{algorithmic}[1]
        \STATE {\bfseries Input:} Initial probability measure $\rho_0$, step-size $\alpha>0$, iterations $k\geq 1$.
        \FOR{$1\leq \ell \leq k$}
            \FOR{$1\leq j \leq m$}
                \STATE Compute $\pi_{\rho_{\ell-1}}^{\mu_j}=\argmin_{\pi\in \Pi(\rho_{\ell-1},\mu_j)} \int \frac{1}{2}\|x-y\|^2 d\pi(x,y)$.
                \STATE Define map $\bar{T}^{\mu_j}_{\rho_{\ell-1}}(x)\triangleq\mathbb{E}_{(X,Y)\sim \pi_{\rho_{\ell-1}}^{\mu_j}}[Y|X=x]$
            \ENDFOR
            \STATE $\rho_\ell=[(1-\alpha)Id+\alpha\sum_{j=1}^m\lambda_j\bar{T}^{\mu_j}_{\rho_{\ell-1}}]_{\#}[\rho_{\ell-1}]$
        \ENDFOR
        \STATE \textbf{Return} $\rho_k$
    \end{algorithmic}
\end{algorithm}

We synthesize an approximate 2-Wasserstein barycenter using Algorithm \ref{alg:WGD}. When an optimal transport map exists between iterates $\rho_{\ell}$ and references $\mu_j$, Algorithm \ref{alg:WGD} agrees with the algorithm implemented in \cite{zemel2019frechet}, where they establish convergence under regularity assumptions on the target measures. Algorithm \ref{alg:WGD} can also be viewed as a forward Euler discretization of the Wasserstein gradient flow for the functional $\sum_{j=1}^m\lambda_j W^2_2(-,\mu_j)$.

\section{Image Algorithms}

\begin{algorithm}[htbp!]
    \caption{Image to Measure} \label{alg : image to measure}
    \begin{algorithmic}[1]
        \STATE {\bfseries Input:} Matrix representation of an image $\mathtt{I} \in \mathbb{R}_+^{d \times d}$
        \STATE Set $s = \sum_{i,j=1}^d \mathtt{I}_{i,j}$
        \STATE \textbf{Return} $\frac{1}{s} \sum_{i,j=1}^d \mathtt{I}_{i,j} \delta_{(i/d,j/d)}$
    \end{algorithmic}
\end{algorithm}

\begin{algorithm}[htbp!]
    \caption{Measure to Image} \label{alg : measure to image}
    \begin{algorithmic}[1]
        \STATE {\bfseries Input:} Support locations $S \in \mathbb{R}^{n\times 2}$, Masses $\{b_{k}\} \in\mathbb{R}_+^n$, Output size $d$, Resolution $r$, Bandwidth $b$, Lower bound $\ell$.
        \STATE Create $\mathtt{K} \in \mathbb{R}^{rd \times rd}_+$ with $\mathtt{K}_{ij} = \sum_{k=1}^n b_k \exp\left ( -\frac{\norm{(i/rd,j/rd) - S_k}_2^2}{b^2} \right )$
        \STATE Set $\mathtt{K} = \left ( \sum_{i,j=1}^{rd} \mathtt{K}_{ij} \right )^{-1}\cdot\mathtt{K}$
        \STATE For each $i,j$ Set $\mathtt{K}_{ij} = \mathtt{K}_{ij} \cdot \pmb{1}[\mathtt{K}_{ij} > \ell]$ 
        \STATE Create $\mathtt{I} \in \mathbb{R}^{d\times d}$ with $\mathtt{I}_{ij} = \sum_{k,l=1}^r \mathtt{K}_{(i-1)*r+k, (l-1)*r+l}$
        \STATE \textbf{Return:} $\left ( \sum_{i,j=1}^d \mathtt{I}_{ij }\right )^{-1}\cdot\mathtt{I}$
    \end{algorithmic}
\end{algorithm}

\section{Additional Experiments}

\label{sec:AdditionalExperiments}

To conclude we briefly consider the impact that is had by the choice of the base measure in the LBCM in the problem above. We consider four choices for this measure which are presented in Figure \ref{fig : base measures}.  The first measure is the convolutional barycenter of the 10 reference digits using the method of \cite{solomon2015convolutional}. The second is a uniform image, the third uses two squares in the corners, the fourth is supported on a circle, and the final is supported on squares supported in the four corners of the image. The reconstruction of the same digits using each of these as the base measure is shown in Figure \ref{fig : lbcm reconstructions}.

\begin{figure}[t!]
    \centering
    \includegraphics[angle=-90, width=0.99\linewidth]{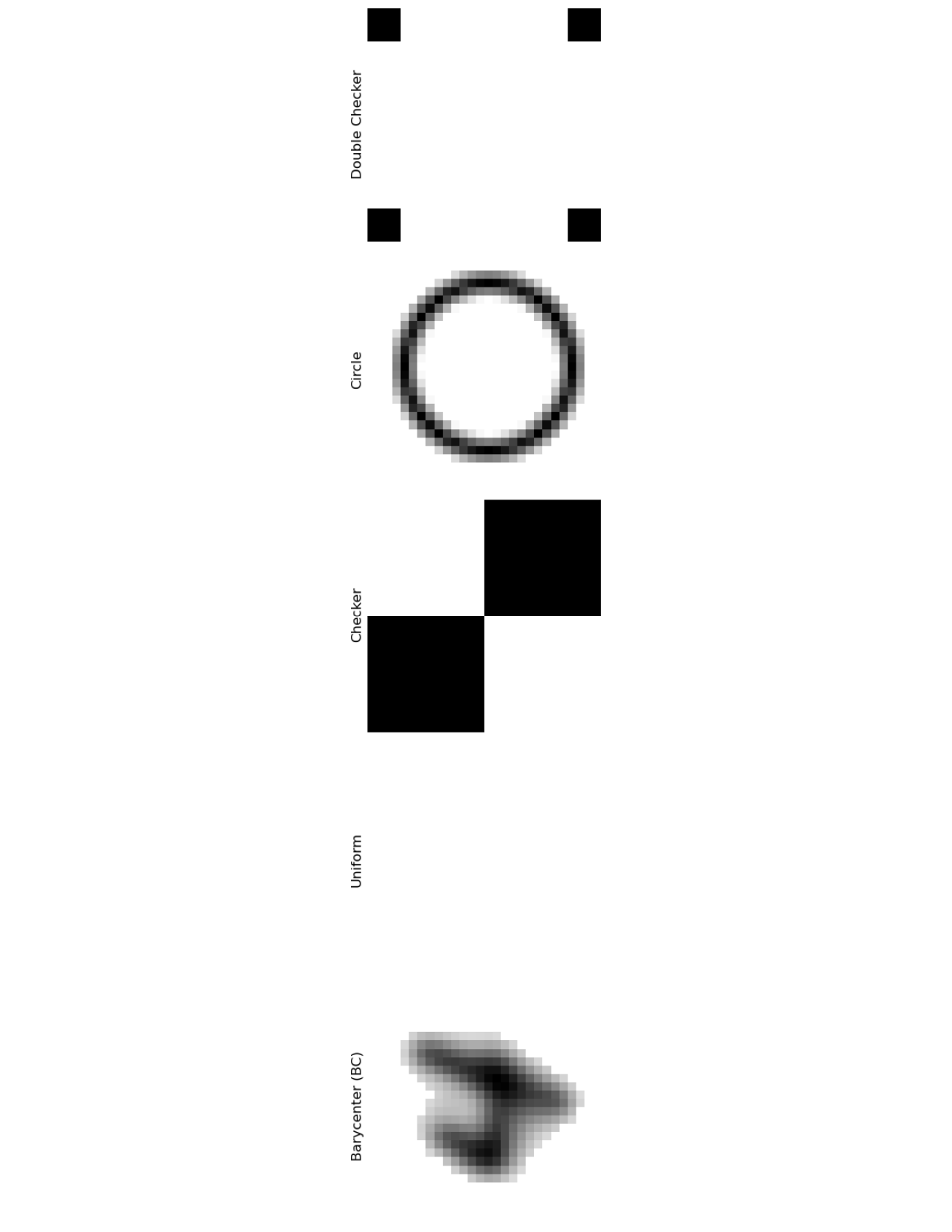}
    \caption{Base measures used in Figure \ref{fig : lbcm reconstructions}}
    \label{fig : base measures}
\end{figure}

\begin{figure}[t!]
    \centering
    \includegraphics[width=0.99\linewidth]{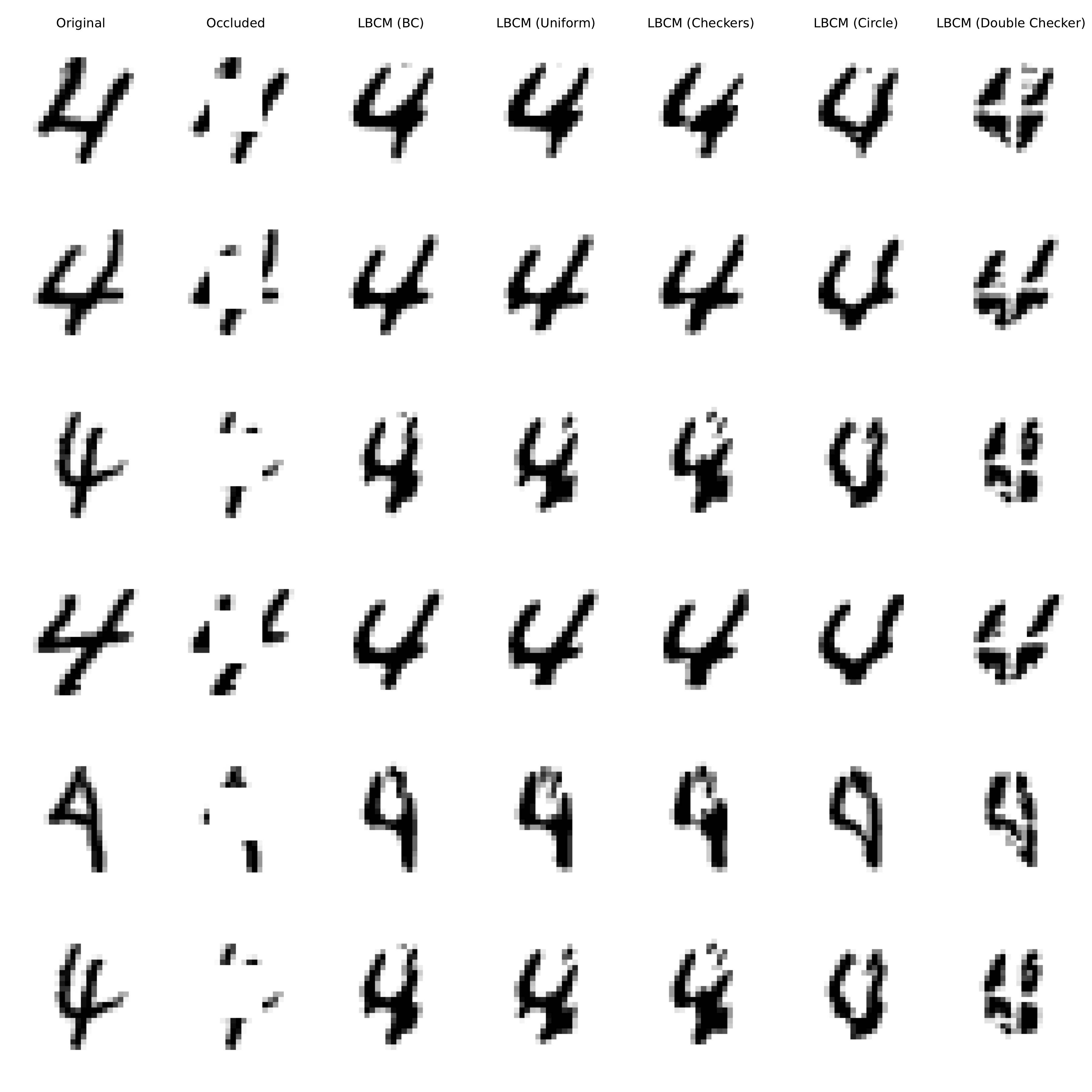}
    \caption{Reconstruction of occluded digits using the LBCM with four different base measures}
    \label{fig : lbcm reconstructions}
\end{figure}

In general the barycenter digit tends to give visually the best reconstruction while the uniform and checkered base measures perform comparably to each other and the circle yields the worst reconstructions. This highlights the fact that the choice of the base measure is meaningful and can significantly impact the recovery. The question of which base measures lead to the best performance is left to future work.

\end{document}